\def\R{{\mathbb R}}
\def\Xcal{{\mathcal X}}
\def\Fcal{{\mathcal F}}
\def\Ocal{{\mathcal O}}
\def\X{{\mathbf X}}
\def\Z{{\mathbf Z}}
\def\D{{\mathbf D}}
\def\P{{\mathbf P}}
\def\O{{\mathbf O}}
\def\W{{\mathbf W}}
\def\C{{\mathbf C}}
\def\V{{\mathbf V}}
\def\B{{\mathbf B}}
\def\K{{\mathbf K}}
\def\M{{\mathbf M}}
\def\U{{\mathbf U}}
\def\A{{\mathbf A}}
\def\z{{\mathbf z}}
\def\x{{\mathbf x}}
\def\a{{\mathbf a}}
\def\b{{\mathbf b}}
\def\e{{\mathbf e}}
\def\u{{\mathbf u}}
\def\v{{\mathbf v}}
\def\y{{\mathbf y}}
\def\stress{{\mathrm{stress}}}
\DeclareMathOperator*{\argmax}{arg\,max}
\DeclareMathOperator*{\tr}{tr}
\newcommand{\ie}{\textit{i.e.\ }}
\newcommand{\eg}{\textit{e.g.\ }}
\theoremstyle{plain}
\newtheorem{theorem}{Theorem}
\newtheorem{lemma}[theorem]{Lemma}
\newcommand{\revision}[1]{\textcolor{black}{#1}}
\title{Unsupervised Manifold Alignment with \\ Joint Multidimensional Scaling}
\author{Dexiong Chen, Bowen Fan, Carlos Oliver \& Karsten Borgwardt \thanks{ Dexiong Chen and Bowen Fan contributed equally.} \\
Department of Biosystems Science and Engineering, ETH Z\"urich, Switzerland\\
SIB, Swiss Institute of Bioinformatics, Switzerland\\
\texttt{\{fistname.lastname\}@bsse.ethz.ch} 
}
\begin{document}

\maketitle
\begin{abstract}
	We introduce Joint Multidimensional Scaling, a novel approach for unsupervised manifold alignment, which maps datasets from two different domains, without any known correspondences between data instances across the datasets, to a common low-dimensional Euclidean space.
	Our approach integrates Multidimensional Scaling (MDS) and Wasserstein Procrustes analysis into a joint optimization problem to simultaneously generate isometric embeddings of data and learn correspondences between instances from two different datasets, while only requiring intra-dataset pairwise dissimilarities as input. This unique characteristic makes our approach applicable to datasets without access to the input features, such as solving the inexact graph matching problem.
	We propose an alternating optimization scheme to solve the problem that can fully benefit from the optimization techniques for MDS and Wasserstein Procrustes.
	We demonstrate the effectiveness of our approach in several applications, including joint visualization of two datasets, unsupervised heterogeneous domain adaptation, graph matching, and protein structure alignment. The implementation of our work is available at \url{https://github.com/BorgwardtLab/JointMDS}.
\end{abstract}

\section{Introduction}
Many problems in machine learning require joint visual exploration and manipulation of multiple datasets from different (heterogeneous) domains, which is generally a preferable first step prior to any further data analysis. These different data domains may consist of measurements for the same samples obtained with different methods or technologies, such as single-cell multi-omics data in bioinformatics~\citep{demetci2022scot,liu2019jointly,cao2022multi}. Alternatively, the data could be comprised of different datasets of similar objects, such as word spaces of different languages in natural language modeling~\citep{alvarez2019towards,grave2019unsupervised}, or graphs representing related objects such as disease-procedure recommendation in biomedicine~\citep{xu2019gromov}. There are two main challenges in joint exploration of multiple datasets. First, the data from the heterogeneous domains may be high-dimensional or may not possess input features but rather only dissimilarities between them. Second, the correspondences between data instances across different domains may not be known \emph{a priori}. We propose in this work to tackle both issues simultaneously while making few assumptions on the data modality.

To address the first challenge, for several decades many dimensionality reduction methods have been proposed to provide lower-dimensional embeddings of data. Among them, multidimensional scaling~(MDS)~\citep{borg2005modern} and its extensions~\citep{tenenbaum2000global,chen2009local} are widely used ones. They generate low-dimensional data embeddings while preserving the local and global information about its manifold structure. One of the key characteristics of MDS is the fact that it only requires pairwise (dis)similarities as input rather than specific data features, which makes it applicable to problems whose data does not have access to the input features, such as graph node embedding learning~\citep{gansner2004graph}. However, when it comes to dealing with multiple datasets from different domains at the same time, the subspaces learned by MDS for different datasets are not naturally aligned, making it not directly applicable for a joint exploration. 

One well-known method for aligning data instances from different spaces is Procrustes analysis. When used together with dimensionality reduction, it results in a manifold alignment method~\citep{wang2008manifold,kohli2021ldle,lin2021hyperbolic}. However, these approaches require prior knowledge about the correspondences between data instances across the domains, which limits their applicability in many real-world applications where this information is hard or expensive to obtain. Unsupervised manifold alignment approaches~\citep{wang2009manifold,cui2014generalized} have been proposed to overcome this limitation by aligning the underlying manifold structures of two datasets with unknown correspondences while projecting data onto a common low-dimensional space.

In this work, we propose to combine MDS with the idea of unsupervised manifold alignment to simultaneously embed data instances from two domains without known correspondences to a common low-dimensional space, while only requiring intra-dataset dissimilarities. We formulate the problem as a joint optimization problem, where we integrate the stress functions for each dataset that measure the distance deviations and adopt the idea of Wasserstein Procrustes analysis~\citep{alvarez2019towards} to align the embedded data instances from two datasets in a fully unsupervised manner. We propose to solve the resulting optimization problem through an alternating optimization strategy, resulting in an algorithm that can benefit from the optimization techniques for solving each individual sub-problem. Our approach, named Joint MDS, allows recovering the correspondences between instances across domains while also producing aligned low-dimensional embeddings for data from both domains, which is the main advantage compared to Gromov-Wasserstein (GW) optimal transport~\citep{memoli2011gromov,yan2018semi} for only correspondence finding. We show the effectiveness of joint MDS in several machine learning applications, including joint visualization of two datasets, unsupervised heterogeneous domain adaptation, graph matching, and protein structure alignment.

\section{Related Work}
We present here the work most related to ours, namely MDS, unsupervised manifold alignment and optimal transport (OT) for correspondence finding.

\textbf{Multidimensional scaling and extensions~~}
MDS is one of the most commonly used dimensionality reduction methods that only require pairwise (dis)similarities between data instances as input. Classical MDS~\citep{torgerson1965multidimensional} was introduced under the assumption that the dissimilarity is an Euclidean distance, which has an analytic solution via SVD. As an extension of classic MDS, metric MDS consists in learning low-dimensional embeddings that preserve any dissimilarity by minimizing a \emph{stress} function. Several extensions of MDS have also been proposed for various practical reasons, such as non-metric MDS~\citep{agarwal2007generalized}, Isomap~\citep{tenenbaum2000global}, local MDS~\citep{chen2009local} and so on. MDS has also been used for graph drawing~\citep{gansner2004graph} by producing node embeddings using shortest path distances on the graph. Our approach can be seen as an important extension of MDS to work with multiple datasets.

\textbf{Unsupervised manifold alignment~~}
While (semi-)supervised manifold alignment methods~\citep{ham2005semisupervised,wang2008manifold,shon2005learning} require at least partial information about the correspondence across domains, unsupervised manifold alignment learns the correspondence directly from the underlying structures of the data. One of the earliest works for unsupervised manifold alignment was presented in~\citep{wang2009manifold}, where a similarity metric based on the permutation of the local geometry was used to find cross-domain corresponding instances followed by a non-linear dimensionality reduction. A similar approach was also adopted in~\citep{tuia2014unsupervised} with a graph-based similarity metric. A more generalized framework, named GUMA~\citep{cui2014generalized}, was proposed as an optimization problem with three complex terms to project data instances via a linear transformation and match them simultaneously. As an extension of~\citep{cui2014generalized}, UnionCom~\citep{cao2020unsupervised} introduced geodesic distance matching instead of the kernel matrices, to deal with multi-model data in bioinformatics in particular. Additionally, generative adversarial networks and the maximum mean discrepancy have also been used to find correspondences jointly with dimensionality reduction for unsupervised manifold alignment~\citep{amodio2018magan,liu2019jointly}. Our approach differs from these previous approaches as it makes few assumptions on the data modality but only requires intra-dataset dissimilarities as input.

\textbf{Optimal transport for correspondence finding~~}
OT~\citep{peyre2019computational} is a powerful and flexible approach to compare two distributions, and has a strong theoretical foundation. It can find a soft-correspondence mapping between two sets of samples without any supervision. With the recent advances in computational algorithms for OT~\citep{cuturi2013sinkhorn}, it has become increasingly popular in various machine learning fields. However, one major drawback of OT is that it was originally designed to compare two distributions in the same metric space: the cost function cannot be properly defined between samples in different spaces. This limitation makes it unsuitable for tasks involving data from different domains. As an important extension, the Gromov-Wasserstein~\citep{memoli2011gromov} distance has been proposed to overcome this limitation, by comparing two pairwise intra-domain (dis)similarity matrices directly. This modification makes the GW distance invariant to many geometric transformations on the data, such as rotation or translation. GW has thus been successfully applied to several tasks involving datasets from heterogeneous domains, such as heterogeneous domain adaptation (HDA)~\citep{yan2018semi,titouan2020co}, graph matching and partitioning~\citep{xu2019gromov,vayer2019sliced}, and integration of single-cell multi-omics data~\citep{demetci2022scot}. As an alternative of GW, OT methods with global invariances have also been recently proposed for unsupervised word translation tasks in NLP~\citep{alvarez2019towards,grave2019unsupervised}, \revision{and for more general purpose~\citep{jin2021two}}. In addition to discovering correspondences, our approach can additionally produce low-dimensional embeddings of data from both domains, which could be useful for visualization and further analysis.

\begin{figure}
    \centering
    \vspace{-0.5cm}
    \resizebox{.9\textwidth}{!}{
    \begin{tikzpicture}[triangle/.style = {fill=blue!20, regular polygon, regular polygon sides=3,scale=0.7}]
\begin{scope}%
	\node[cloud,draw,cloud puffs=7,line width=1.0,minimum width=5cm,minimum height=4cm,rotate=-10] (X1) at (0,0) {};
	\draw (X1.north) node[above left,scale=1.5] {$\Xcal$};
	
	\node[circle,fill=blue,line width=0,scale=0.7] (p1) at (-1.5,0) {};
	\node[circle,fill=blue,line width=0,scale=0.7] (p2) at (-0.8,0.6) {};
	\node[circle,fill=blue,line width=0,scale=0.7] (p3) at (0.2,1) {};
	\draw (p3.north) node[right,scale=1.5] {$\x_i$};
	
	\node[triangle,fill=blue,line width=0,scale=0.7] (p4) at (0,-1) {};
	\node[triangle,fill=blue,line width=0,scale=0.7] (p5) at (1.3,-0.8) {};
	\draw (p5.north) node[right,scale=1.5] {$\x_j$};
	
	\draw[latex-latex] (p3) -- node[right,scale=1.5]{$d_{ij}$} (p5);
	
	\node[single arrow, draw,very thick,minimum width=0.5cm,minimum height=2cm,single arrow head extend=3mm,] at (4,0) {};
\end{scope}

\begin{scope}[xshift=8cm,scale=1.2]
	\node[trapezium,trapezium stretches=true,trapezium left angle=80, trapezium right angle=100,draw,line width=1.0,minimum width=6cm,minimum height=3.5cm,aspect=1.5,rounded corners] (Z) at (0,0) {};
	\draw (Z.north) node[above,scale=1.5] {$\mathbb{R}^d$};
	
	\node[circle,fill=blue,line width=0,scale=0.7] (p1) at (-1.5,0) {};
	\node[circle,fill=blue,line width=0,scale=0.7] (p2) at (-0.8,0.6) {};
	\node[circle,fill=blue,line width=0,scale=0.7] (p3) at (0.2,1) {};
	\draw (p3.north) node[right,scale=1.5] {$\z_i$};
	
	\node[triangle,fill=blue,line width=0,scale=0.7] (p4) at (0,-1) {};
	\node[triangle,fill=blue,line width=0,scale=0.7] (p5) at (1.3,-0.8) {};
	\draw (p5.north) node[right,scale=1.5] {$\z_j$};
	
	\draw[latex-latex] (p3) -- node[right,scale=1.5]{$\approx d_{ij}$} (p5);
	
	\node[circle,fill=red!70!black!80,line width=0,scale=0.7] (q1) at (-1.5,-0.3) {};
	\node[circle,fill=red!70!black!80,line width=0,scale=0.7] (q2) at (-1.3,0.4) {};
	\node[circle,fill=red!70!black!80,line width=0,scale=0.7] (q3) at (0.1,1.2) {};
	\node[circle,fill=red!70!black!80,line width=0,scale=0.7] (q4) at (-0.2,0.2) {};
	\draw (q4) node[left,scale=1.5] {$\z_i'$};
	
	\node[triangle,fill=red!70!black!80,line width=0,scale=0.7] (q5) at (-0.3,-1.1) {};
	\node[triangle,fill=red!70!black!80,line width=0,scale=0.7] (q6) at (1.2,-0.9) {};
	\node[triangle,fill=red!70!black!80,line width=0,scale=0.7] (q7) at (1.6,-0.3) {};
	\draw (q5) node[left,scale=1.5] {$\z_j'$};
	
	\draw[latex-latex] (q4) -- node[right,scale=1.5]{$\approx d_{ij}'$} (q5);
\end{scope}

\begin{scope}[xshift=16cm]
	\node[cloud,draw,cloud puffs=7,line width=1,minimum width=5cm,minimum height=4cm,rotate=-10] (X2) at (0,0) {};
	\draw (X2.north) node[above left,scale=1.5] {$\Xcal'$};
	
	\node[circle,fill=red!70!black!80,line width=0,scale=0.7] (q1) at (-1.5,-0.3) {};
	\node[circle,fill=red!70!black!80,line width=0,scale=0.7] (q2) at (-1.3,0.4) {};
	\node[circle,fill=red!70!black!80,line width=0,scale=0.7] (q3) at (0.1,1.2) {};
	\node[circle,fill=red!70!black!80,line width=0,scale=0.7] (q4) at (-0.2,0.2) {};
	\draw (q4) node[left,scale=1.5] {$\x_i'$};
	
	\node[triangle,fill=red!70!black!80,line width=0,scale=0.7] (q5) at (-0.3,-1.1) {};
	\node[triangle,fill=red!70!black!80,line width=0,scale=0.7] (q6) at (1.2,-0.9) {};
	\node[triangle,fill=red!70!black!80,line width=0,scale=0.7] (q7) at (1.6,-0.3) {};
	\draw (q5) node[left,scale=1.5] {$\x_j'$};
	
	\draw[latex-latex] (q4) -- node[right,scale=1.5]{$d_{ij}'$} (q5);
	
	\node[single arrow, draw,very thick,minimum width=0.5cm,minimum height=2cm,single arrow head extend=3mm,rotate=180] at (-4,0) {};
\end{scope}
\end{tikzpicture}
    }
    \caption{Joint Multidimensional Scaling maps data from two domains $\Xcal$ and $\Xcal'$ to a common low-dimensional space $\R^d$ while preserving the pairwise intra-domain dissimilarities.}
    \label{fig:joint_mds}
    \vspace{-0.5cm}
\end{figure}
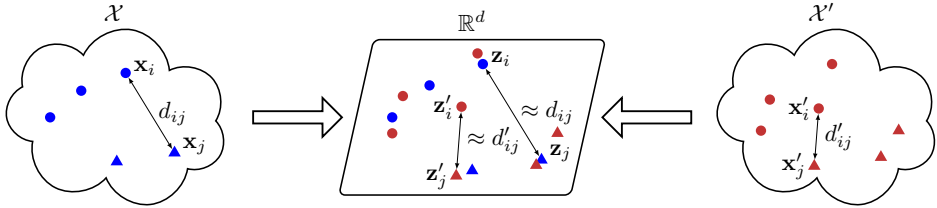
\section{Background}\label{sec:background}
In this section, we revisit the metric MDS problem and Wasserstein Procrustes, two main building blocks of our method. %

\textbf{Notation~~}
We denote by $\D\in\R^{n\times n}$ the pairwise dissimilarities of a dataset of $n$ elements (in an unknown space $\Xcal$) and by $d:\R^d\times\R^d\to \R_{+}$ a distance function which is typically the Euclidean distance. For two sets of data points $\Z\in\R^{n\times d}$ and $\Z'\in\R^{n'\times d}$ of respective $n$ and $n'$ elements in $\R^d$, we denote by $d(\Z,\Z')$ the pairwise distance matrix in $\R^{n\times n'}$ such that its $(i,j)$-th entry is equal to $d(\Z_i,\Z_j')$. We denote by $\|\cdot\|_F$ the Frobenius norm of matrix and $\langle \cdot,\cdot\rangle_F$ the associated inner-product. We denote by $\Delta_n$ the probability simplex with $n$ bins $\Delta_n:=\{\a\in\R_{+}^n\,|\, \sum_{j=1}^n \a_j = 1\}$.

\textbf{Metric multidimensional scaling~~}
Given a dissimilarity matrix $\D\in\R^{n\times n}$ with $(i,j)$-th entry denoted as $d_{ij}$, representing a set of $n$ samples in an unknown space $\Xcal$, MDS consists of finding $n$ coordinates $\Z:=(\z_1,\dots,\z_n)^{\top}\in\R^{n\times d}$ that best preserves the pairwise dissimilarities by solving the following weighted stress minimization problem:
\begin{equation}\label{eq:mds}
	\min_{\Z\in\R^{n\times d}} \stress(\Z, \D, \W):=\sum_{i,j=1}^n w_{ij} (d_{ij}-d(\z_i,\z_j))^2,
\end{equation}
for a given weight matrix $\W:=(w_{ij})_{i,j=1,\dots,n}\in\R_{+}^{n\times n}$.
Despite being a non-convex problem, it can be efficiently solved by an iterative majorization algorithm, named SMACOF (scaling by majorizing a complicated function)~\citep{borg2005modern}. This algorithm could be used individually to map two datasets to two low-dimensional Euclidean spaces. However, both the alignment of the two subspaces and the correspondences between data instances are not known \emph{a priori}. It is thus meaningless to work jointly with the embeddings of both datasets directly. 

\textbf{Wasserstein Procrustes~~}
OT is a widely used approach for finding correspondences between feature vectors from the same space in an unsupervised fashion. Its strong theoretical foundations and fast algorithms make it a natural candidate to align the distributions of the embedded data obtained with MDS. Specifically, let $\a\in\Delta_n$ and $\b\in\Delta_{n'}$ be the weights of the discrete measures $\sum_{i}\a_i \delta_{\z_i}$ and $\sum_{j}\b_j \delta_{\z_j'}$ with respective location $\Z:=(\z_1,\dots,\z_n)^\top\in\R^{n\times d}$ and $\Z':=(\z'_1,\dots,\z'_{n'})^\top\in\R^{n'\times d}$. For a cost function $c:\R^d\times \R^d\to \R_+$, Kantorovich's formulation of the OT problem between the two discrete measures is defined as
\begin{equation}\label{eq:ot}
    \min_{\P\in\Pi(\a,\b)} \langle \P, \C \rangle_F,
\end{equation}
where $\C\in\R^{n\times n'}$ represents the pairwise costs whose entries are $\C_{ij}:=c(\z_i,\z_j')$ and $\Pi(\a,\b)$ is the polytope of the admissible couplings between $\a$ and $\b$:
\begin{equation}
    \Pi(\a,\b):=\{\P\in\R_{+}^{n\times n'}\,|\, \P \mathbf{1}=\a, \P^{\top}\mathbf{1}=\b\}.
\end{equation}
In practice, $\a$ and $\b$ are uniform measures as we consider the mass to be evenly distributed among the data points. In particular, if $c$ is the squared Euclidean distance such that $c(\z,\z')=\|\z-\z'\|^2_2$, the solution of Eq.~\eqref{eq:ot} defines a squared distance on the set of discrete measures, called the squared Wasserstein distance, which we denote by $W_2^2(\Z,\Z')$ below.

Naive application of OT to find correspondences between the embedded data obtained with two MDSs for two datasets can easily fail since the two spaces of $\Z$ and $\Z'$ are not coherent, or in other words, there is no meaningful notion of distance between them. In order to jointly align the two spaces and find the correspondences, \cite{alvarez2019towards} propose to jointly optimize a linear transformation $f$ from a pre-defined invariance class $\Fcal$:
\begin{equation}
    \min_{f\in\Fcal} W_2^2(f(\Z), \Z')=\min_{\P\in\Pi(\a,\b)}\min_{f\in\Fcal} \langle \P, d^2(f(\Z),\Z') \rangle_F.
\end{equation}
They provide solutions of this problem for invariances defined by linear operators with a bounded norm
    $\Fcal:=\{O\in\R^{d\times d}\,|\,\|O\|_p\leq k_p\},$
where $\|\cdot\|_p$ is the Schatten $\ell_p$-norm. In particular, when $p=\infty$ and $k_p=1$, we have $\Fcal=\Ocal_d$, the orthogonal group of matrices in $\R^{d\times d}$, and one recovers the Wasserstein Procrustes problem~\citep{grave2019unsupervised}.

\section{Joint Multidimensional Scaling Problem}
We introduce our new joint MDS problem which can jointly embed two datasets in a common space.
\subsection{Correspondence discovery with joint multidimensional scaling}
The Joint MDS problem consists of jointly mapping data points from two different domains into a common Euclidean space. Following the discussion in Section~\ref{sec:background}, the two spaces obtained with two individual MDSs for each dataset are not necessarily aligned and the correspondences between data instances are unknown. To address both issues, we leverage Wasserstein Procrustes to algin the embedded data from two domains.

Specifically, we combine the stress minimization and the Wasserstein Procrustes to define the overall objective function for our Joint MDS problem. Given two matrices $\D\in\R^{n\times n}$ and $\D'\in\R^{n'\times n'}$ representing the pairwise dissimilarities of two collections, we formulate the Joint MDS as the following optimization problem:
\begin{equation}\label{eq:joint_mds}
    \min_{\substack{\Z\in\R^{n\times d}, \Z'\in\R^{n'\times d} \\ \P\in\Pi(\a,\b), \O\in \Ocal_d}} \stress(\Z,\D,\W) + \stress(\Z',\D',\W' ) + 2\lambda \langle \P, d^2(\Z\O, \Z')\rangle_F,
\end{equation}
where $\W$ and $\W'$ are weights for the pairwise dissimilarities generally equal to $1/n^2$ and $1/n'^2$, and $\a$ and $\b$ refer to the sample weights as defined above. This objective function exhibits a simple interpretation: the two stress function terms measure the distance deviation for each domain while the last term measures the correspondences between instances from the two domains. It is worth noting that the stress functions are rotation-invariant, \ie if $\Z$ and $\Z'$ are solutions respectively minimizing the two stress functions, their rotations are also solutions. Thus, using the Wasserstein Procrustes with global geometric invariances (in $\Ocal_d$) instead of the original OT could find better correspondences. An illustration of Joint MDS is shown in Figure~\ref{fig:joint_mds}.

\subsection{Choice of the dissimilarities}\label{sec:dissimilarities}
In general, any distance could be used to compute the dissimilarities if they are not provided directly with the dataset. For instance, the Euclidean distance is a natural choice. As an extension of the Euclidean distance in Riemannian geometry, the geodesic distance~\citep{tenenbaum2000global} often captures better the local geometric structure of the data distribution. However, the geodesic distance is generally hard to compute exactly as it requires full knowledge about the data manifold. Fortunately, it can be efficiently estimated approximately by constructing a $k$-nearest neighbor graph and computing the shortest path distance on the graph. Similar to Isomap~\citep{tenenbaum2000global}, a variant of MDS computed on the geodesic distance, we also observe better performance for joint MDS when using geodesic distances, especially in unsupervised heterogeneous domain adaptation tasks.

\subsection{Optimization}\label{sec:optimization}
The above optimization problem is non-convex, which is hard to solve in general. Here, we propose an alternating optimization scheme by solving two subproblems that have already been studied previously. Specifically, we show that when $\P$ and $\O$ are fixed, the Joint MDS problem~\eqref{eq:joint_mds} becomes a simple weighted MDS problem. On the other hand, when fixing $\Z$ and $\Z'$, the problem~\eqref{eq:joint_mds} is reduced to a Wasserstein Procrustes problem. As a consequence, we can easily leverage optimization techniques developed for each of the sub-problems respectively.

\textbf{Weighted MDS problem~~}
We first observe that the distance is invariant to the orthogonal transformations from $\Ocal_d$. Thus, for fixed $\P$ and $\O$, the problem amounts to minimizing the following \emph{stress} function (with a change of variable $\Z=\Z\O$):
\begin{equation}
    \min_{\Z\in\R^{n\times d},\Z'\in\R^{n'\times d}} \sum_{i,j=1}^n w_{ij} (d_{ij}-d(\z_i,\z_j))^2 + \sum_{i',j'=1}^{n'} w'_{i'j'} (d'_{i'j'}-d(\z'_{i'},\z'_{j'}))^2 +2\lambda \langle \P, d^2(\Z, \Z')\rangle,
\end{equation}

where the last term can be rewritten as
\begin{equation*}
    2\lambda \langle \mathbf{P}, d^2(\mathbf{Z}, \mathbf{Z'})\rangle=\sum_{i=1}^{n}\sum_{j'=1}^{n'}\lambda\mathbf{P}_{ij'}(0-d(\mathbf{z}_{i},\mathbf{z'}_{j'}))^2+\sum_{i'=1}^{n'}\sum_{j=1}^{n}\lambda\mathbf{P}_{i'j}^{\top}(0-d(\mathbf{z'}_{i'},\mathbf{z}_{j}))^2.
\end{equation*}
This is equivalent to solving the MDS problem with a stress function $\stress(\tilde{\Z},\tilde{\D},\tilde{\W})$ where
\begin{equation}\label{eq:joint_mds_update}
    \tilde{\Z}:=\begin{bmatrix}
        \Z \\
        \Z'
    \end{bmatrix}, \quad
	\tilde{\D}:=\begin{bmatrix}
		\D & 0 \\
		0 & \D'
	\end{bmatrix}, \quad
	\tilde{\W}:=\begin{bmatrix}
	    \W & \lambda \P \\
	    \lambda \P^\top & \W'
	\end{bmatrix}.
\end{equation}
This problem can be solved with the SMACOF algorithm, which simply relies on the majorization theory without making any assumption on whether $\tilde{\D}$ has a distance structure. The majorization principle essentially consists in minimizing a more manageable surrogate function $g(\X,\Z)$ rather than directly minimizing the original complicated function $h(\X):=\stress(\X,\D,\W)$. This surrogate function $g(\X,\Z)$ is required to (i) be a majorizing function of $h(\X)$, \ie $h(\X)\leq g(\X,\Z)$ for any $\X\in\R^{n\times d}$, and (ii) touch the surface at the so-called supporting point $\Z$, \ie $h(\Z)=g(\Z,\Z)$. Now, let the minimum of $g(\X,\Z)$ over $\X$ be attained at $\X^\star$. The above assumptions imply the following chain of inequalities:
\begin{equation*}
    h(\Z)=g(\Z,\Z)\geq g(\X^\star,\Z)\geq h(\X^\star),
\end{equation*}
called the sandwich inequality. By substituting $\Z$ with $\X^\star$ and iterating this process, we obtain a sequence $(\Z_{0},\dots,\Z_{t})$ that yields a non-increasing sequence of function values. \citep{borg2005modern} showed that there is indeed such a majorizing function $g$ for the stress function, which is detailed in Section~\ref{app:sec:optimization} of the Appendix.
Through the lens of majorization, \citep{de1988convergence} have shown that the above sequence converges to a local minimum of the stress minimization problem with a linear convergence rate. Local minima are more likely to occur in low-dimensional solutions while being less likely to happen in high dimensions~\citep{de2009multidimensional,groenen1996tunneling}. The SMACOF algorithm based on stress majorization is summarized in Section~\ref{app:sec:optimization} of the Appendix.

\textbf{Wasserstein Procrustes problem~~}
For fixed embeddings $\Z$ and $\Z'$, the first two terms are independent of $\P$ and $\O$ and therefore this step consists in solving a Wasserstein Procrustes problem:
\begin{equation}\label{eq:wasserstein_procrustes}
	\min_{\substack{\O\in \mathcal{O}_d,~\P\in\Pi(\a,\b)}} \langle \P, d^2(\Z\O, \Z')\rangle_F-\varepsilon H(\P),
\end{equation}
where we introduced an entropic regularization term which makes the objective strongly convex over $\P$ and thus easier to solve. The resulting problem can be solved with an alternating optimization over $\P$ and $\O$ as studied in~\citep{alvarez2019towards}. Specifically, when fixing $\O$, the above problem is a classic discrete OT problem with entropic regularization that can be efficiently solved with, \eg, the Sinkhorn's algorithm. More details about Sinkhorn's algorithm can be found in the Appendix. When fixing $\P$, the problem is equivalent to a classic orthogonal Procrustes problem:
    $\max_{\substack{\O\in \mathcal{O}_d}} \langle \O, \Z^{\top}\P\Z'\rangle_F$,
whose solution is simply given by the singular value decomposition of $\Z^\top\P\Z'$. 
Solving the Wasserstein Procrustes problem offers correspondences between two sets of embeddings of the same dimensions even if their spaces are not aligned, which is particularly suitable here. By alternating between these two steps, we obtain our final algorithm for solving Joint MDS.

\textbf{Overall algorithm~~}
$\Z$ and $\Z'$ could be simply initialized with two individual SMACOF algorithms with $\D$ and $\D'$. However, for some hard matching problems, this might not offer a good initialization of the Wasserstein Procrustes problem. In this case, GW could be used as a convex relaxation of the Wasserstein Procrustes problem~\citep{alvarez2019towards,grave2019unsupervised}, which provides a better initialization for the coupling matrix $\P$. In addition, similar to the observations in~\citep{alvarez2019towards}, we also empirically find that annealing the entropic regularization parameter $\varepsilon$ in Wasserstein Procrustes leads to better convergence. When using the solution of GW as initialization, we also find that annealing $\lambda$ is useful. The full algorithm is summarized in Algorithm~\ref{algo:joint_mds}.

\begin{algorithm}
\caption{Joint Multidimensional Scaling}\label{algo:joint_mds}
\begin{algorithmic}[1]
    \STATE {\bfseries Input:} distances $\D\in\R^{n\times n}$ and $\D'\in\R^{n'\times n'}$, weights $\W\in\R^{n\times n}$ and $\W'\in\R^{n'\times n'}$, matching penalty $\lambda$, entropic regularization $\varepsilon$, max iterations $T$.
    \STATE {\bfseries Output:} low-dimensional embeddings $\Z, \Z'\in\R^{n\times d}$, optimal coupling $\P\in\R^{n\times n'}$.
    \STATE Set $\Z=\text{MDS}(\D,\W)$ and $\Z'=\text{MDS}(\D',\W')$ with SMACOF using random initialization.
    \STATE Set $\tilde{\D}$ in Eq~\eqref{eq:joint_mds_update} with $\D$ and $\D'$.
    \FOR{$t=1,\dots,T$}
        \STATE Update $\P$ and $\O$ by solving Wasserstein Procrustes in Eq.~\eqref{eq:wasserstein_procrustes} between $\Z$ and $\Z'$ using $\varepsilon$.
        \STATE Update $\Z=\Z\O$ and $\tilde{\Z}$ in Eq~\eqref{eq:joint_mds_update}.
        \STATE Update $\tilde{\W}$ in Eq~\eqref{eq:joint_mds_update} using $\P$, $\lambda$ $\W$ and $\W'$.
        \STATE Update $\Z, \Z'=\text{MDS}(\tilde{\D},\tilde{\W})$ using SMACOF with $\tilde{\Z}$ as initialization.
    \ENDFOR
\end{algorithmic}
\end{algorithm}

\textbf{Complexity analysis~~}
Let us denote by $N=n+n'$ the total number of data points of both domains. The complexity of a SMACOF iteration is the complexity of a Guttman Transform, which equals to $O(N^2)$~\citep{de1988convergence}. On the other hand, the complexity of one iteration of the alternating optimization method for solving the Wasserstein Procrustes is $O(d^3+N^2)$~\citep{alvarez2019towards} where the first term corresponds to the complexity of SVD and the second term corresponds to the Sinkhorn's algorithm with a fixed number of iterations. While a linear convergence of the SMACOF algorithm was shown in~\citep{de1988convergence}, we empirically observe that a fixed number of iterations is sufficient for the convergence of SMACOF as well as for solving Wasserstein Procrustes at each iteration of the Joint MDS algorithm~\ref{algo:joint_mds}. Therefore, for a total number of iterations $T$, the complexity of the full algorithm is $O(T(d^3+2N^2))$. Despite the quadratic complexity, our algorithm could benefit from the acceleration of GPUs as most of the operations in SMACOF and Sinkhorn's algorithm are matrix multiplications.

\section{Applications}\label{sef:application}

\subsection{Joint visual exploration of two datasets}\label{sec:app:joint_visualization}
We show that our Joint MDS can be used to jointly visualize two related datasets by simultaneously mapping two datasets to a common 2D space while preserving the original manifold structures.

\textbf{Experimental setup~~}
Here, we consider 3 (pairs of) synthetic datasets and 3 (pairs of) real-world datasets. The synthetic datasets respectively consist of a bifurcation, a Swiss roll and a circular frustum from~\citep{liu2019jointly}. Each synthetic dataset has 300 samples that have been linearly projected onto 1000 and 2000 dimensional feature spaces respectively and have been injected white noise~\citep{liu2019jointly}. The 3 real-world datasets consist of 2 sets of single-cell multi-omics data (SNAREseq and scGEM)~\citep{demetci2022scot} and a subsampled digit dataset (MNIST-USPS~\citep{deng2012mnist, hull1994database}) . We compute the pairwise geodesic distances of each dataset as described in Section~\ref{sec:dissimilarities}. We fix the number of components $d$ to 2 to visualize datasets in $\R^2$. We fix the matching penalization parameter $\lambda$ to $0.1$ for all the datasets. More details about the datasets and parameters and additional results can be found in Section~\ref{app:sec:joint_visualization} of the Appendix.

\textbf{Results~~}
We show in Figure~\ref{fig:joint_visualization} the results for all the considered datasets. Joint MDS successfully embedded each pair of datasets to a common 2D space while preserving the original manifold structures such as proximity of similar classes. In contrast, some other recently proposed methods such as~\citep{demetci2022scot} only map one dataset onto the space of the other dataset through a barycentric projection from GW, which could lose information about the manifold structure of the dataset before projection. As we also have access to the ground truth of the 1-to-1 correspondences for the synthetic and single-cell datasets, we can quantitatively assess the alignment learned by Joint MDS, by using the fraction of samples closer than the true match (FOSCTTM) metric~\citep{liu2019jointly}. For a given sample from a dataset, we compute the fraction of samples from the other dataset that are closer to it than its true match and we average the fraction values for all the samples in both datasets. A lower average FOSCTTM generally indicates a better alignment and FOSCTTM would be zero for a perfect alignment as all the samples are closest to their true matches. We show in Table~\ref{tab:FOSCTTM} the results of FOSCTTM compared to several baseline methods that are specifically proposed for aligning single-cell data. Joint MDS can also be used to align and visualize two sets of point clouds with a specific application for human pose alignment. We provide more details and results in Section~\ref{app:sec:human_pose} of the Appendix due to limited space.

\begin{figure}[t]
    \centering
    \includegraphics[width=.3\textwidth]{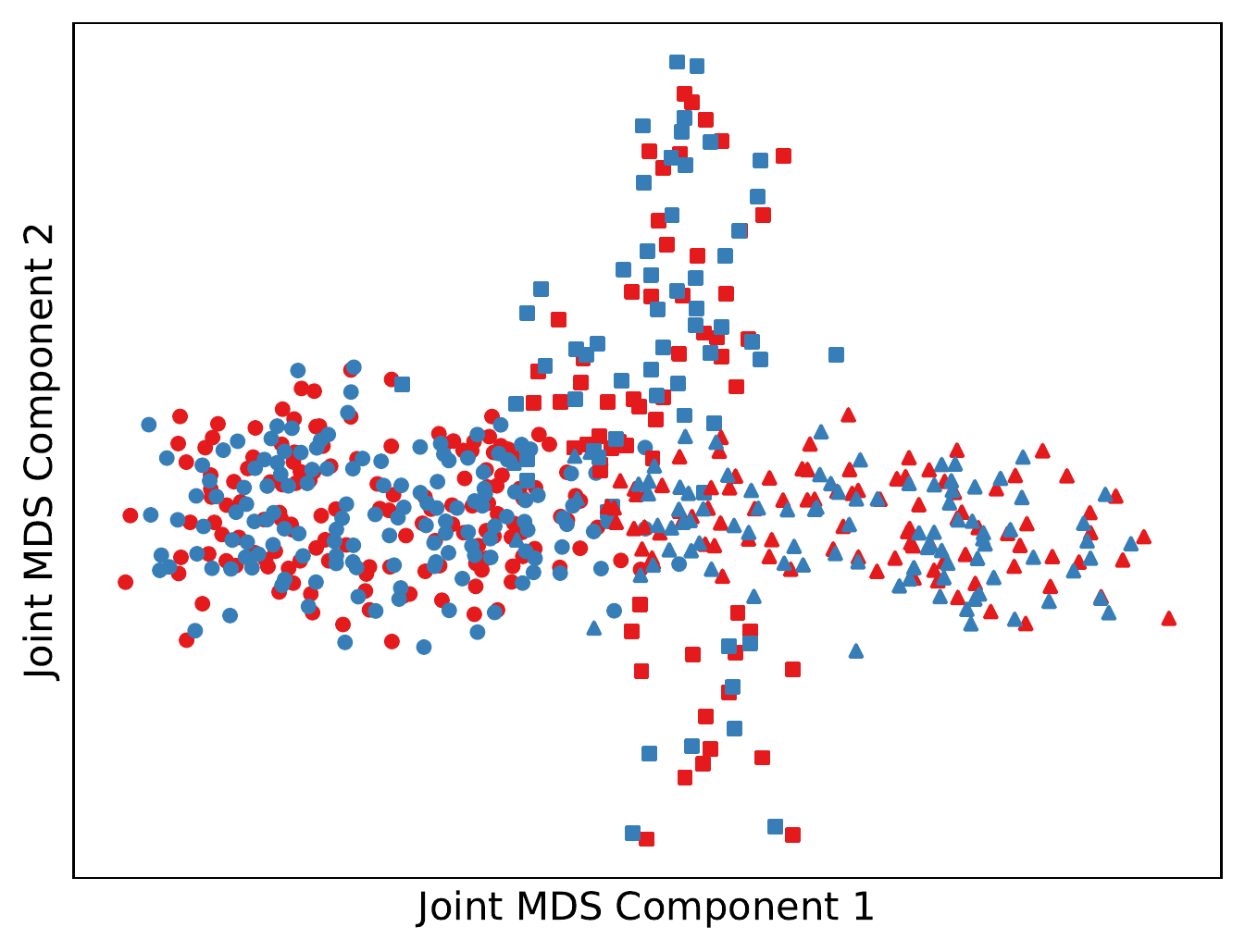}
    \includegraphics[width=.3\textwidth]{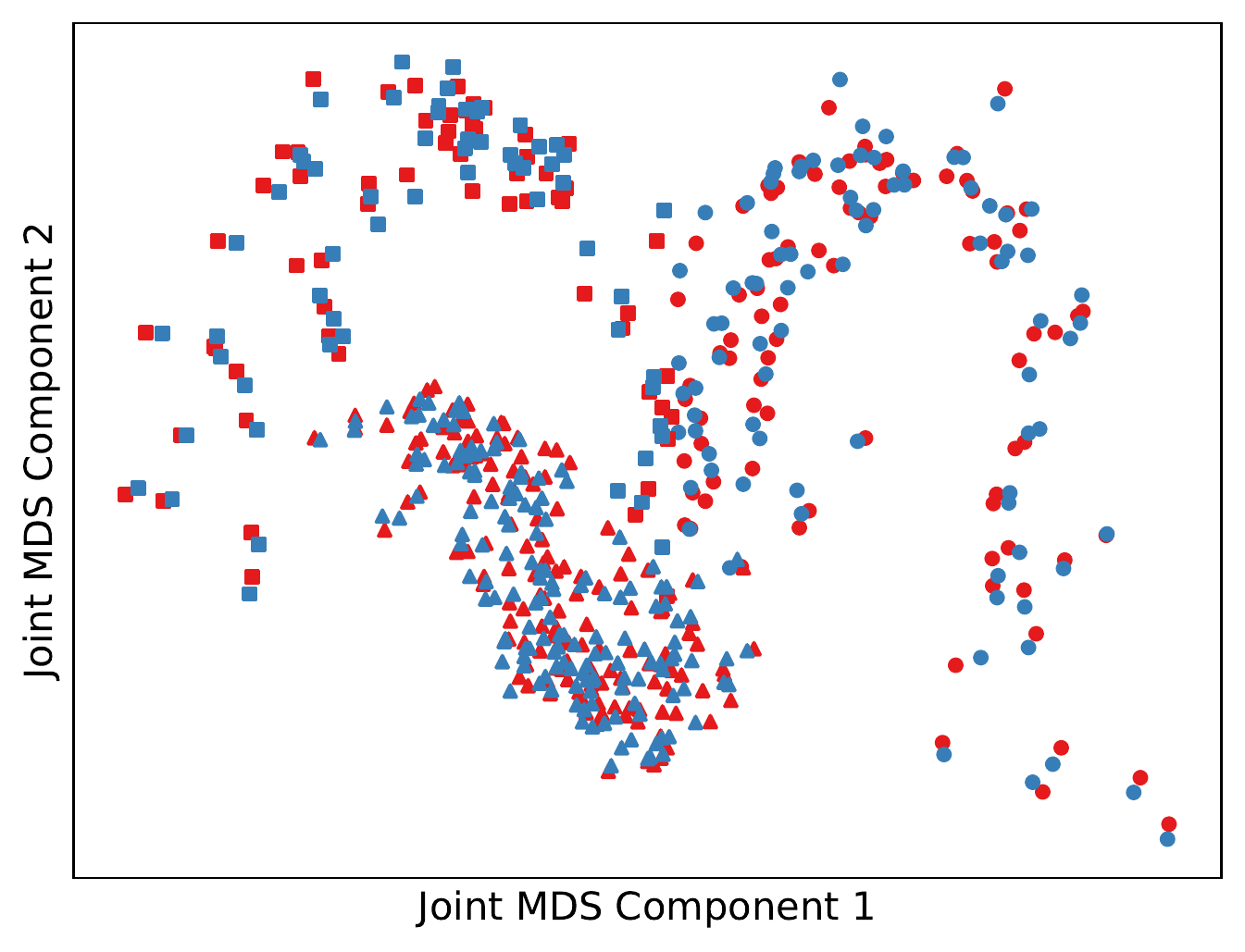}
    \includegraphics[width=.3\textwidth]{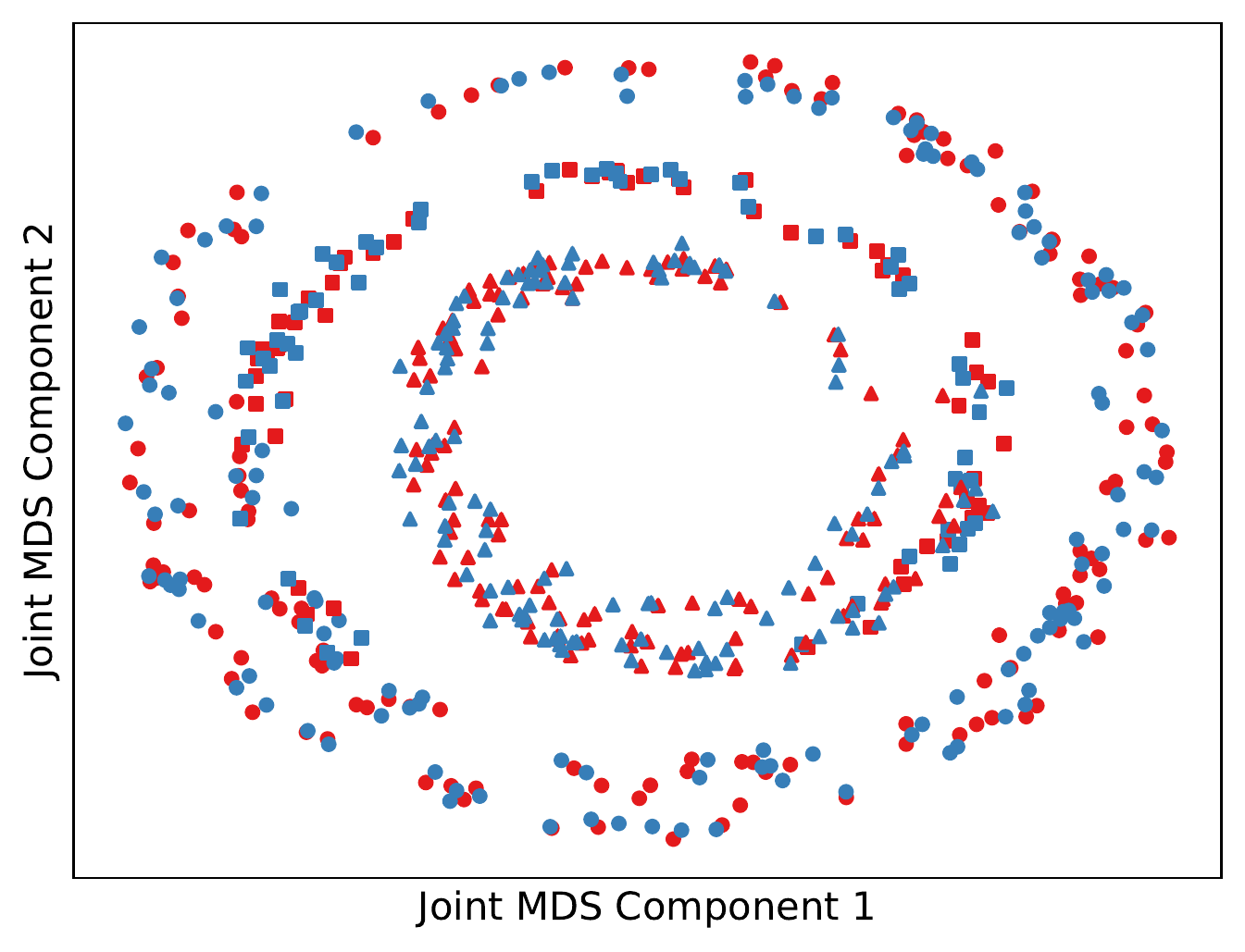}
    \includegraphics[width=.3\textwidth]{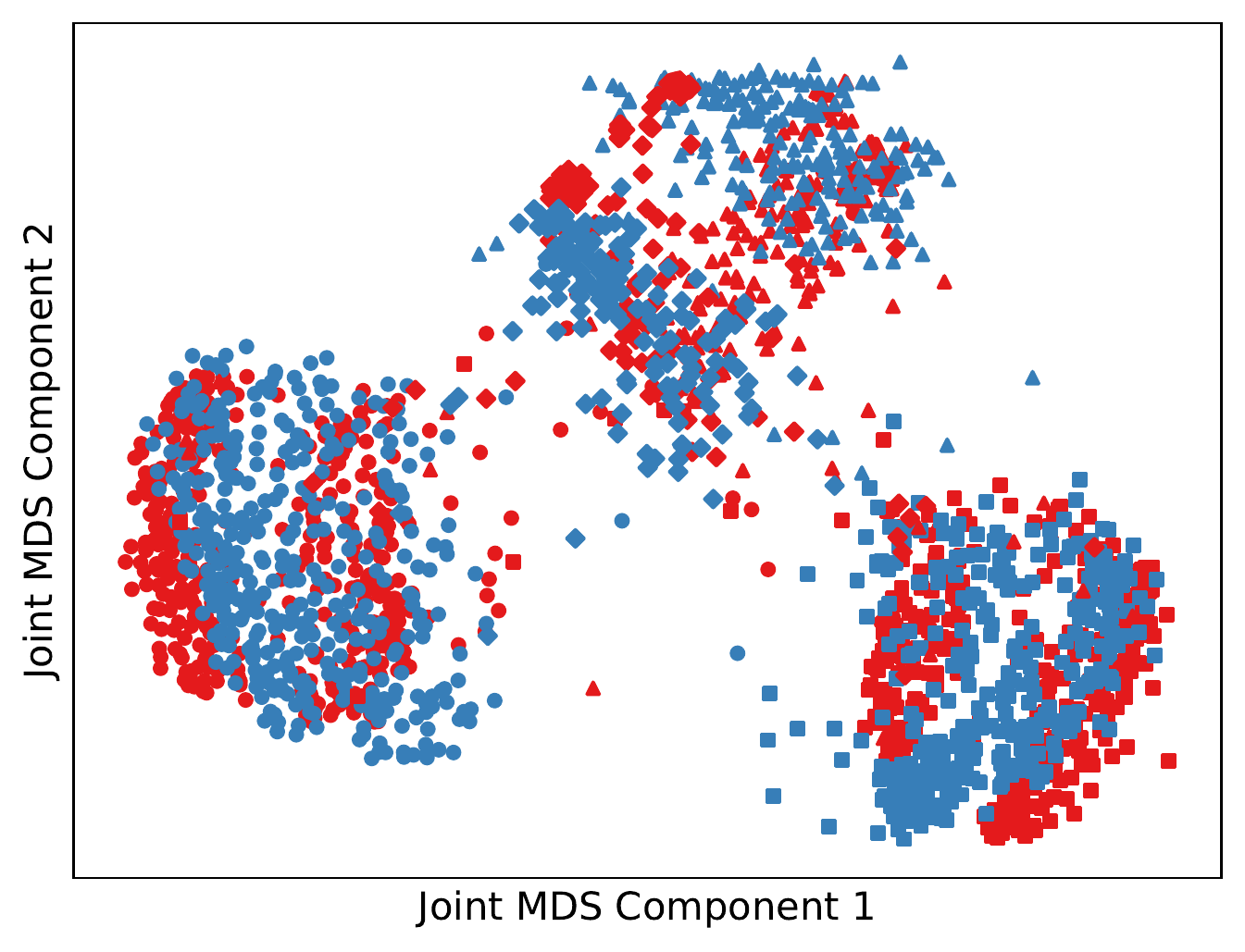}
    \includegraphics[width=.3\textwidth]{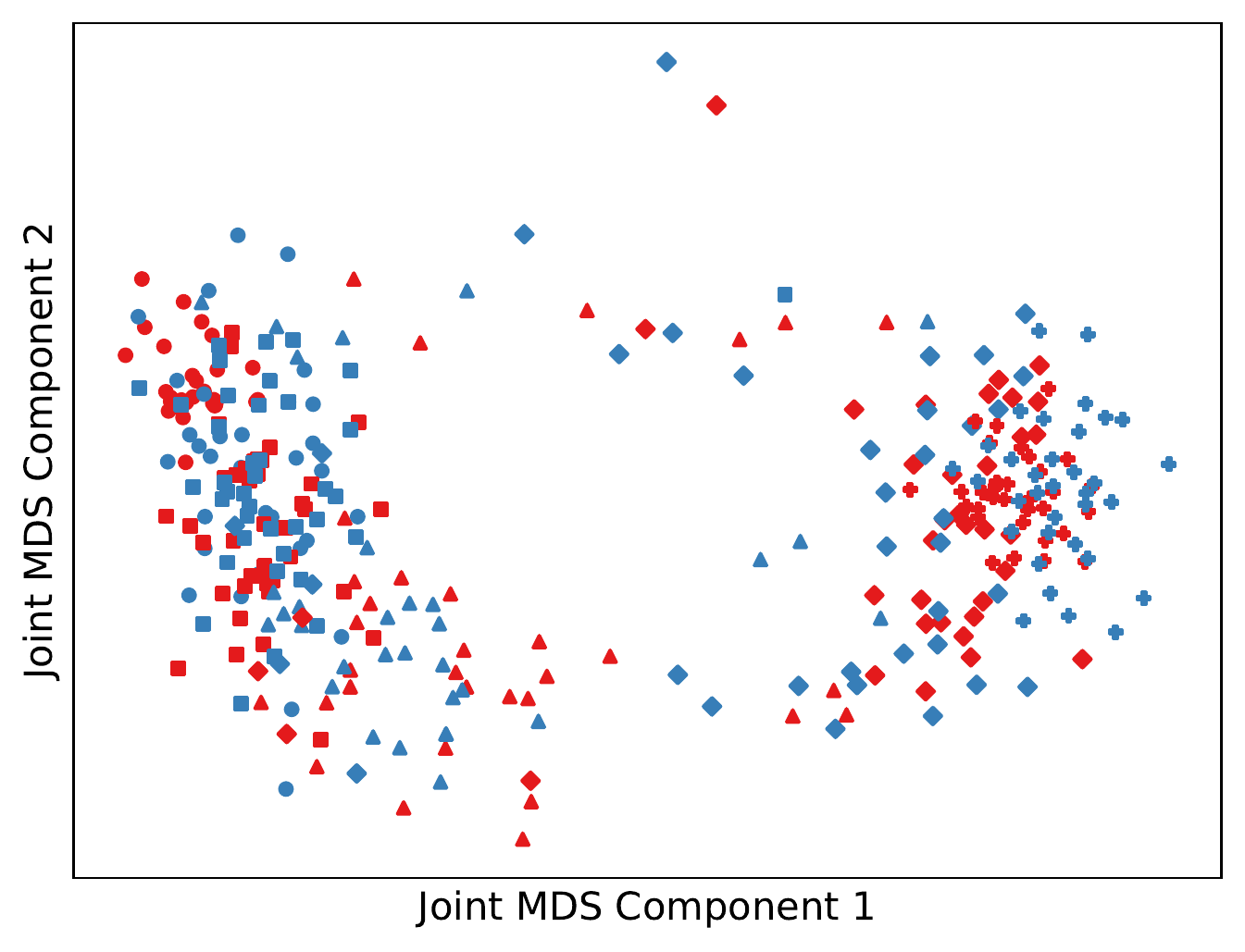}
    \includegraphics[width=.38\textwidth]{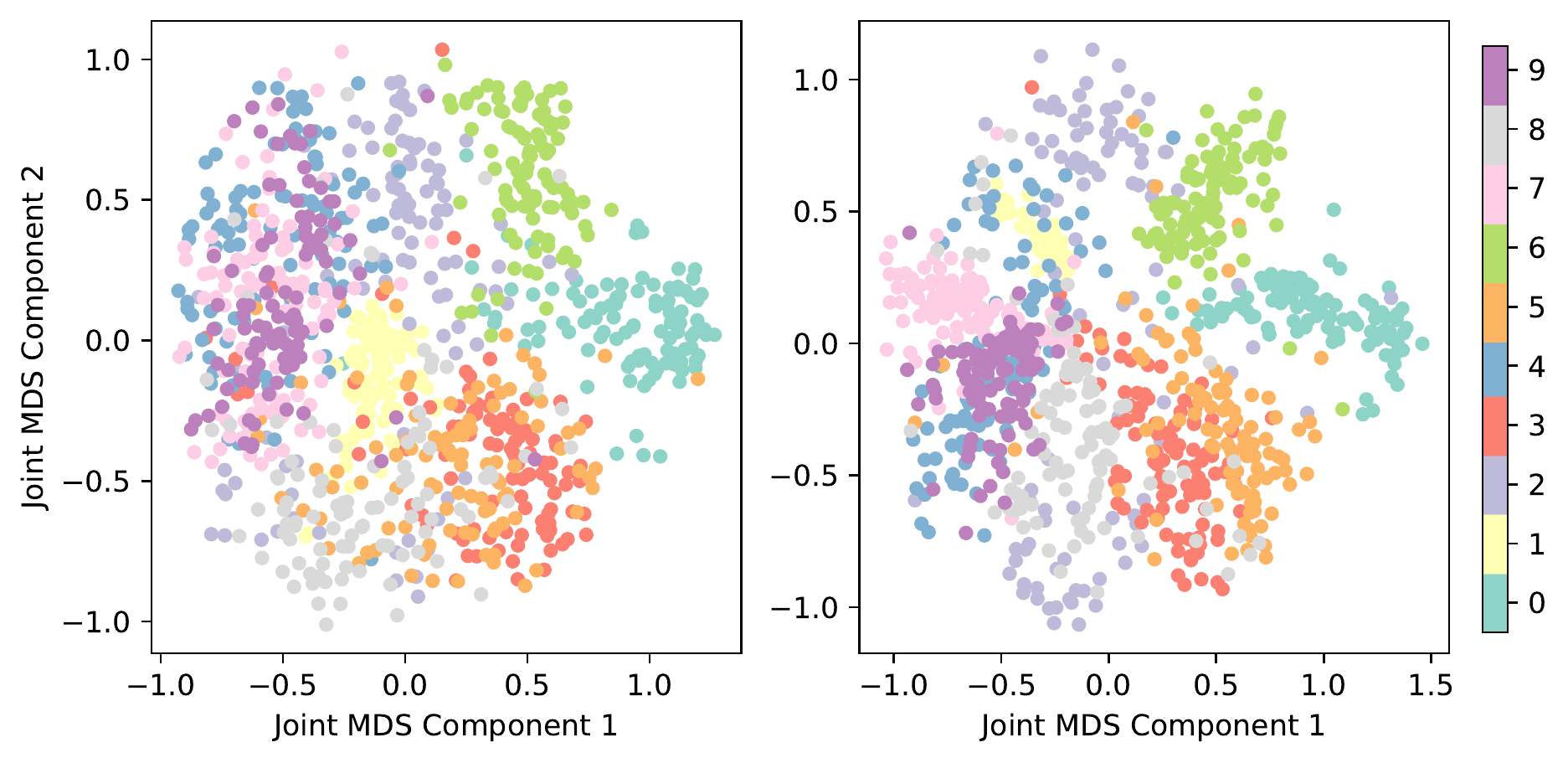}
    \caption{Joint visual exploration of two datasets. Top row: 3 synthetic datasets (bifurcation, Swiss roll, and circular frustum). Bottom row: real-world datasets (SNAREseq, scGEM and MNIST-USPS). Different colors represent datasets and different markers represent different classes except for MNIST-USPS, in which different colors represent different classes of digits from 0 to 9.}
    \label{fig:joint_visualization}
\end{figure}
\begin{table}[ht]
    \centering
    \caption{Average FOSCTTM comparison (lower is better).}
    \label{tab:FOSCTTM}
    \resizebox{.8\textwidth}{!}{
    \begin{tabular}{lccccc}
    \toprule
         Method 
         & Bifurcation & Swiss roll & Circular frustum & SNAREseq & scGEM \\ \midrule
         MMD-MA~\citep{liu2019jointly} & 12.44 & 3.27 & 1.25 & 15.00 & 20.14  \\
         UnionCom~\citep{cao2020unsupervised} & 8.30 & 1.57 & 15.20 & 26.50 & 20.96 \\
         SCOT~\citep{demetci2022scot} & 8.66 & 2.16 & 0.88 & \textbf{15.00} & 19.80 \\ \midrule
         Joint MDS (d=2) & 11.15 & 0.96 & 0.90 & 17.18 & 20.42  \\
         Joint MDS (d=16) & \textbf{7.56} & \textbf{0.58} & \textbf{0.87} & 15.59 & \textbf{18.54} \\
    \bottomrule
    \end{tabular}
    }
\end{table}

\subsection{Unsupervised heterogeneous domain adaptation}
We use the same datasets as in Section~\ref{sec:app:joint_visualization}. To solve the HDA problem, we first solve the joint MDS problem between the two pairwise geodesic distance matrices, which provides embeddings for instances from both domains in the same space. Then, we train a $k$-NN classifier ($k$ is fixed to 5) on the source domain and estimate the labels on the target domain without any further adaptation. For the more complex MNIST-USPS dataset, we use a linear SVM classifier with the regularization parameter set to 1 instead, which results in better prediction accuracy. We follow the same parameter selection as in~\citep{demetci2022scot} and compare our method to the state-of-the-art unsupervised HDA methods including SCOT~\citep{demetci2022scot} and EGW~\citep{yan2018semi}, which are both based on entropic regularized Gromov-Wasserstein. The classification accuracies are shown in Table~\ref{tab:transfer_acc}. While all the methods work well on the easier datasets, our Joint MDS outperforms GW-based baselines on some harder tasks such as MNIST-USPS.

\begin{table}[ht]
    \centering
    \caption{Classification accuracy for unsupervised domain adaptation.}
    \label{tab:transfer_acc}
    \resizebox{.9\textwidth}{!}{
    \begin{tabular}{lcccccc}
         \toprule
         Method 
         & Bifurcation & Swiss roll & Circular frustum & SNAREseq & scGEM & MNIST-USPS \\ \midrule
         SCOT~\citep{demetci2022scot} & 93.7 & 97.7 & \textbf{95.7} & \textbf{98.2} & 57.6 & 26.7  \\
         EGW~\citep{yan2018semi} & 95.7 & \textbf{99.3} & 94.7 & 93.8 & 62.7 & 43.1 \\ \midrule
         Joint MDS (d=2) & 96.0 & \textbf{99.3} & 94.0 & 85.5 & 64.4 & 15.0 \\
         Joint MDS (d=16) & \textbf{96.7} & \textbf{99.3} & 94.7 & 94.7 & \textbf{72.9} & \textbf{60.2} \\
         \bottomrule
    \end{tabular}
    }
\end{table}

\subsection{Graph matching}
\label{section:Graph matching}
\textbf{Data~~}
We apply the proposed method on two real-world datasets for the graph matching task which were used in~\citep{xu2019gromov, xu2019scalable}. The first dataset was collected from MIMIC-III~\citep{johnson2016mimic}, which is one of the most commonly used critical care databases in healthcare related studies. This dataset contains more than 50,000 hospital admissions for patients and each is represented as a sequence of ICD~(International Classification of Diseases) codes for different diseases and procedures. The interactions in the graph for disease (resp.\ procedure) are constructed with the diseases (resp.\ procedures) appearing in the same admission. The final two constructed graphs involve 11,086 admissions, one with 56 nodes of diseases and the other with 25 nodes of procedures. The procedure recommendation could be formulated as the problem of matching these two graphs.
The second dataset contains the protein-protein interaction~(PPI) networks of yeast. The original PPI network of yeast has 1,004 proteins together with 4,920 high-confidence interactions. The graph to match is its noisy version, which was built by adding $q\%$ lower-confidence interactions with $q \in \{5, 15, 25\}$. 

\textbf{Experimental setup~~}
To solve the graph matching problem, we use the shortest path distances on the graph as the input of our Joint MDS similarly to graph drawing methods~\citep{gansner2004graph,kamada1989algorithm}, and use the coupling matrix $\P$ as the matching prediction.
We compare our method with some recent graph matching methods, including MAGNA++~\citep{vijayan2015magna++}, HubAlign~\citep{hashemifar2014hubalign}, \revision{KerGM~\citep{zhang2019kergm}} and GWL~\citep{xu2019gromov}. In particular, GWL is a powerful graph matching algorithm based on the Gromov-Wasserstein distance that outperforms many classic methods~\citep{xu2019scalable,liu2021stochastic}. Note that Joint MDS differs from GWL in three aspects: i) Joint MDS relies on SMACOF which has convergence guarantees~\citep{de1988convergence}, while GWL relies on SGD that does not necessarily converge for non-convex problems; ii) it generates isometric embeddings, which could be more beneficial for tasks require distance preservation such as protein structure alignment; iii) it uses OT instead of GW in the low-dimensional embedding space to learn the correspondences. For baseline methods, we use the authors' implementations with recommended hyperparameters. For our method, we follow the same hyperparameter selection procedure and report the average accuracy over 5 different runs.
We do not perform MAGNA++ and HubAlign on MIMIC-III matching task since they can only produce one-to-one node pairs.

For the diseases-procedure graph matching task, we use 75\% of admissions for training and 25\% for testing. 
We use training data to respectively build disease graph and procedure graph, and use the test data to build the correspondences between disease and procedure as the true recommendation. For each node in the disease graph, we calculate top 3 and top 5 recommended procedure nodes based on the coupling matrix $\P$. We consider it as a correct match if the true closest procedure node appears in the predicted top-k recommendation list. %
For the PPI network matching task, we calculated the node correctness~(NC) as the evaluation metrics: $\sum_{i=1}^{n}\sum_{j=1}^{n}P_{ij}$ if $T_{ij} = 1$, where $P$ and $T$ respectively corresponds to the predicted matching probability matrix and the true matching matrix.

\begin{table}[ht]
    \centering
    \caption{\revision{Graph matching performances on PPI networks and MIMIC-III disease-procedure graphs.}}
    \newcolumntype{?}{!{\vrule width 1pt}}
    \newcommand{\NA}{---}
    \newcolumntype{P}[1]{>{\centering\arraybackslash}p{#1}}
    \resizebox{.8\textwidth}{!}{
    \begin{tabular}{P{.125\textwidth}P{.1\textwidth}P{.1\textwidth}P{.1\textwidth}?P{.18\textwidth}P{.18\textwidth}}
    \hline
    \textbf{Method} & PPI 5\% & PPI 15\% & PPI 25\% & MIMIC top 3 & MIMIC top 5\\ 
    \Xhline{2\arrayrulewidth}
    MAGNA++%
    &50.00  &35.16  &12.85  &\NA  &\NA \\
    HubAlign%
    &46.06  &32.47  &27.39  &\NA  &\NA \\
    \revision{KerGM} 
    &66.14  &39.04  &32.17  &22.67  &\textbf{47.86} \\
    GWL%
    &84.31  & \textbf{74.35}  & \textbf{67.42}  &27.98  &42.14 \\
    Joint MDS               & \textbf{86.44$\pm$0.33}  & 72.31$\pm$0.62  &55.3$\pm$0.78  & \textbf{30.24$\pm$1.66}  & 46.28$\pm$1.51 \\
    \hline
    \end{tabular}
    }
    \label{table:GraphMatching}
\end{table}

\textbf{Results~~}
In Table~\ref{table:GraphMatching}, the results of the two graph matching tasks are presented. Our method achieves comparable performance to GWL, while both methods outperform other classic baselines. More results and experimental details are available in Section~\ref{app:sec:graph_matching} of the Appendix.

\begin{figure}[ht]
    \centering
    \resizebox{.75\textwidth}{!}{
    \begin{tabular}[valign=m]{lc}
    \toprule
         Method & RMSD-D$\shortdownarrow$ \\ \midrule
         GUMA-2D & 28.89 \\
         GUMA & 26.04 \\ \midrule
         Joint MDS-2D & 18.04$\pm$0.10 \\
         Joint MDS & 13.11$\pm$0.02 \\
         \bottomrule
    \end{tabular}
    \qquad
    \includegraphics[width=.3\textwidth,valign=m]{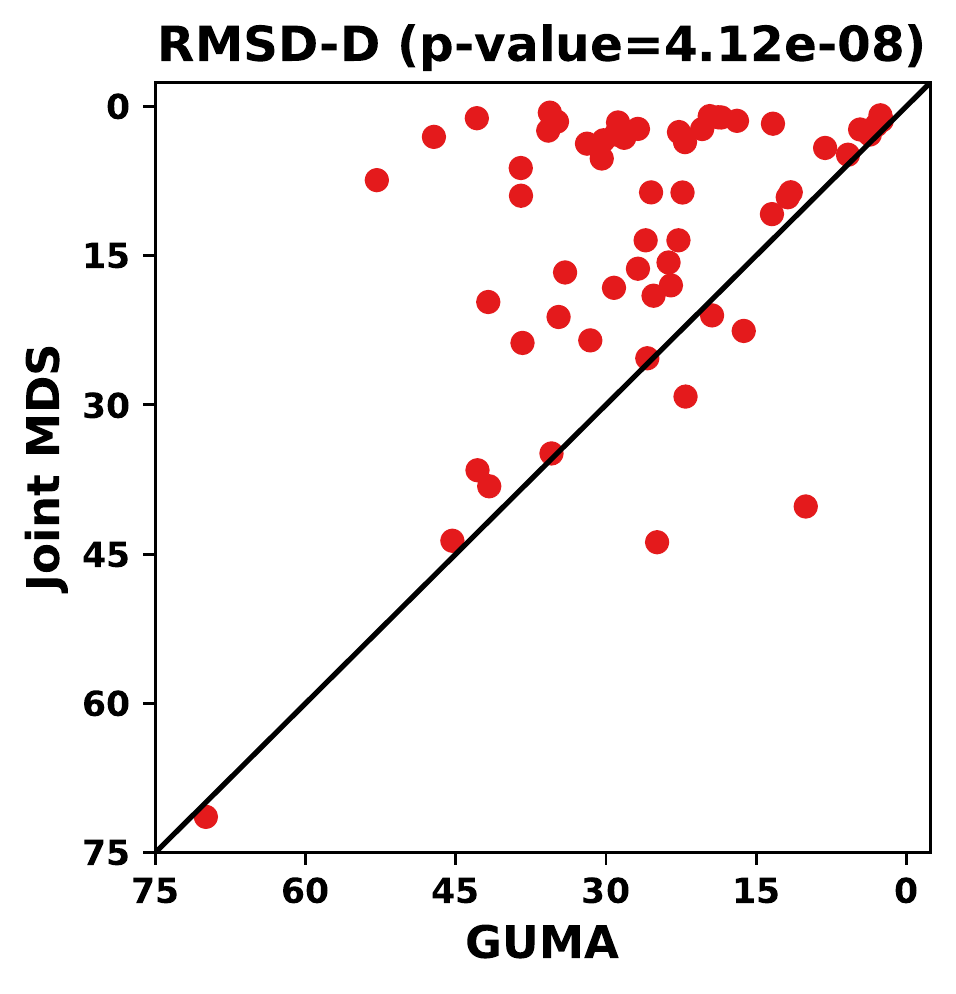}
    \includegraphics[width=.3\textwidth,valign=m]{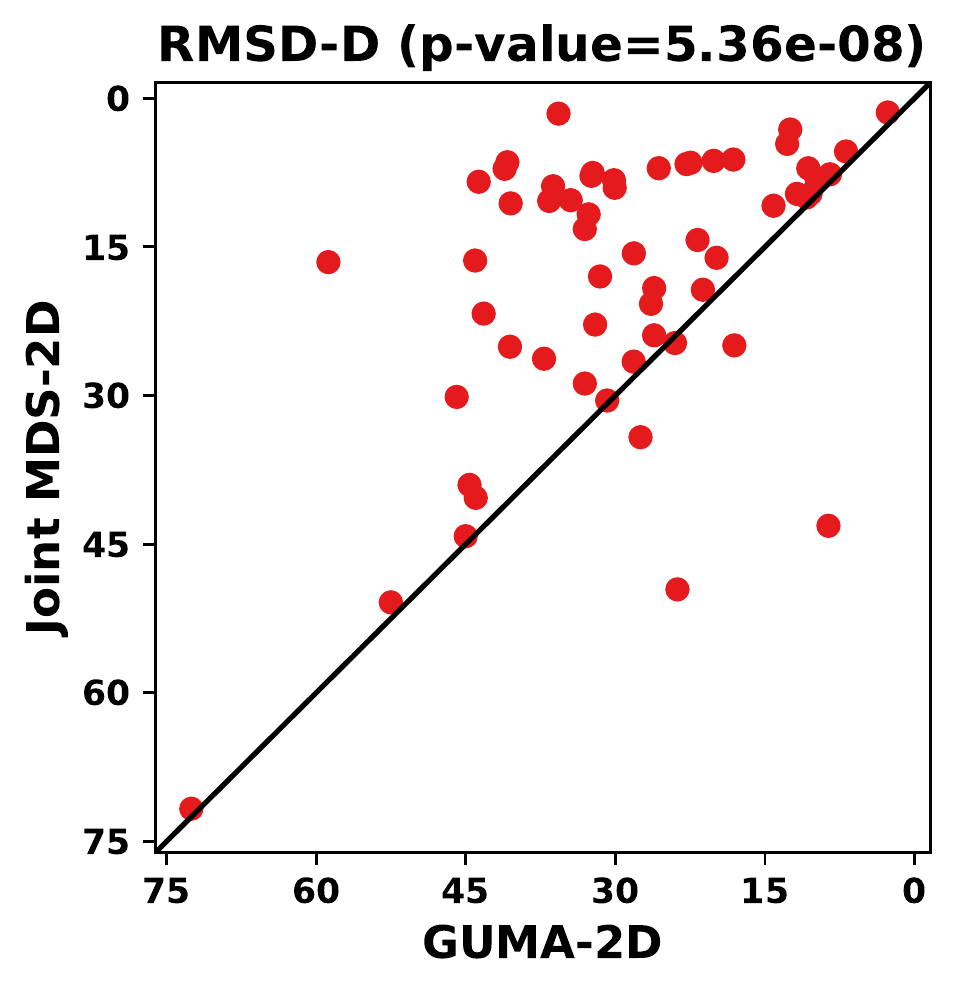}
    }
    \caption{Protein structure alignment results. Left: average RMSD-D over 58 pairs of protein models. Right: RMSD-D comparison of GUMA~\citep{cui2014generalized} and Joint MDS, with the difference significance measured by a Wilcoxon signed-rank test.}\label{fig:protein_structure_alignment}
\end{figure}

\subsection{Protein structure alignment}
Protein 3D structure alignment is an important sub-task in protein structure reconstruction~\citep{wang2009manifold,badaczewska2020computational} and protein model quality assessment~\citep{pereira2021high}. This task consists of structurally aligning two moderately accurate protein models estimated by different methods, either experimental or computational, and eventually integrating them to generate a more accurate and high-resolution 3D model of protein folding. Here, we consider a more challenging setting where the correspondences across protein models are not known. In contrast to previous work~\citep{wang2009manifold,cui2014generalized} where only few examples are illustrated, we use a larger and more realistic dataset, and provide a quantitative way to evaluate the unsupervised manifold alignment methods for this task. Specifically, we consider here the first domain of all the proteins from CASP14~\citep{pereira2021high} from T1024 to T1113 and use the protein models predicted by the two top-placed methods in the leaderboard, namely AlphaFold2 and Baker. This results in a dataset of 58 pairs of protein models, with an average number of residues equal to 198. More details about the dataset can be found in Section~\ref{app:sec:protrein_alignment} of the Appendix. We compare our method to a state-of-the-art unsupervised manifold alignment method namely GUMA~\citep{cui2014generalized}. The parameters are fixed across the dataset and the performance is measured as the average of the RMSD-D obtained by 3 different runs, defined as the root-mean-square deviation (RMSD-D) of residue positions and distances, given by
\begin{equation*}
{\scriptstyle
    \text{RMSD-D}=\sqrt{\frac{1}{n^2} \|\D-d(\Z,\Z)\|^2_F}+\sqrt{\frac{1}{n^2} \|\D'-d(\Z',\Z')\|^2_F}+\sqrt{\frac{1}{n}\sum_{i,j=1}^n\P_{ij}\|\z_i-\z'_j\|^2},
}
\end{equation*}
where $\P$ denotes the true permutation matrix with 0 and 1 as entries. We respectively compute embeddings in the orignal 3D space and in the 2D space, and show the results in Figure~\ref{fig:protein_structure_alignment}. We also measure the significance of the difference with the $p$-value of a Wilcoxon signed-rank test. While GUMA hardly works in this challenging task with few number of RMSD-D values being small, our Joint MDS manages to produce good alignment in a large number of proteins.

\section{Conclusion}
\label{section:Conclusion}
We introduced Joint MDS, a flexible and powerful analysis tool for joint exploration of two datasets from different domains without known correspondences between instances across the datasets. While it only requires pairwise intra-dataset dissimilarities as input, we showcased its applicability and effectiveness in several applications. We see the main limitation of our approach in its quadratic complexity of number of samples, which makes it hard to scale to very large datasets. Investigation of faster stochastic and online optimization methods for solving either MDS or Wasserstein Procrustes~\citep{rajawat2017stochastic,pai2019dimal,shamai2015classical,boyarski2017subspace} would be interesting research directions. 

\section*{Acknowledgement}
This project has received funding from the European Union’s Horizon 2020 research and innovation programme under the Marie Sklodowska-Curie grant agreement No 813533 (K.B.). The authors would also like to thank Leslie O'Bray for her insightful feedback on the manuscript, which greatly improved it. 

\bibliography{mybib}
\bibliographystyle{iclr2023_conference}

\newpage
\appendix
\vspace*{0.3cm}
\begin{center}
    {\huge Appendix}
\end{center}
\vspace*{0.5cm}

\setcounter{theorem}{0}

This appendix provides additional background and details about methods and experiments. It is organized as follows: Section~\ref{app:sec:optimization} provides further background on optimization and Section~\ref{app:sec:experiments} presents experimental details and additional results.

\section{Further Background on Optimization}\label{app:sec:optimization}
\subsection{Multidimensional Scaling and SMACOF}
This section provides further background on the metric MDS problem and details about the SMACOF algorithm. The metric MDS problem consists in solving the minimization problem of the stress function defined in Eq.~\eqref{eq:mds}, or equivalently
\begin{equation}
	\min_{\Z\in\R^{n\times d}} \stress(\Z, \D, \W):=\sum_{i<j}^n w_{ij} (d_{ij}-d(\z_i,\z_j))^2,
\end{equation}
for symmetric dissimilarities. We first give the explicit expression of the majorizing function for the above stress function.

\begin{lemma}[\citep{borg2005modern}]
Let us define
\begin{equation*}
    g(\X, \Z):=\sum_{i<j}w_{ij} d_{ij}^2+\sum_{i<j} w_{ij} d^2(\x_i,\x_j) -2\sum_{i<j}w_{ij} d_{ij} \frac{\langle \x_i - \x_j,\z_i-\z_j \rangle}{d(\z_i,\z_j)}.
\end{equation*}
Then we have $\stress(\X,\D,\W)\leq g(\X,\Z)$ and $\stress(\Z,\D,\W)=g(\Z,\Z)$.
\end{lemma}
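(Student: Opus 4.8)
The plan is to verify the two claims separately: the equality at the supporting point, which is immediate, and the majorization inequality, which is the substantive part. For the equality, I would simply substitute $\X=\Z$ into the definition of $g$. The third sum then contains $\langle \z_i-\z_j,\z_i-\z_j\rangle / d(\z_i,\z_j) = d^2(\z_i,\z_j)/d(\z_i,\z_j) = d(\z_i,\z_j)$, so $g(\Z,\Z) = \sum_{i<j} w_{ij} d_{ij}^2 + \sum_{i<j} w_{ij} d^2(\z_i,\z_j) - 2\sum_{i<j} w_{ij} d_{ij} d(\z_i,\z_j) = \sum_{i<j} w_{ij}(d_{ij}-d(\z_i,\z_j))^2 = \stress(\Z,\D,\W)$, using that $d(\z_i,\z_j)\geq 0$ so the square expands correctly.

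For the inequality, I would first expand the stress function as
\begin{equation*}
    \stress(\X,\D,\W) = \sum_{i<j} w_{ij} d_{ij}^2 + \sum_{i<j} w_{ij} d^2(\x_i,\x_j) - 2\sum_{i<j} w_{ij} d_{ij}\, d(\x_i,\x_j).
\end{equation*}
The first two terms appear identically in $g(\X,\Z)$, so the whole claim reduces to the termwise bound
\begin{equation*}
    d(\x_i,\x_j) \geq \frac{\langle \x_i-\x_j,\ \z_i-\z_j\rangle}{d(\z_i,\z_j)}
\end{equation*}
for every pair $i<j$ (the weights $w_{ij}$ and $d_{ij}$ being nonnegative, summing these preserves the direction of the inequality). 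This is exactly the Cauchy--Schwarz inequality in $\R^d$: writing $\u = \x_i-\x_j$ and $\v = \z_i-\z_j$, we have $\langle \u,\v\rangle \leq \|\u\|_2\,\|\v\|_2$, and dividing by $\|\v\|_2 = d(\z_i,\z_j)$ gives $\langle \u,\v\rangle/\|\v\|_2 \leq \|\u\|_2 = d(\x_i,\x_j)$, with $d$ taken to be the Euclidean distance.

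The only real subtlety — and the step I expect to need a remark — is the division by $d(\z_i,\z_j)$: the bound as written requires $\z_i \neq \z_j$ so that the denominator is nonzero. This is the standard caveat in the SMACOF derivation, handled either by adopting the convention that a term with $d(\z_i,\z_j)=0$ contributes $0$ to the third sum (equivalently, defining a Moore--Penrose-style pseudo-inverse of the scalar), or by simply assuming the supporting configuration $\Z$ has distinct points, which holds generically and is preserved along the iterations. With that convention in place, the termwise Cauchy--Schwarz bound is trivially satisfied when $\v=0$ since the right-hand side is then $0\leq d(\x_i,\x_j)$. Assembling the termwise bounds weighted by $w_{ij}d_{ij}\geq 0$ and adding back the two common sums yields $\stress(\X,\D,\W)\leq g(\X,\Z)$, completing the proof.
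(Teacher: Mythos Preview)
Your proposal is correct and follows essentially the same route as the paper's proof: expand the stress, match the two common sums, and bound the cross term termwise via Cauchy--Schwarz, with equality at $\X=\Z$. Your treatment is in fact slightly more careful than the paper's, which does not comment on the $d(\z_i,\z_j)=0$ case.
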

\begin{proof}
First, we have by Cauchy-Schwarz inequality:
\begin{equation*}
    \langle \x_i - \x_j,\z_i-\z_j \rangle \leq \|\x_i-\x_j\| \|\z_i-\z_j\|=d(\x_i,\x_j)d(\z_i,\z_j).
\end{equation*}
By expanding the expression of the stress and using the above inequality, we obtain
\begin{equation*}
    \begin{aligned}
    \stress(\X, \D, \W) &=\sum_{i<j}w_{ij} d_{ij}^2+\sum_{i<j} w_{ij} d^2(\x_i,\x_j) -2\sum_{i<j}w_{ij} d_{ij} d(\x_i,\x_j) \\
    &\leq \sum_{i<j}w_{ij} d_{ij}^2+\sum_{i<j} w_{ij} d^2(\x_i,\x_j) -2\sum_{i<j}w_{ij} d_{ij} \frac{\langle \x_i - \x_j,\z_i-\z_j \rangle}{d(\z_i,\z_j)} \\
    & = g(\X, \Z).
    \end{aligned}
\end{equation*}
It is also easy to show that when $\X=\Z$, we have equality.
\end{proof}
Now the following properties of $g$ provides simple update at each iteration of the stress majorization:
\begin{lemma}[\citep{borg2005modern}]
$g(\X,\Z)$ is a quadratic function in $\X$ whose minimum is attained through the Guttman transform at $\X^\star=\V^{+}\B(\Z)\Z$, where $\V=\sum_{i<j}w_{ij} (\e_i-\e_j)(\e_i-\e_j)^{\top}$ with $\e$ being the canonical basis of $\R^n$, $\V^{+}$ denoting its pseudo-inverse, and $\B(\Z)=\sum_{i<j}b_{ij} (\e_i-\e_j)(\e_i-\e_j)^{\top}$ with
\begin{equation*}
    b_{ij}=
    \begin{cases}
    w_{ij} d_{ij}/d(\z_i,\z_j) & \mathrm{if}~ d(\z_i,\z_j) > 0, \\
    0 & \mathrm{otherwise}.
    \end{cases}
\end{equation*}
\end{lemma}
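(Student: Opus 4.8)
The plan is to establish the two claimed properties of $g(\X,\Z)$ separately: first that it is a quadratic function of $\X$, and second that its minimizer over $\X$ is given by the Guttman transform $\X^\star = \V^{+}\B(\Z)\Z$. For the quadratic structure, I would inspect the three terms of $g$ in turn. The first term $\sum_{i<j} w_{ij} d_{ij}^2$ is constant in $\X$. The second term, $\sum_{i<j} w_{ij} d^2(\x_i,\x_j)$, I would rewrite using $d^2(\x_i,\x_j) = \|\x_i-\x_j\|^2 = \langle \x_i-\x_j, \x_i-\x_j\rangle$; summing $w_{ij}(\e_i-\e_j)(\e_i-\e_j)^\top$ against $\X$ on both sides shows this equals $\tr(\X^\top \V \X)$, which is quadratic in $\X$. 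The third term is $-2\sum_{i<j} w_{ij} d_{ij}\langle \x_i-\x_j, \z_i-\z_j\rangle / d(\z_i,\z_j)$, which is linear in $\X$; collecting the coefficients via the $b_{ij}$ defined in the statement, one recognizes it as $-2\tr(\X^\top \B(\Z)\Z)$ (taking care that when $d(\z_i,\z_j)=0$ the corresponding term is simply dropped, consistent with the definition of $b_{ij}$). Hence $g(\X,\Z) = \text{const} + \tr(\X^\top\V\X) - 2\tr(\X^\top\B(\Z)\Z)$.

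Next I would minimize this quadratic in $\X$. Setting the gradient with respect to $\X$ to zero gives the normal equation $\V\X = \B(\Z)\Z$. Since $\V$ is a weighted graph Laplacian (a sum of rank-one PSD terms $w_{ij}(\e_i-\e_j)(\e_i-\e_j)^\top$), it is positive semidefinite but singular — its null space is spanned by the all-ones vector $\mathbf 1$. So I would argue that the solution is $\X^\star = \V^{+}\B(\Z)\Z$, checking that $\B(\Z)\Z$ lies in the column space of $\V$ (equivalently, that $\mathbf 1^\top \B(\Z)\Z = 0$, which holds because $\B(\Z)$ is itself a Laplacian-type matrix with $\mathbf 1^\top \B(\Z) = 0$ by the same $(\e_i-\e_j)$ telescoping structure). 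Using the pseudo-inverse resolves the ambiguity up to translations, which is harmless since the stress is translation-invariant; one can also note that in practice one uses a fixed normalization such as $\V = \sum_{i<j} w_{ij}(\e_i-\e_j)(\e_i-\e_j)^\top$ with uniform weights, where $\V^+ = \frac 1n(\I - \frac1n\mathbf 1\mathbf 1^\top)$ up to a scalar, giving the centered Guttman transform explicitly.

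The main obstacle is bookkeeping rather than any deep difficulty: correctly translating the index-pair sums $\sum_{i<j}$ into matrix form via the outer products $(\e_i-\e_j)(\e_i-\e_j)^\top$, and handling the degenerate case $d(\z_i,\z_j)=0$ so that $\B(\Z)$ is well-defined and the majorization inequality (already established in the previous lemma) still applies at the supporting point. I would also be careful to confirm that $g$ is genuinely minimized — not just stationary — which follows from $\V \succeq 0$, so the quadratic is convex and the normal equation characterizes the global minimum over each translation class. With these pieces in place, substituting $\X^\star$ back into the sandwich inequality $h(\Z) = g(\Z,\Z) \ge g(\X^\star,\Z) \ge h(\X^\star)$ yields the monotone SMACOF update, which is the point of the lemma.
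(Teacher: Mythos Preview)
Your proposal is correct and follows essentially the same route as the paper's proof: rewrite the three terms of $g$ as a constant, $\tr(\X^\top\V\X)$, and $-2\tr(\X^\top\B(\Z)\Z)$, then solve the resulting quadratic via the stationarity condition $\V\X=\B(\Z)\Z$ and the pseudo-inverse. The paper's argument is actually terser than yours---it simply says ``through simple calculus'' for the trace identities and ``minimum is attained at its stationary point'' without discussing the null space of $\V$, convexity, or the consistency of the normal equations---so your additional care about $\mathbf 1^\top\B(\Z)=0$ and $\V\succeq 0$ only strengthens the argument.
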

\begin{proof}
Through simple calculus, we have
\begin{equation*}
    \sum_{i<j} w_{ij} d^2(\x_i,\x_j)=\sum_{i<j} w_{ij} \|\x_i-\x_j\|^2=\tr(\X^\top \V \X),
\end{equation*}
and
\begin{equation*}
    \sum_{i<j}w_{ij} d_{ij} \frac{\langle \x_i - \x_j,\z_i-\z_j \rangle}{d(\z_i,\z_j)}=\tr(\X^\top \B(\Z)\Z).
\end{equation*}
Since $g$ is a quadratic function in $\X$, its minimum is simply attained at its stationary point, equal to
\begin{equation*}
    \X^\star=\V^{+}\B(\Z)\Z.
\end{equation*}
\end{proof}
Based on these two Lemmas and the principle of the majorization, we can obtain the SMACOF algorithm for solving the metric MDS problem, summarized in Algorithm~\ref{algo:smacof}.

\begin{algorithm}
\caption{SMACOF algorithm}\label{algo:smacof}
\begin{algorithmic}[1]
    \STATE {\bfseries Input:} distance $\D\in\R^{n\times n}$, weight $\W\in\R^{n\times n}$, initial embeddings $\Z_{0}$, tolerance $\varepsilon$, max iterations $T$.
    \STATE {\bfseries Output:} low-dimensional embeddings $\Z\in\R^{n\times d}$.
    \FOR{$t=1,\dots,T$}
        \STATE Compute the Guttman transform $\Z_t=\V^{+}\B(\Z_{t-1})\Z_{t-1}$.
        \IF{$\stress(\Z_{t-1},\D,\W) - \stress(\Z_{t},\D,\W)<\varepsilon$}
        \STATE Set $\Z=\Z_t$.
        \STATE \textbf{break}
        \ENDIF
    \ENDFOR
\end{algorithmic}
\end{algorithm}

\subsection{Wasserstein Procrustes}
The section provides further background and details about the Wasserstein Procrustes problem~\citep{alvarez2019towards}. 
\subsubsection{Sinkhorn's algorithm}
When fixing $\O$ in Eq.~\eqref{eq:wasserstein_procrustes}, the problem amounts to solving the classic discrete OT problem with entropic regularization:
\begin{equation}\label{eq:sinkhorn}
	\min_{\P\in\Pi(\a,\b)} \langle \P, \mathbf{C}\rangle_F-\varepsilon H(\P),
\end{equation}
where $H(\P):=-\sum_{ij}\P_{ij}(\log \P_{ij}-1)$ and $\C=d^2(\Z\O,\Z')$ is the cost matrix. This problem can be efficiently solved by the Sinkhorn's algorithm, which is an iterative matrix scaling method to approach the double stochastic matrix. Specifically, the $\ell$-th iteration of the algorithm performs the following updates:
\begin{equation*}
    \u^{(\ell)}=\frac{\a}{\K \v^{(\ell)}},\qquad \v^{(\ell+1)}=\frac{\b}{\K^{\top}\u^{(\ell)}},
\end{equation*}
where $\K=e^{-\mathbf{C}/\varepsilon}$. Then, the matrix $\mathrm{diag}(\u^{(\ell)})\K \mathrm{diag}(\v^{(\ell)})$ converges to the solution of Eq.~\eqref{eq:sinkhorn} when $\ell$ tends to $\infty$. This algorithm converges faster with larger $\varepsilon$ as strong regularization leads to a more convex objective~\citep{peyre2019computational}. As a consequence, using an annealing scheme on $\varepsilon$ has been observed useful for alleviating the initialization issue when jointly optimizing on $\P$ and $\O$ in~\citep{alvarez2019towards}. Thus, we also adopt the same annealing scheme $\varepsilon_t=\alpha\varepsilon_{t-1}$ in our experiments with a decay factor $\alpha=0.95$.

\subsubsection{Orthogonal Procrustes}
When fixing $\P$, the problem in Eq.~\eqref{eq:wasserstein_procrustes} is reduced to
\begin{equation*}
    \max_{\O\in \mathcal{O}_d} \langle \O, \M\rangle_F,
\end{equation*}
where $\M=\Z^{\top}\P\Z'$. The solution is given by the following Lemma
\begin{lemma}[\citep{alvarez2019towards}]
Let $\M$ be the matrix with SVD $\M=\U\mathbf{\Lambda}\V^\top$, then
    $\argmax_{\O\in \mathcal{O}_d} \langle \O, \M\rangle_F=\U\V^\top.$
\end{lemma}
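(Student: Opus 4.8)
The statement is the classical orthogonal Procrustes lemma: for a fixed matrix $\M\in\R^{d\times d}$ with singular value decomposition $\M=\U\mathbf{\Lambda}\V^\top$, the maximizer of $\langle \O, \M\rangle_F$ over $\O\in\Ocal_d$ is $\U\V^\top$. The plan is to reduce the optimization to a trace inequality and then saturate it. First I would rewrite the objective using the SVD of $\M$ and the cyclic property of the trace: $\langle \O,\M\rangle_F=\tr(\O^\top\M)=\tr(\O^\top\U\mathbf{\Lambda}\V^\top)=\tr(\V^\top\O^\top\U\,\mathbf{\Lambda})$. Setting $\Q:=\U^\top\O\V$, note that $\Q$ is orthogonal whenever $\O$ is (a product of orthogonal matrices), and conversely every orthogonal $\Q$ arises this way from $\O=\U\Q\V^\top$; hence the problem is equivalent to maximizing $\tr(\Q^\top\mathbf{\Lambda})=\sum_{i=1}^d \Lambda_{ii}\,q_{ii}$ over $\Q\in\Ocal_d$, where $\mathbf{\Lambda}=\mathrm{diag}(\Lambda_{11},\dots,\Lambda_{dd})$ with $\Lambda_{ii}\ge 0$.

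The key step is to bound the diagonal entries of an orthogonal matrix: since the columns of $\Q$ have unit Euclidean norm, $|q_{ii}|\le 1$ for every $i$. Because each singular value $\Lambda_{ii}$ is nonnegative, this gives $\sum_i \Lambda_{ii} q_{ii}\le \sum_i \Lambda_{ii}=\tr(\mathbf{\Lambda})$. The bound is attained by $\Q=I_d$, which is indeed orthogonal, and translating back yields $\O=\U I_d\V^\top=\U\V^\top$; plugging in confirms $\langle \U\V^\top,\M\rangle_F=\tr(\mathbf{\Lambda})$, so the bound is tight and $\U\V^\top$ is a maximizer.

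I expect the only genuine subtlety to be the direction $\Ocal_d$ versus $SO_d$: if one insists on $\det\O=1$ and $\det(\U\V^\top)=-1$, then $\Q=I$ may be inadmissible and the true maximizer becomes $\U\,\mathrm{diag}(1,\dots,1,-1)\V^\top$, with a correspondingly smaller optimal value $\sum_{i<d}\Lambda_{ii}-\Lambda_{dd}$. Since the lemma as stated works over the full orthogonal group $\Ocal_d$ (as defined in the excerpt), no such correction is needed and the argument above is complete; I would simply remark that the maximizer is unique up to the kernel of $\M$ when some singular values vanish. A secondary minor point is justifying the change of variables $\Q=\U^\top\O\V$ is a bijection of $\Ocal_d$ onto itself, which is immediate since $\U,\V$ are fixed orthogonal matrices.
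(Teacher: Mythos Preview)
Your proposal is correct and follows essentially the same route as the paper: substitute the SVD, use the group property of $\Ocal_d$ to change variables to $\Q=\U^\top\O\V$, and then observe that $\tr(\Q^\top\mathbf{\Lambda})$ is maximized at $\Q=I$. The paper's version is terser (it simply asserts the maximum is at the identity without your explicit $|q_{ii}|\le 1$ bound), so your write-up is a strict elaboration of the same argument, with the $SO_d$ and uniqueness remarks being correct but extraneous to what the paper records.
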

\begin{proof}
The proof is simply based on the fact that the set of orthogonal matrices is a group. Specifically, we have
\begin{equation*}
    \langle \O, \M\rangle_F=\langle \O, \U\mathbf{\Lambda}\V^\top\rangle_F=\langle \U^\top\O\V, \mathbf{\Lambda}\rangle_F,
\end{equation*}
which is maximized when $\U^\top\O\V$ is equal to the identity matrix, since it is an orthogonal matrix (as the product of orthogonal matrices). As a consequence, $\O=\U\V^\top$.
\end{proof}

\section{Experimental Details and Additional Results}\label{app:sec:experiments}
In this section, we provide implementation details and additional experimental results. Our code will be released upon publication.

\subsection{Computation details}
All experiments are run on a single Macbook Pro 2020 laptop with a 2 GHz Quad-core Intel core i5 CPU.

\subsection{Joint visual exploration of two datasets}\label{app:sec:joint_visualization}
We provide details about datasets, implementation and evaluation metric for joint visual exploration of two datasets.
\subsubsection{Datasets}
The datasets used in joint visual exploration include 3 (pairs of) synthetic datasets and 3 (pairs of) real-world datasets. 

\paragraph{Synthetic datasets.}
We use 3 synthetic datasets from~\citep{liu2019jointly}\footnote{The datasets can be downloaded from \url{https://noble.gs.washington.edu/proj/mmd-ma/}}, including a bifurcation, a Swiss roll,
and a circular frustum. All 3 synthetic datasets consist of two domains of 300 samples that have been linearly projected onto 1000 and 2000-dimensional feature spaces respectively where the mapping matrix is sampled from a Gaussian distribution. The transformed samples are also injected small white noise~\citep{liu2019jointly}. There are respectively 3 classes for each dataset. Each resulting dataset thus provides two data matrices $\X\in\R^{300\times 1000}$ and $\X'\in\R^{300\times 2000}$ associated with their respective class labels $\y$ and $\y'$ which will be used for the unsupervised heterogeneous domain adaptation (HDA) problem. All 3 datasets were simulated with one-to-one sample-wise correspondences, which are solely used for evaluation purpose. Each domain is projected to a different dimension, so there is no feature-wise correspondence either. In all simulations, we normalize the features with standardisation before running the Joint MDS as in~\citep{liu2019jointly}.

\paragraph{Real-world datasets.}
We use 2 sets of single-cell multi-omics data from~\citep{demetci2022scot} and a digit dataset to demonstrate the applicability of our method to real datasets.

The 2 single-cell datasets are generated by co-assays, which provides cell-level correspondence information for evaluation. The first dataset is generated by the scGEM
assay, which simultaneously profiles gene expression and DNA methylation. It results in two data matrices $\X\in\R^{177\times 34}$ for the gene expression data and $\X\in\R^{177\times 27}$ for chromatin accessibility data associated with 5 classes based on the cell types: iPS, d24T+, d16T+, d8 and BJ. The other dataset is generated by the SNAREseq assay, linking gene expression with chromatin accessibility. The data has been pre-processed in~\citep{demetci2022scot} and results in two data matrices $\X\in\R^{1047\times 19}$ and $\X'\in\R^{1047\times 10}$ respectively for ATAC-seq and RNA-seq, associated with 4 classes based on the cell types: GM, BJ, K562 and H1. Both datasets are unit normalized as in~\citep{demetci2022scot}.

The digit dataset MNIST-USPS contain two subsets of MNIST and USPS respectively. Each subset include 1000 digit images with 100 images for each digit class, which results in two data matrices $\X\in\R^{1000\times 784}$ and $\X'\in\R^{1000\times 256}$ for subsets of MNIST and USPS respectively. There are 10 classes in total representing digits from 0 to 9. The data is not preprocessed but only raw pixels are used to compute the distance matrices.

\subsubsection{Implementation details}
\label{sec:implementation}
To compute the dissimilarities for data from each domain, we use the geodesic distances presented in Section~\ref{sec:dissimilarities}. The resulting distance matrices is rescaled by the average of the shortest-path distances to match the distances scale of the two domains. The number of nearest neighbors (NN) for each dataset is set to minimize the FOSCTTM via a grid search similar to~\citep{demetci2022scot}, and is then fixed throughout the experiments. For MNIST-USPS, since we do not have 1-to-1 sample-wise correspondences, we fix number of NNs to 5. For synthetic datasets, the matching penalization parameter $\lambda$ is fixed to 0.1 and the entropic regularization parameter $\varepsilon$ for OT is fixed to 1.0 with an annealing decay equal to 0.95. For real-world datasets, we select both parameters based on either FOSCTTM if 1-to-1 correspondences are available or transfer accuracy otherwise. We run Joint MDS 4 times with different initializations and select the best embeddings determined by the run with the smallest final objective value to mitigate the local minima issues. We found our results generally stable and thus did not report the std for these datasets.

\subsubsection{Evaluation metric}
As we also have access to the ground truth of the 1-to-1 correspondences for the 3 synthetic datasets and the 2 single-cell datasets, we can quantitatively assess the alignment learned by Joint MDS. This is achieved by using the fraction of samples closer than the true match (FOSCTTM) metric~\citep{liu2019jointly}. For a given sample from a dataset, we compute the fraction of samples from the other dataset that are closer to it than its true match and we average the fraction values for all the samples in both datasets. A lower average FOSCTTM generally indicates a better alignment and FOSCTTM would be zero for a perfect alignment as all the samples are closest to their true matches. 

\subsubsection{Additional visual results}
We show in Figure~\ref{app:fig:joint_mds} visualization comparisons of MDS and Joint MDS on the 6 datasets. The MDS visualizations are obtained by running the SMACOF algorithm individually on each domain. While there's no alignment between the samples across the domains, our Joint MDS clearly aligns the samples in the 2D space as data points from the same class are located at similar positions across the two domains.

\begin{figure}
    \centering
    \includegraphics[width=.45\textwidth]{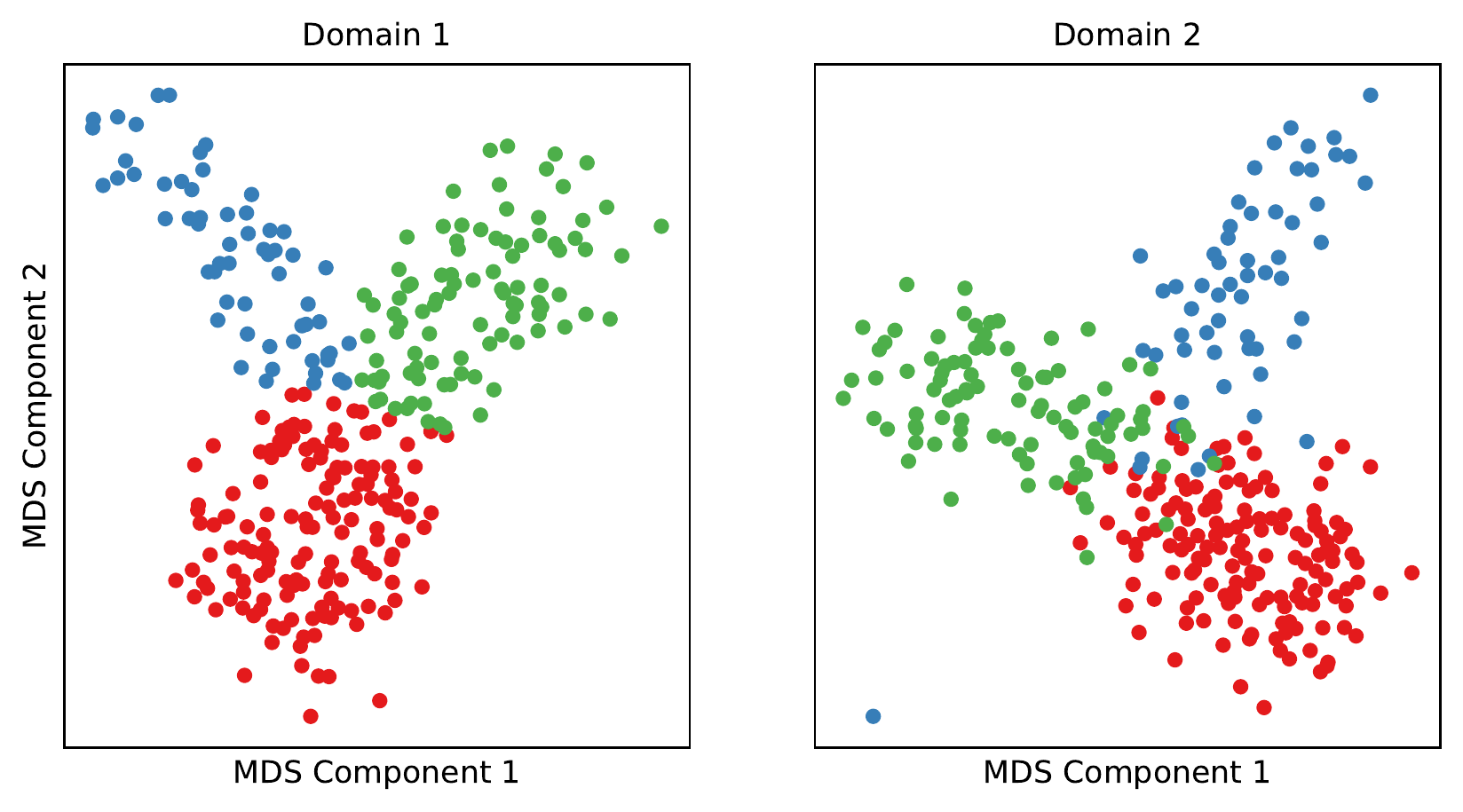} \quad \vrule\quad
    \includegraphics[width=.45\textwidth]{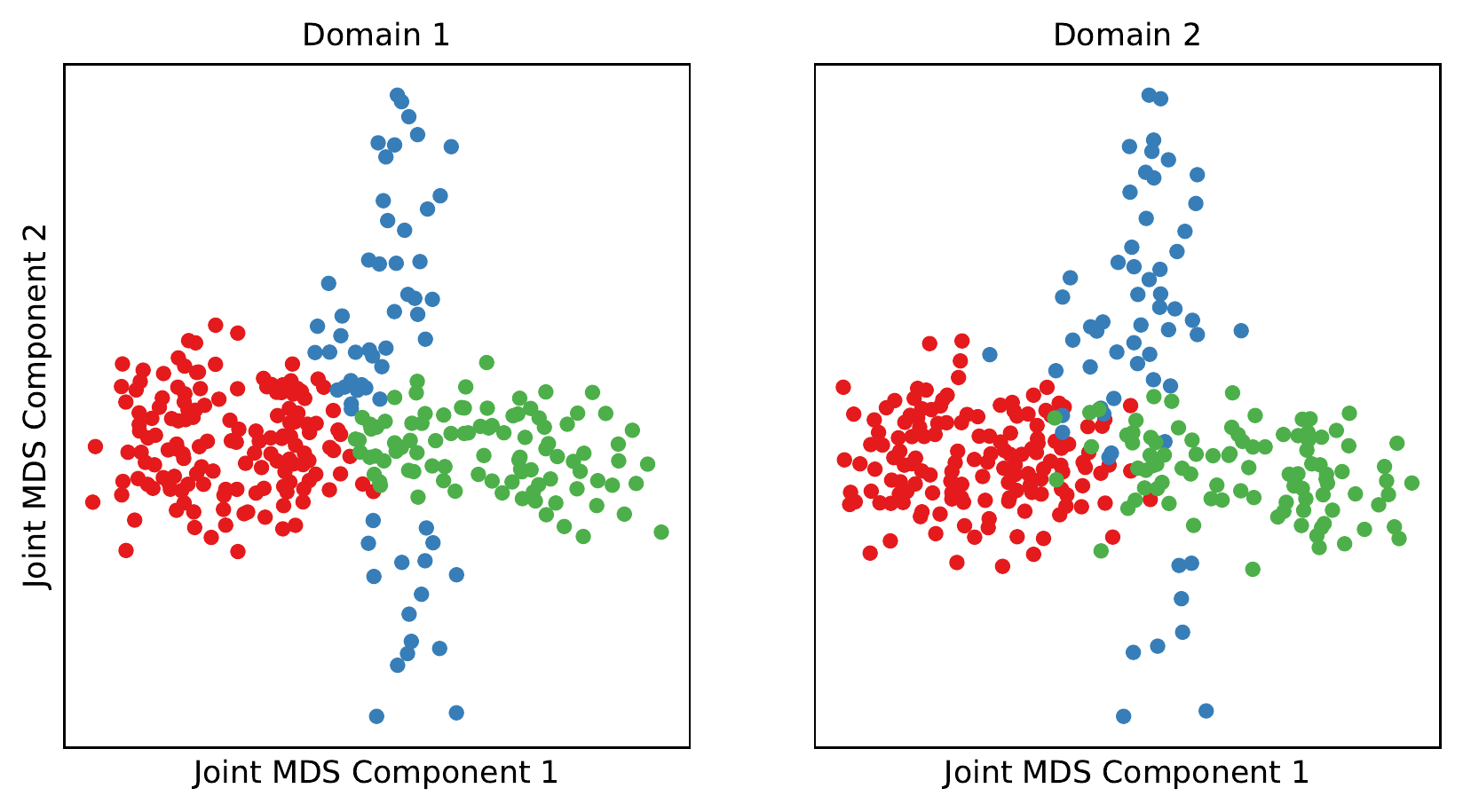}
    \includegraphics[width=.45\textwidth]{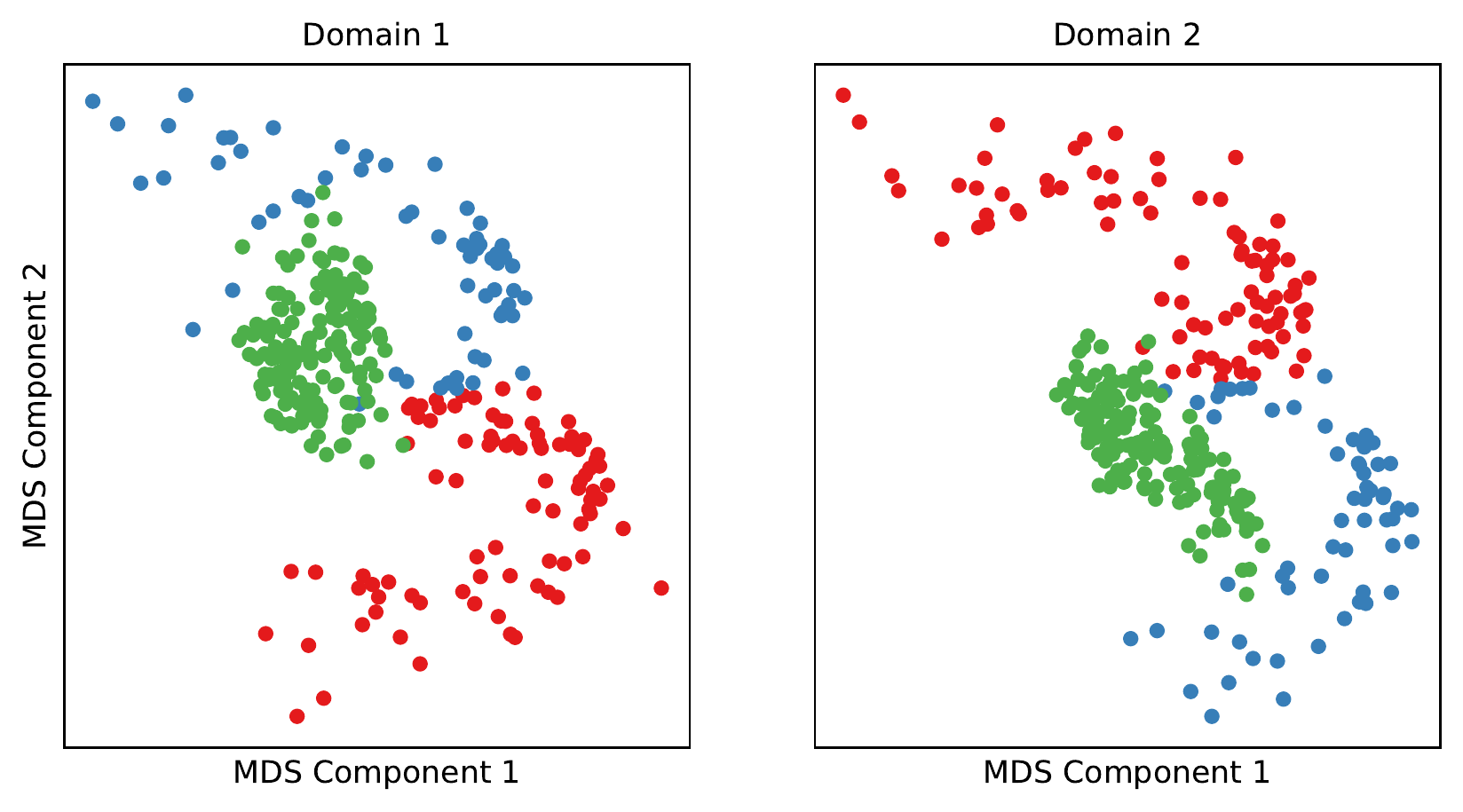} \quad \vrule\quad
    \includegraphics[width=.45\textwidth]{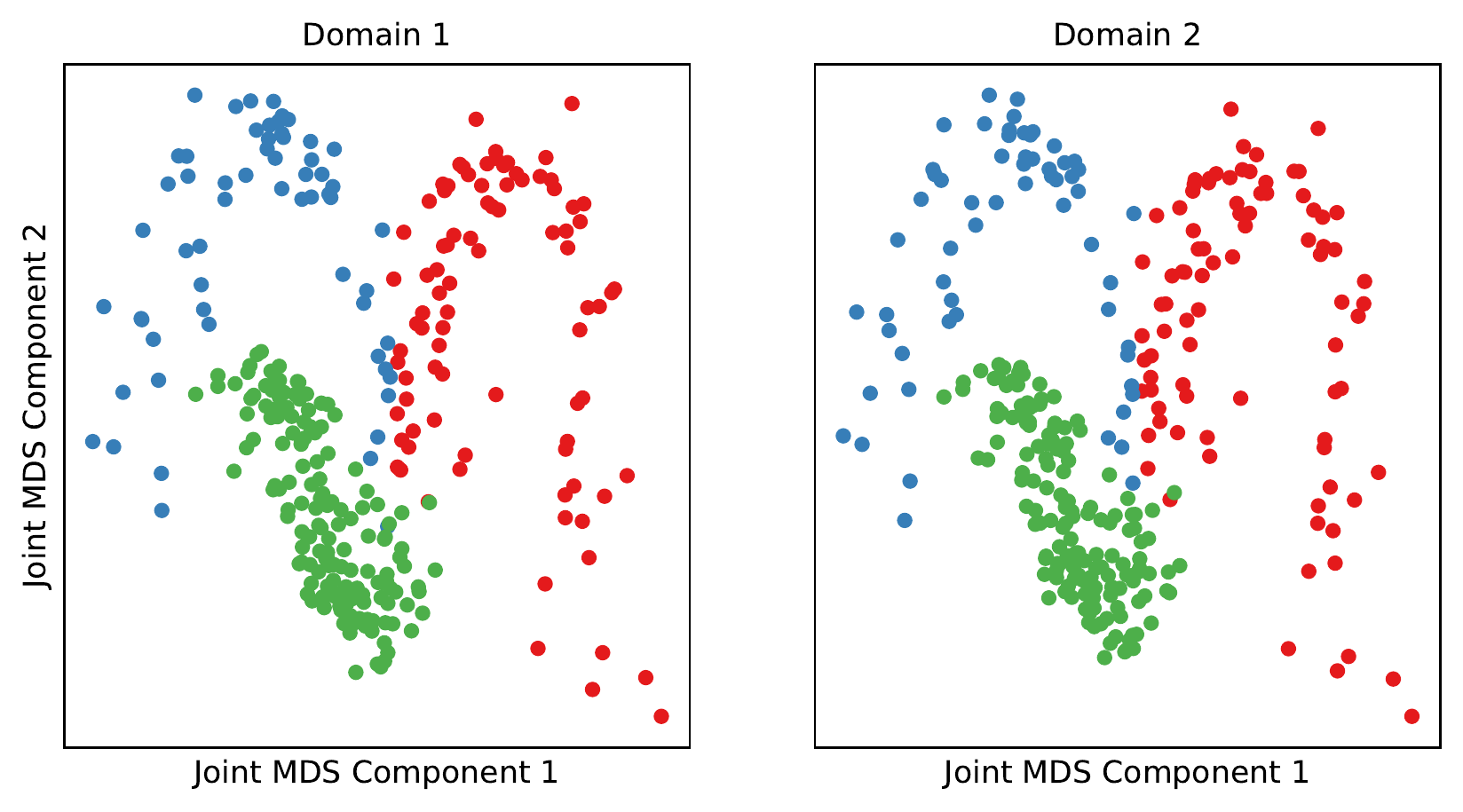}
    \includegraphics[width=.45\textwidth]{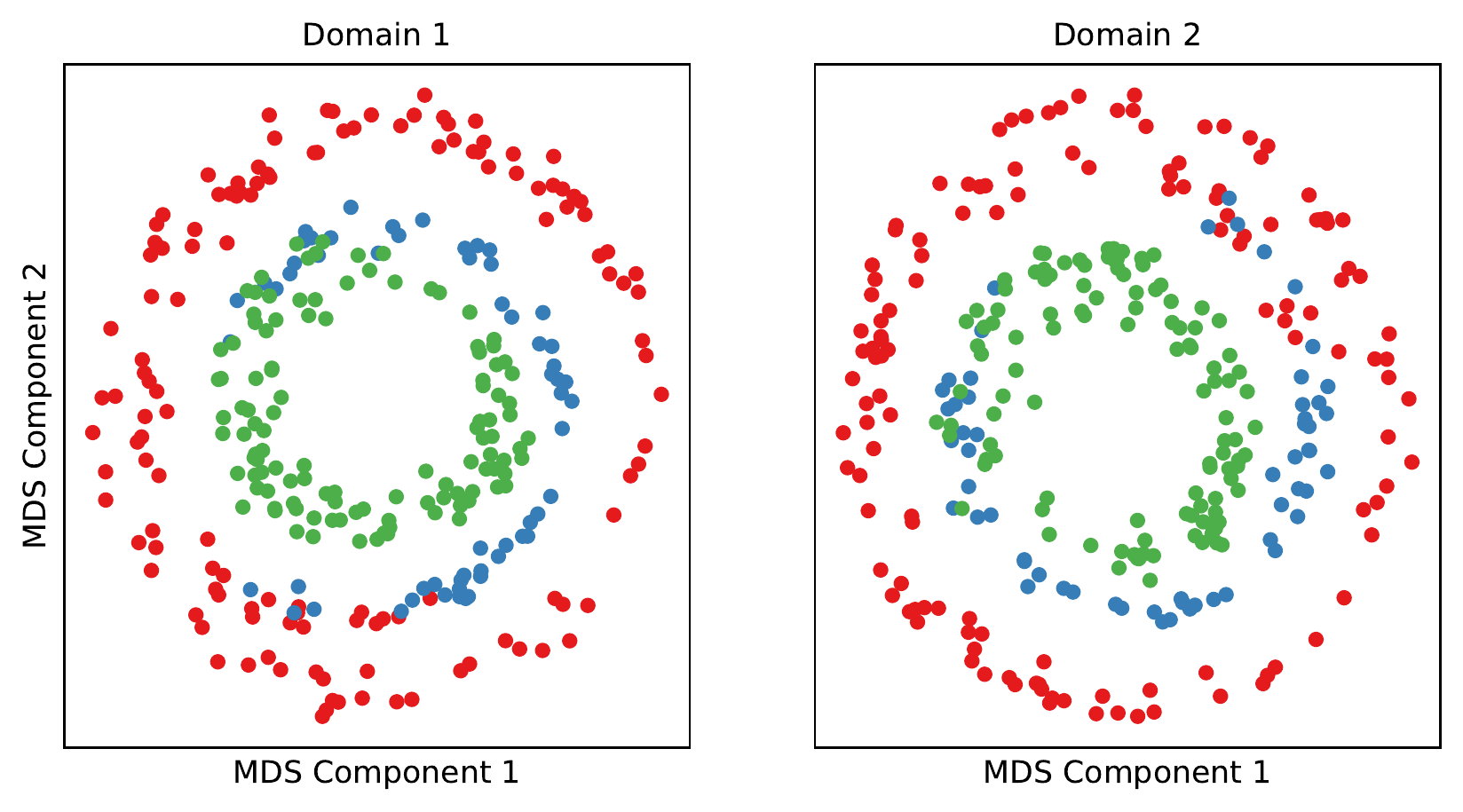} \quad \vrule\quad
    \includegraphics[width=.45\textwidth]{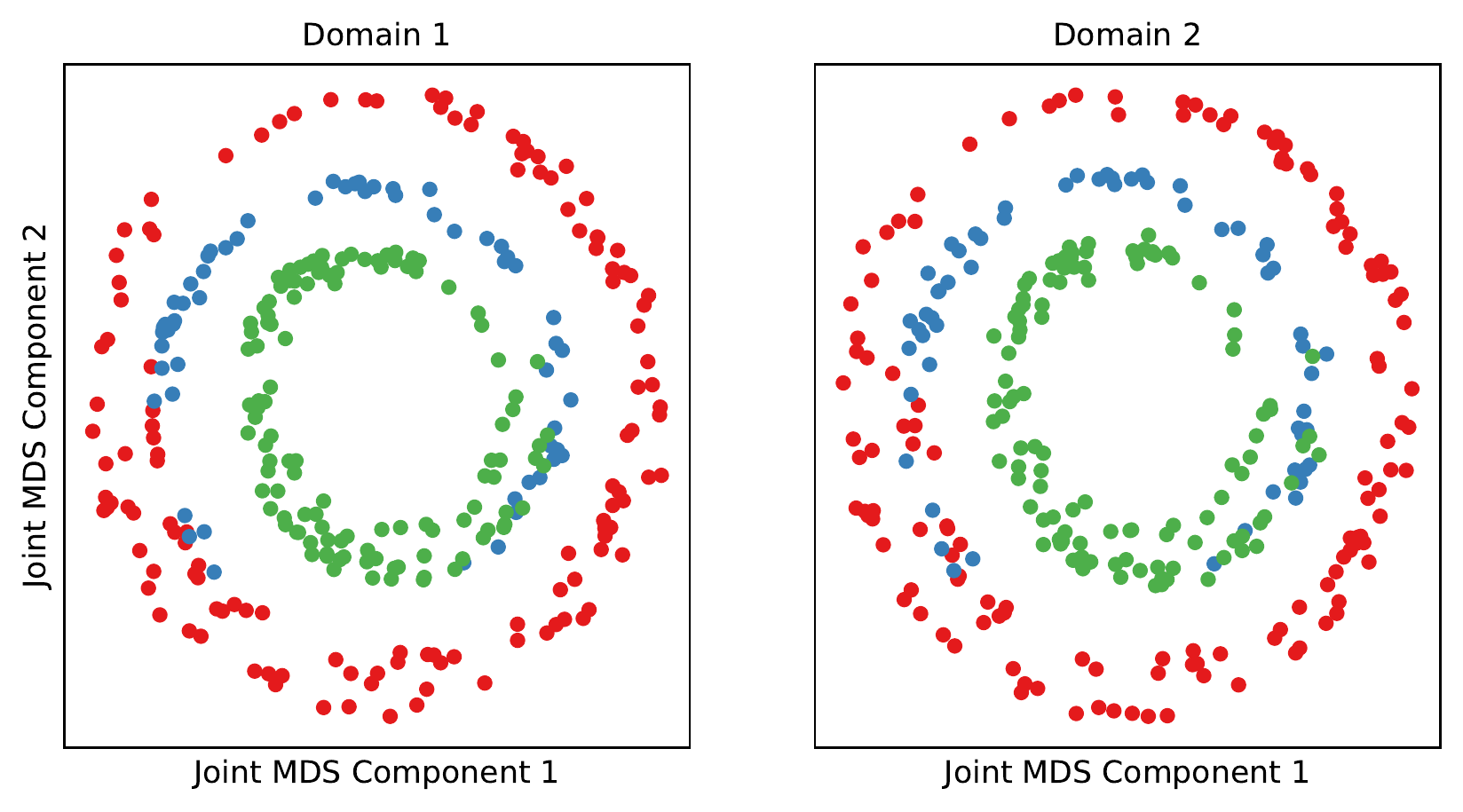}
    \includegraphics[width=.45\textwidth]{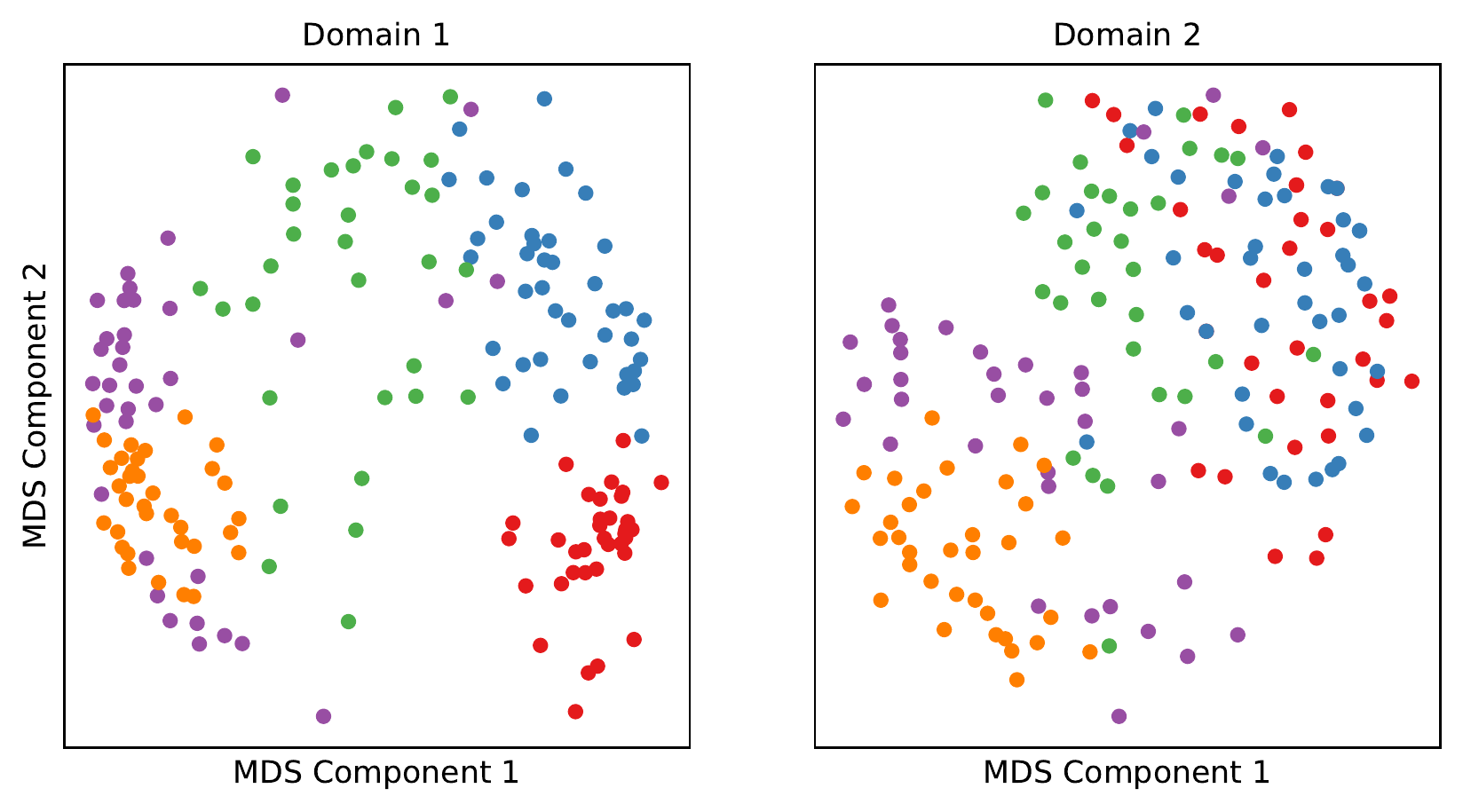} \quad \vrule\quad
    \includegraphics[width=.45\textwidth]{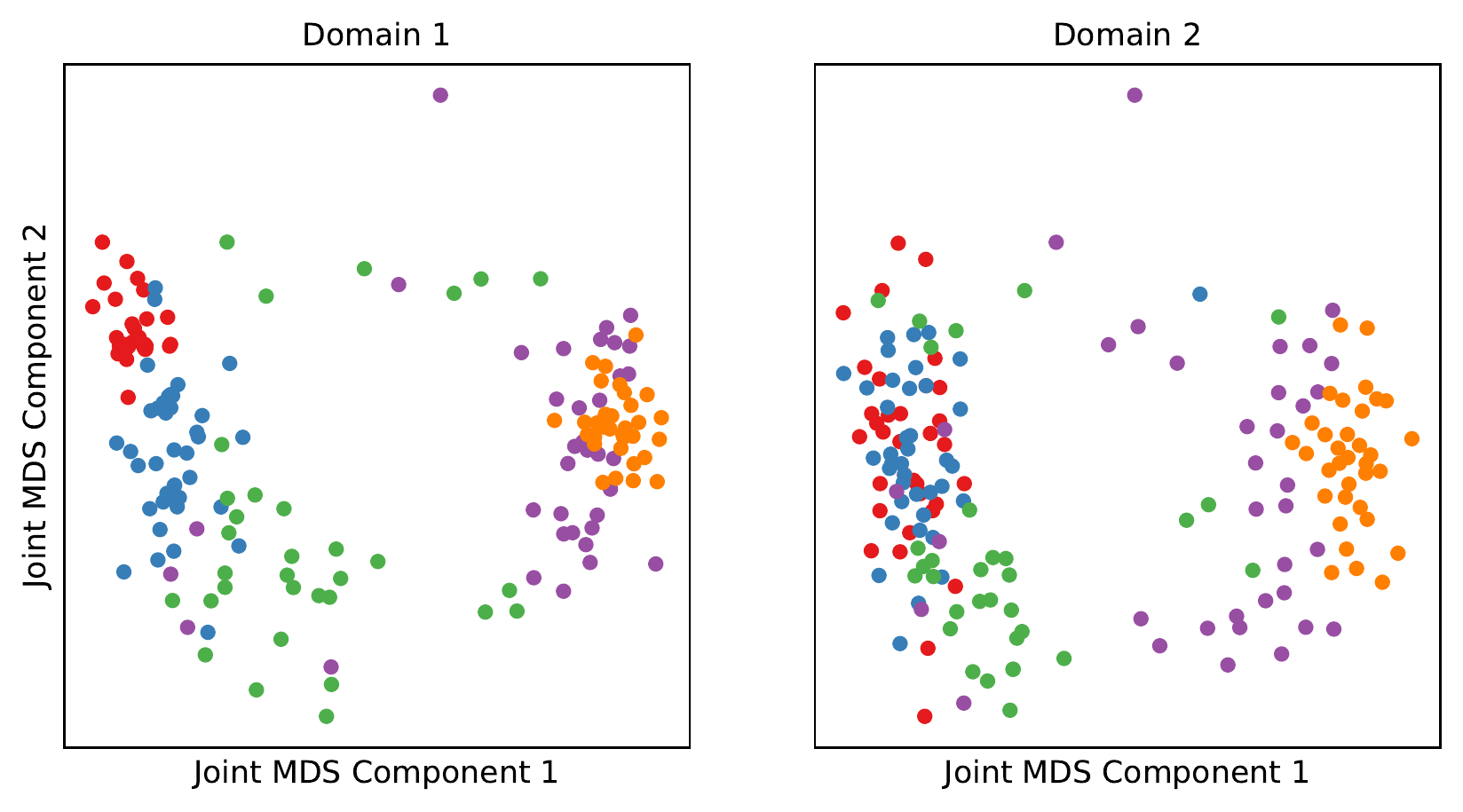}
    \includegraphics[width=.45\textwidth]{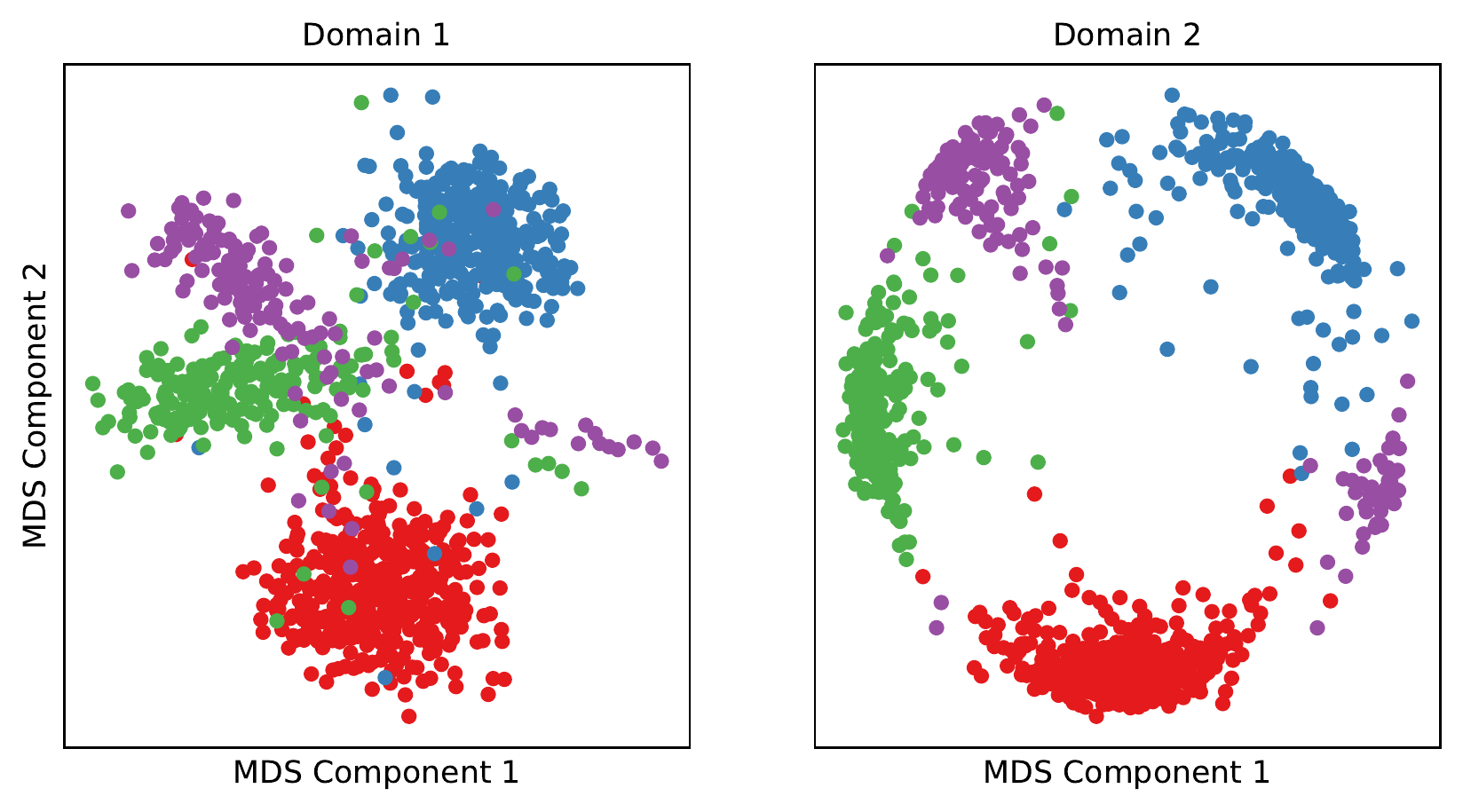} \quad \vrule\quad
    \includegraphics[width=.45\textwidth]{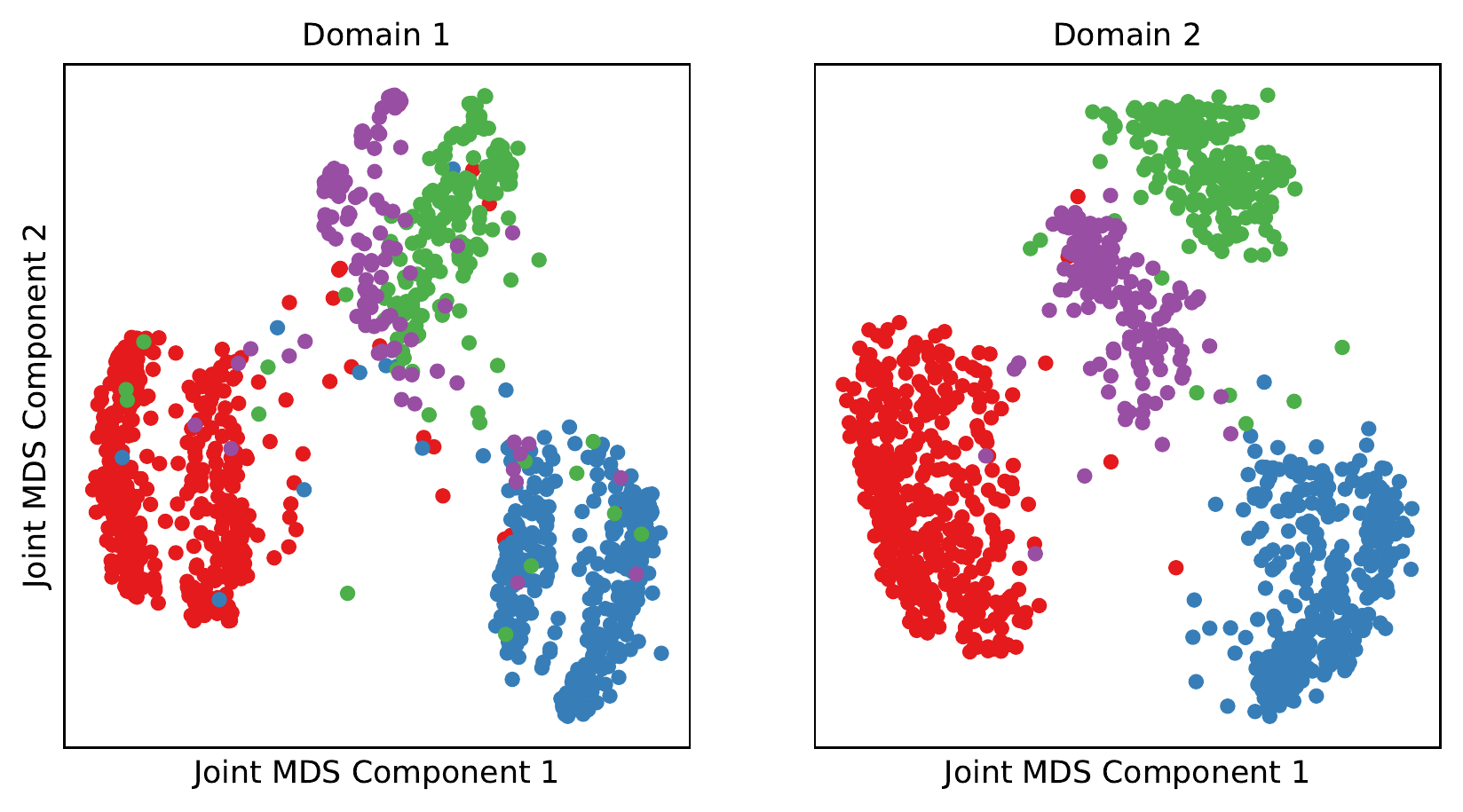}
    \includegraphics[width=.45\textwidth]{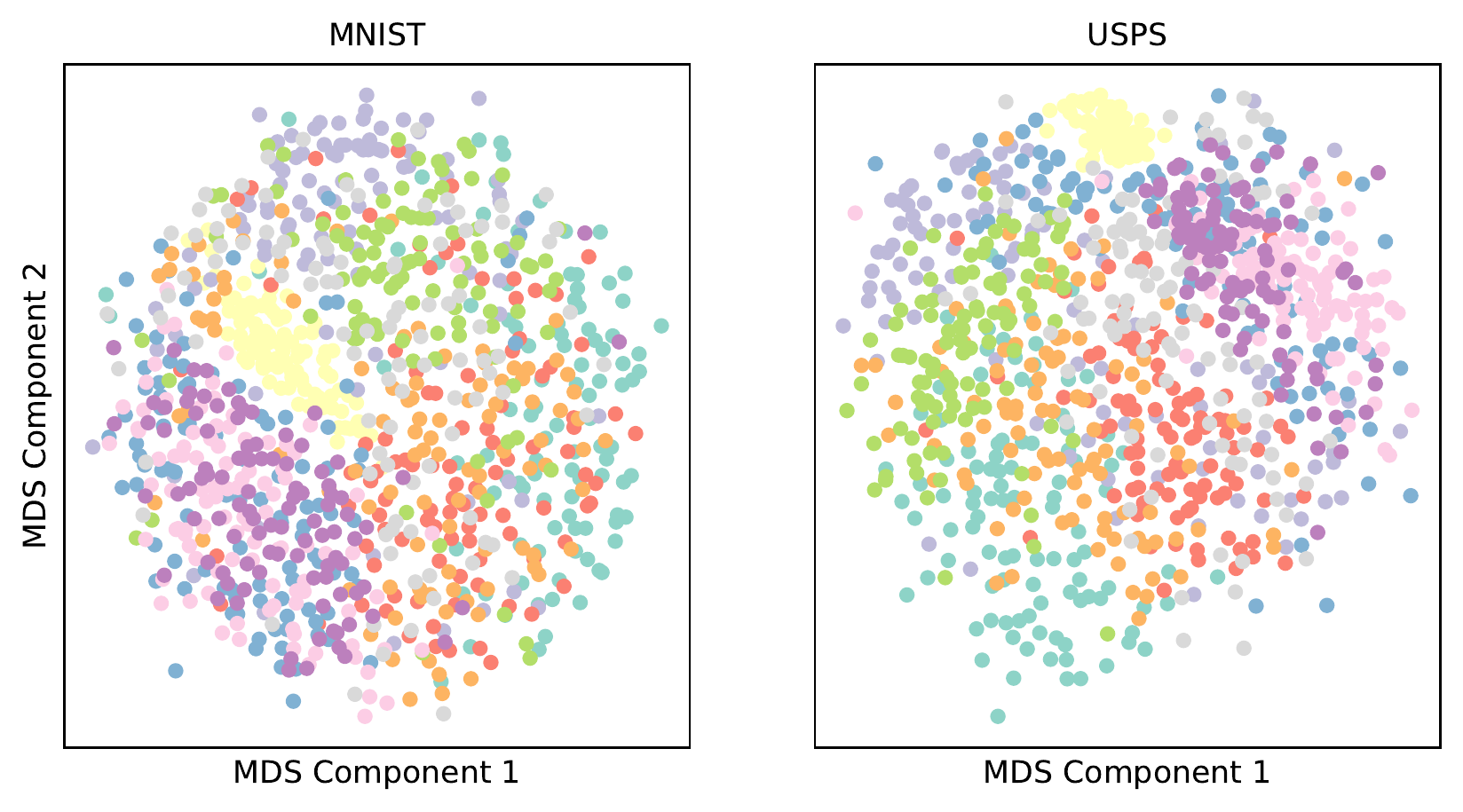} \quad \vrule\quad
    \includegraphics[width=.45\textwidth]{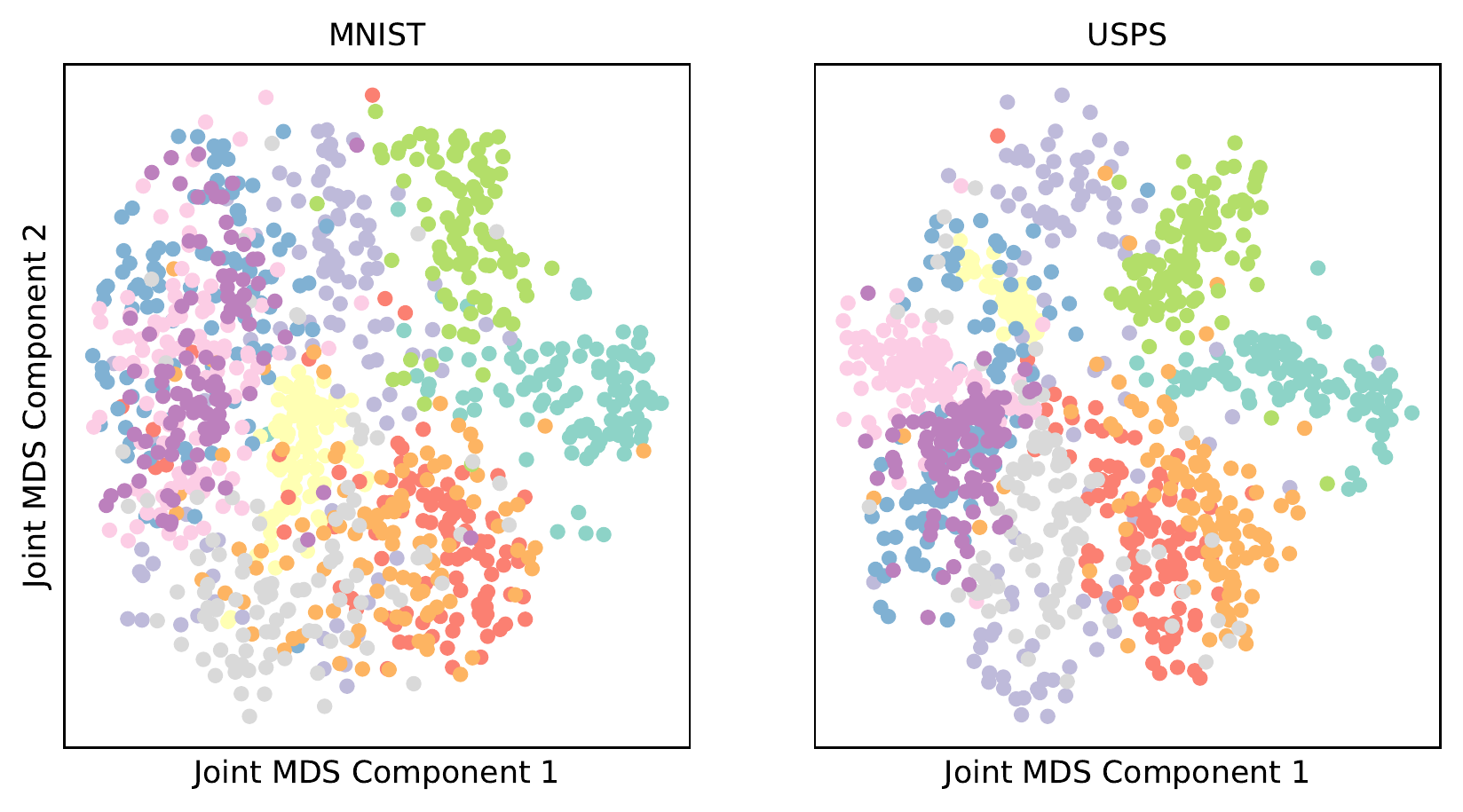} \\
    \caption{Comparison of MDS and Joint MDS. Each row presents a dataset and from top to bottom the datasets are respectively: bifurcation, Swiss roll, circular frustum, scGEM, SNAREseq and MNIST-USPS. Left column: results obtained by MDS applied individually to each domain with the Euclidean distances. Right column: results obtained by Joint MDS. Each color represents a class.}
    \label{app:fig:joint_mds}
\end{figure}

\subsubsection{Human body pose alignment}\label{app:sec:human_pose}
In addition to exploring two datasets, Joint MDS can also be used for comparing two sets of point clouds. 
In order to make our discussion visually more relatable, we apply Joint MDS to the task of aligning human body point clouds in two different poses. We consider here 3 pairs of human body poses, 2 pairs from the MPI FAUST dataset~\citep{Bogo:CVPR:2014} (each pose has 3446 points) and 1 pair from~\cite{solomon2016entropic} (has 1024 and 1502 points respectively) to visualize how Joint MDS works in different cases. For the sake of simplicity, we only used Euclidean distances as the pairwise dissimilarities. We fixed $\lambda$ to 0.1 and entropic regularization $\varepsilon$ to 1.0 with annealing. We respectively compute embeddings in 3D and 2D space, and visualize the correspondences found by Joint MDS. The results are shown in Figure~\ref{app:fig:pose_alignment} where each column represents an example. Different colors indicate a different position $i$ in the source set and the alpha value of the corresponding color in the target set is given by the correspondence vector $\P_i$ returned by Joint MDS. Joint MDS is able to find correspondences between two poses, even if the target set of point clouds only shares semantic structure as shown in the right column.

\begin{figure}[ht]
    \centering
    \includegraphics[width=.32\textwidth]{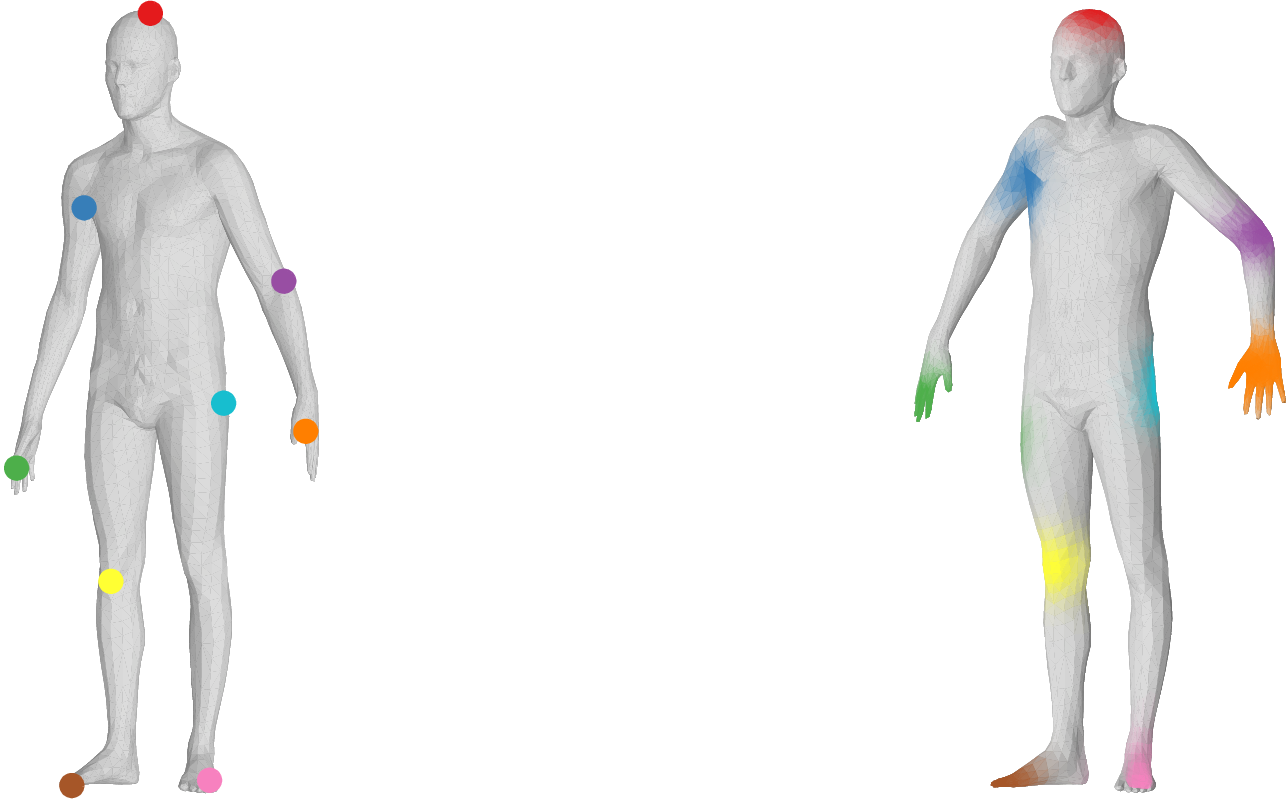} \vrule
    \includegraphics[width=.32\textwidth]{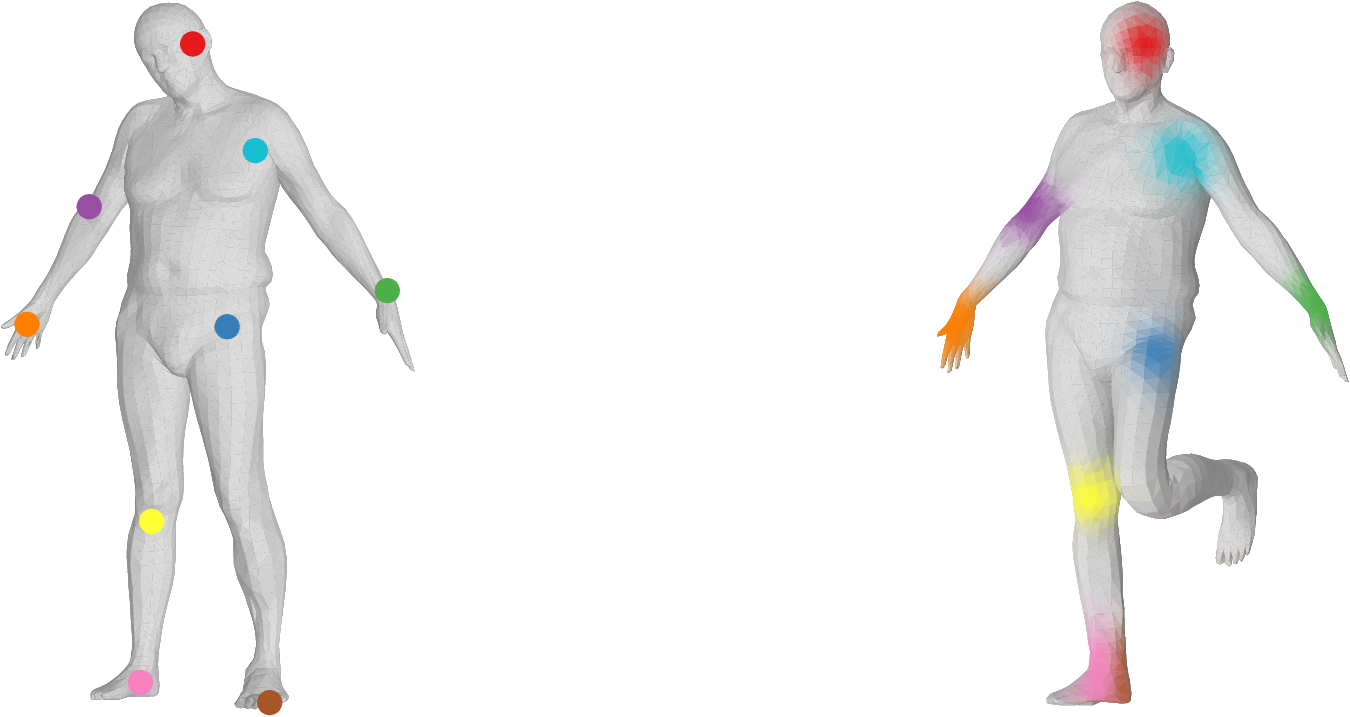} \vrule
    \includegraphics[width=.32\textwidth]{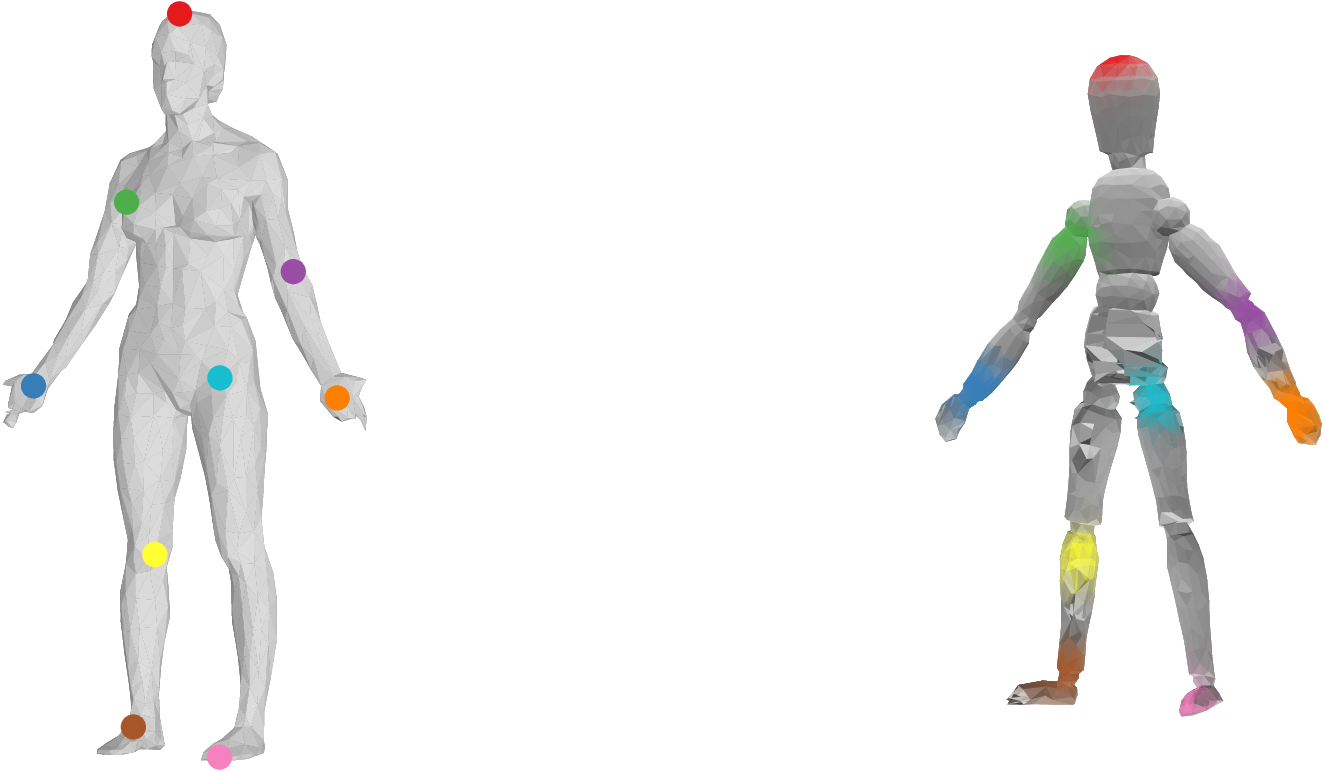}
    \includegraphics[width=.32\textwidth]{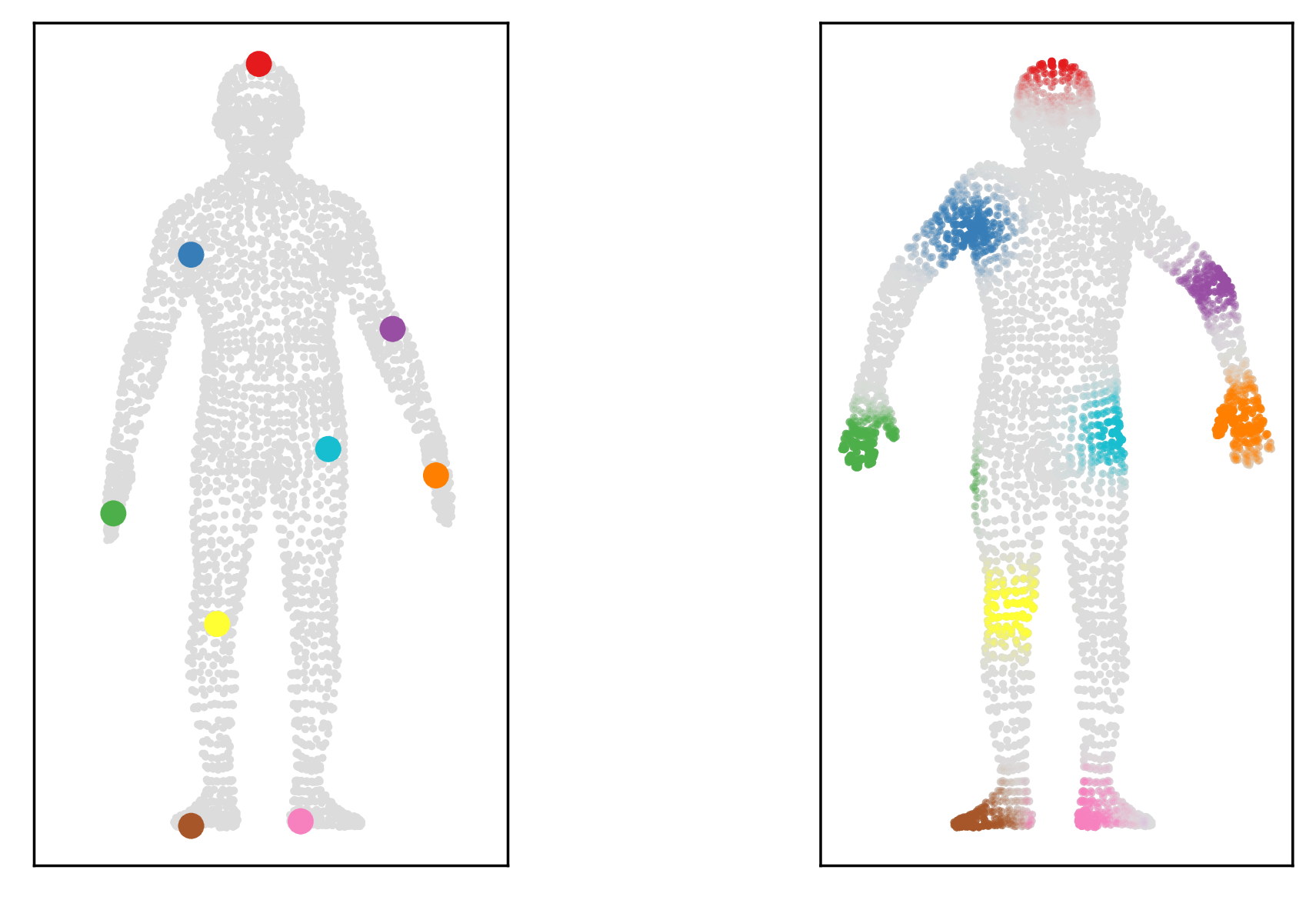} \vrule
    \includegraphics[width=.32\textwidth]{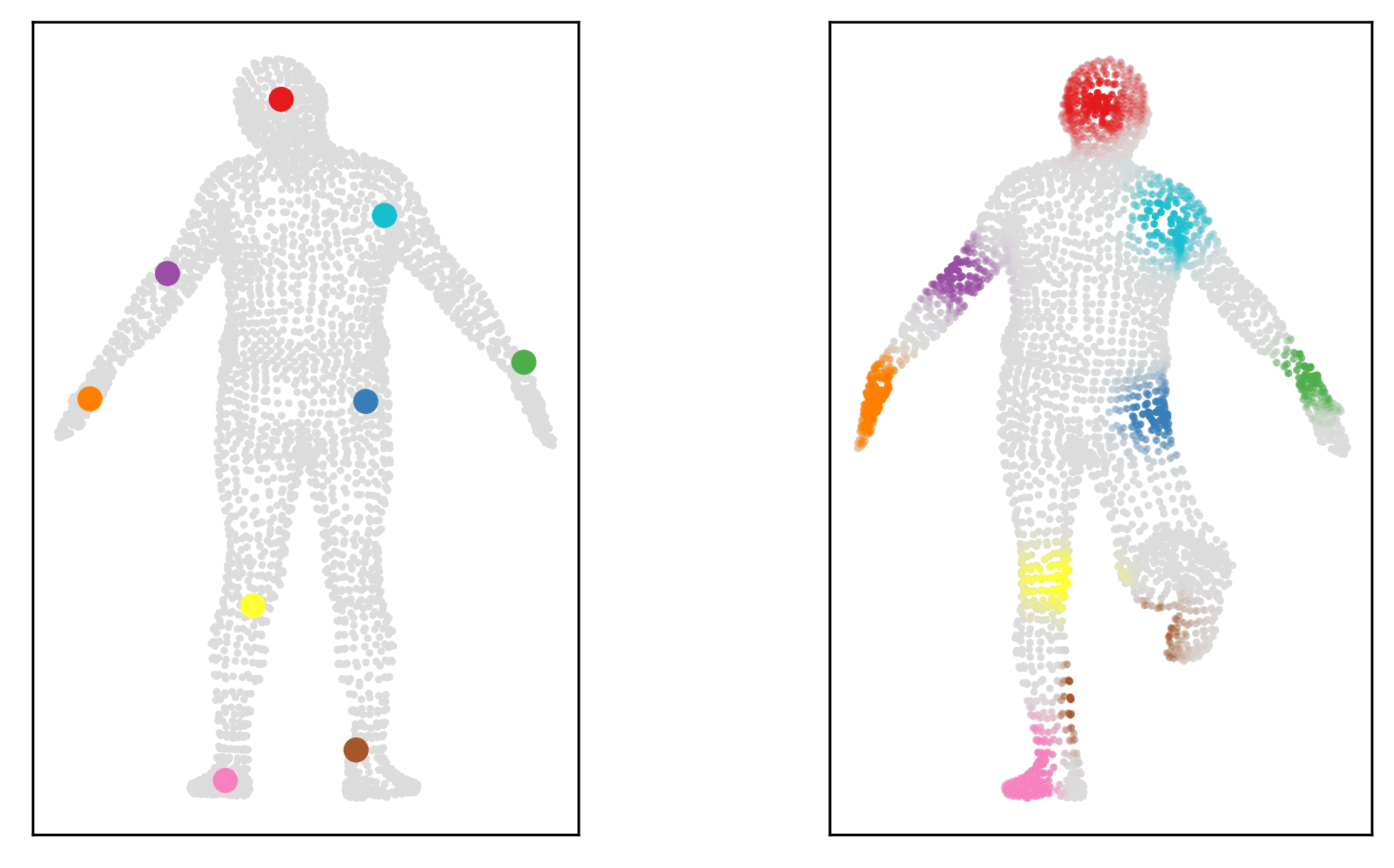} \vrule
    \includegraphics[width=.315\textwidth]{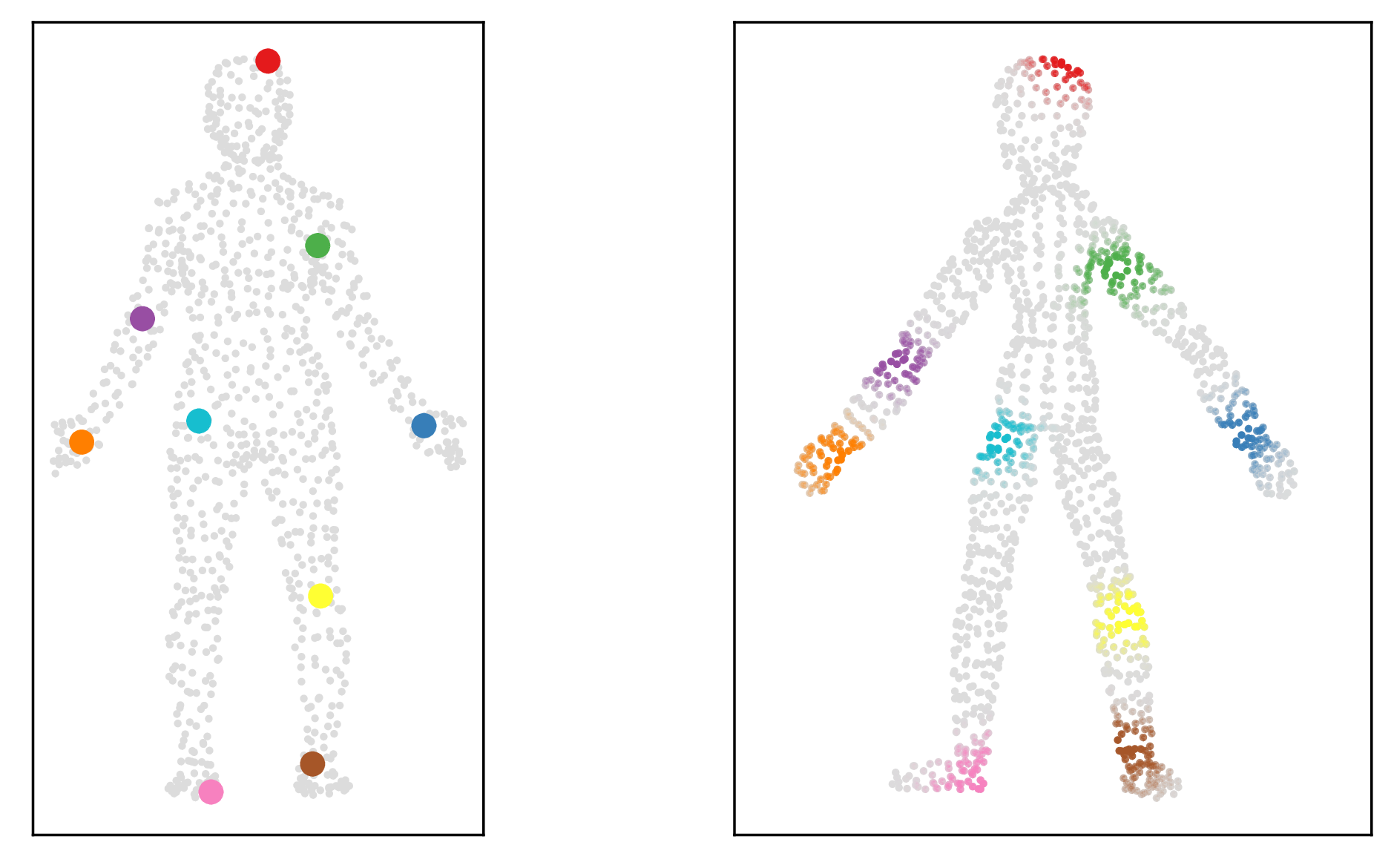}
    \caption{Joint MDS can find correspondences between a source set of point clouds and a target set of point clouds with a similar structure or with a shared semantic structure (right column), as well as a common embedding space. Note that here we plot the mesh surfaces in the 3D embeddings for visual purpose and we did not use any mesh information for correspondence finding but only point clouds.}
    \label{app:fig:pose_alignment}
\end{figure}

\subsection{Unsupervised heterogeneous domain adaptation}
We use the same datasets as in Section~\ref{app:sec:joint_visualization} as all the datasets also provide class labels. We use the same hyperparameters as in Section~\ref{app:sec:joint_visualization}, including the number of NNs for computing the geodesic distance matrices, the matching penalization parameter and the entropic regularization parameter. To handle the classification problem, we first use our Joint MDS to obtain two embeddings for both domains. Then, we train a simple $k$-NN classifier (with $k$ fixed to 5) on the embeddings of domain 1 and evaluate the classifier on domain 2 without any further adaptation. For the more complex dataset MNIST-USPS, we use a linear SVM classifier with the regularization parameter $C=1$, which we found substantially outperforming the $k$-NN classifier. In addition, we also observe that the transfer accuracy generally increases with the number of components $d$, as shown on MNIST-USPS as an example in Figure~\ref{app:fig:mnist_usps_component}.

\begin{figure}
    \centering
    \includegraphics[width=.5\textwidth]{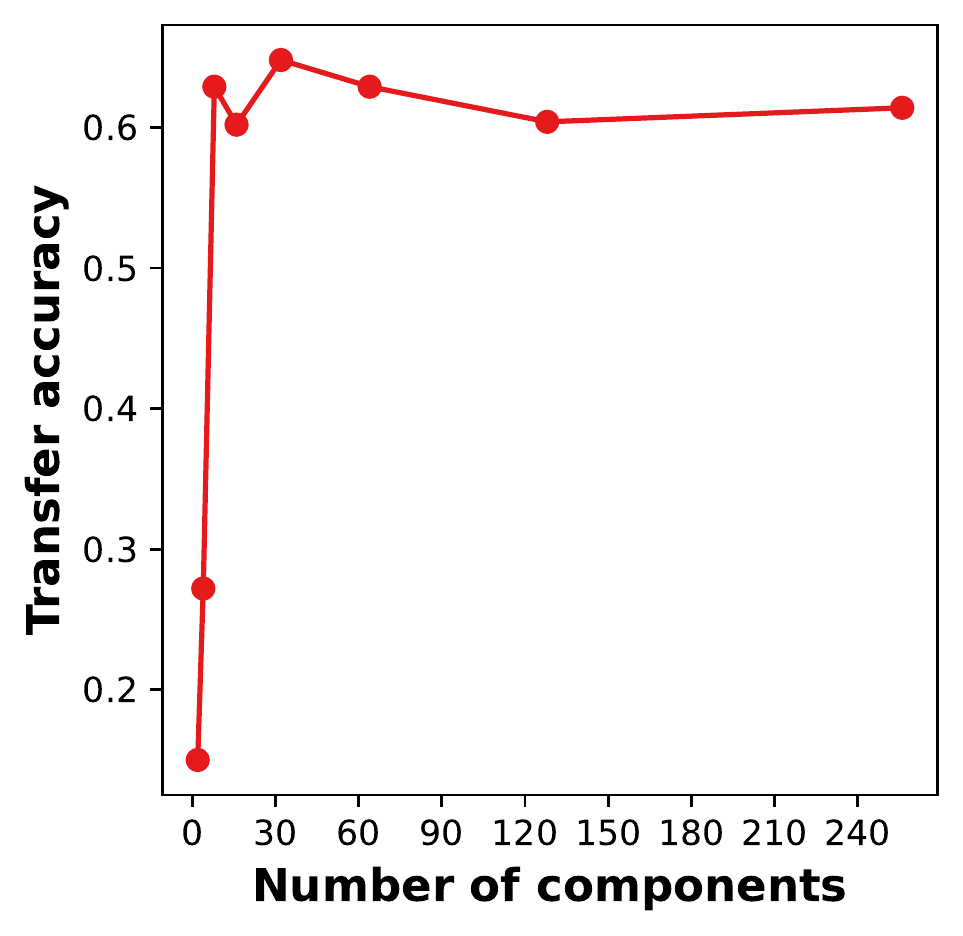}
    \caption{Transfer accuracy vs number of components for Joint MDS on the MNIST-USPS dataset.}
    \label{app:fig:mnist_usps_component}
\end{figure}

\subsection{Graph matching}\label{app:sec:graph_matching}
Here, we provide the dataset and implementation details for the experiments with respect to the graph matching task.

\subsubsection{Datasets and baselines}
We use two graph matching datasets including a PPI network of yeast (with its noisy versions) and the MIMIC-III disease-procedure interaction dataset. The two datasets can be downloaded at \url{https://github.com/HongtengXu/s-gwl} and \url{https://github.com/HongtengXu/gwl} respectively. 

For each baseline method to compare with, we list its source and language below:
\begin{itemize}
  \item HubAlign~(C)~\citep{hashemifar2014hubalign}: \url{https://home.ttic.edu/~hashemifar/}
  \item MAGNA++~(C)~\citep{vijayan2015magna++}: \url{https://www3.nd.edu/~cone/MAGNA++/}
  \item GWL~(Python)~\citep{xu2019gromov}: \url{https://github.com/HongtengXu/s-gwl}
\end{itemize}

All baselines are implemented under the recommended settings. 

\subsubsection{Evaluation metrics}
As described in section~\ref{section:Graph matching}, the PPI network matching task is a one-to-one node matching task, where we can directly compute the node correctness~(NC) based on the true matching matrix and predicted matching matrix. However, the number of diseases~(56) and the number of procedures~(25) do not match in the disease-procedure graph matching task. In contrast to the original implementation in \citep{xu2019gromov} by Xu et al., which evaluates the learnt matching matrix by comparing the recommended procedures with the true procedures for each individual admission from the test set, we propose to build the disease and procedure graphs from these individuals and evaluate the recommended procedures for each disease directly. We compare the top 3 and top 5 recommended procedures with the corresponding true procedures, which has the most interactions with the target disease in admissions from the test set, then we take the average accuracy for all disease nodes as the evaluation metric.

\subsubsection{Implementation details}
We first normalize the adjacency matrices for each graph via $\D^{-1/2}\A\D^{-1/2}$ where $\A$ denotes the adjacency matrix and $\D$ denotes the diagonal matrix of degrees. Then, we compute the shortest-path distances as the dissimilarities as described in Section~\ref{sec:implementation}. Similar to stress majorization-based graph drawing algorithms~\citep{gansner2004graph, kamada1989algorithm}, we apply different weights $\W\in\R^{n\times n}$ and  $\W'\in\R^{n'\times n'}$ on the two distance deviation terms in the weighted stress minimization problem~\eqref{eq:joint_mds} with $w_{ij}= 1 / d^k_{ij}$ and $d_{ij}$ is the shortest-path distance between node $i$ and node $j$ (similarity, for $w'_{ij}= 1 / d'^k_{ij}$). Empirically, we find $k=4$ leads to best performances.

\subsubsection{Rationality analysis}

To further verify the rationality of the learnt matching matrix of the proposed method, we visualize the $k$-NN graph of disease nodes and procedure nodes in the common low-dimensional space in Figure~\ref{app:fig:MIMIC-KNN} and we can further interpret the learned embeddings from a clinical point of view. For example, in the selected region inside the red dash line box, we can see a cluster of cardiac-related diseases like congestive heart failure~(d4280), pleural effusion~(d5119), Percutaneous transluminal coronary angioplasty~(dV4582), and also cardiac-related procedures like extracorporeal circulation auxiliary to open heart surgery~(p3961), combined right and left heart cardiac catheterization~(p3723) and single internal mammary-coronary artery bypass~(p3615). For reference, the complete ICD codes for the diseases and procedures can be found in the supplementary material of \citep{xu2019gromov}. This finding indicates that our method can learn clinically meaningful joint embeddings from the disease and procedure graphs.

\begin{figure}
    \centering
    \includegraphics[width=1\textwidth]{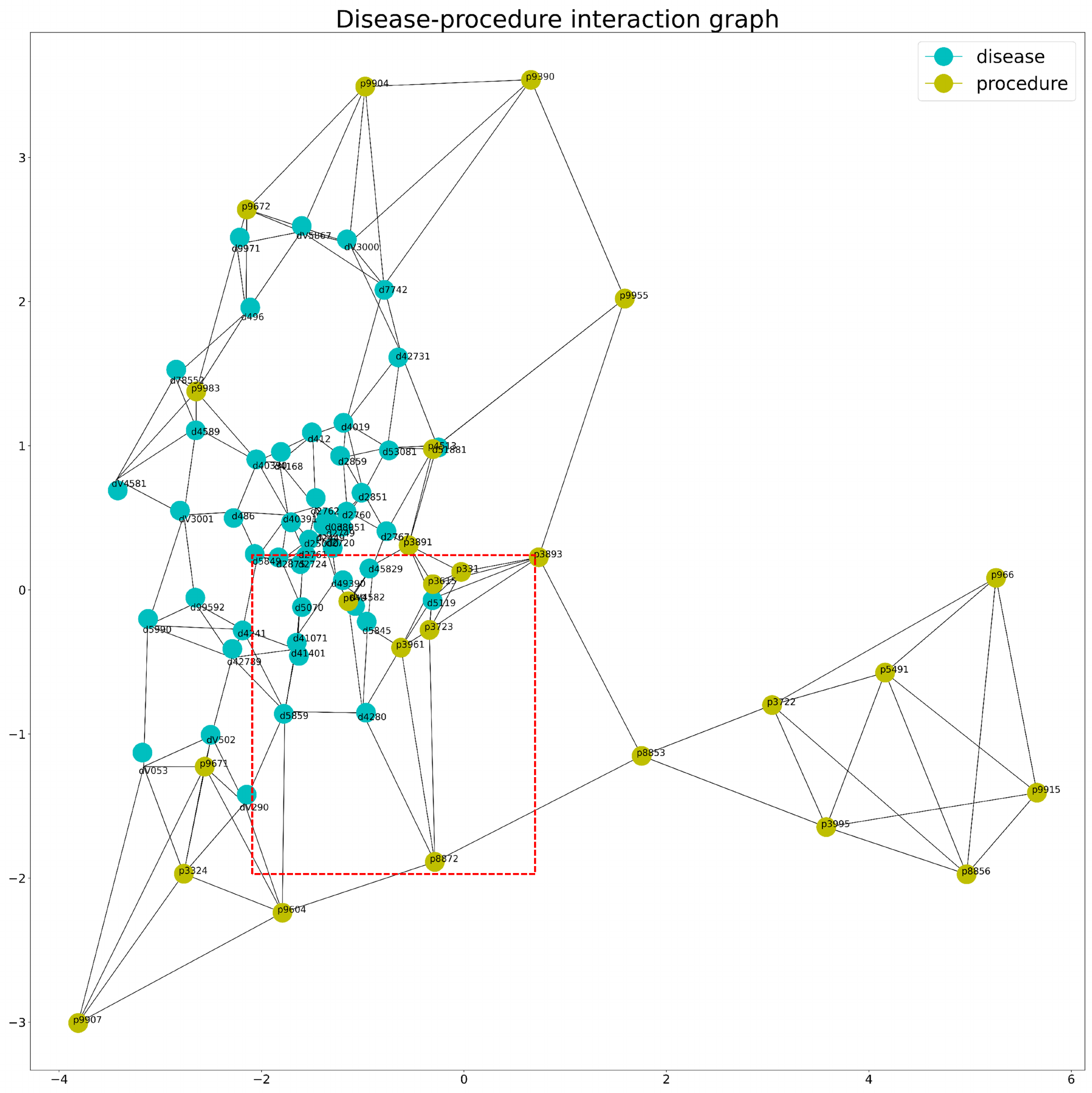}
    \caption{Visualization of the $k$-NN graph of diseases and procedures in the common 2D space with $k$ = 4.}
    \label{app:fig:MIMIC-KNN}
\end{figure}

\subsection{Protein structure alignment}\label{app:sec:protrein_alignment}
Protein structure alignment takes two protein 3D structure models as input, and outputs two new protein structure models that preserve the pairwise distances while being aligned.

\paragraph{Datasets and experimental setup.}
In contrast to the previous work~\citep{wang2009manifold,cui2014generalized} where only one protein was studied, we use a much larger and more realistic dataset, and provide a quantitative way to evaluate the unsupervised manifold alignment methods for this task. Specifically, we consider here the first domain of all the proteins from CASP14~\citep{pereira2021high} from T1024 to T1113 and use the protein models predicted by the two top-placed methods in the leaderboard, namely AlphaFold2 and Baker. This results in a dataset of 58 pairs of protein models, with an average number of residues equal to 198. The dataset can be downloaded from \url{https://predictioncenter.org/casp14/results.cgi}.

We compare Joint MDS to the state-of-the-art unsupervised manifold alignment method GUMA~\citep{cui2014generalized}. We use Joint MDS directly on the pairwise Euclidean distances between the residue 3D coordinates. The parameters for both our method and GUMA are fixed throughout the datasets, including $\lambda$ fixed to 0.1, entropic regularization $\varepsilon$ fixed to 10. We use the Gromov-Wasserstein initialization as described in Section~\ref{sec:optimization}. As MDS could be affected by bad initialization, we run Joint MDS 3 times on each protein pair with different random seeds and take the embeddings with the smallest cost for evaluation. To measure the robustness of our method, we repeat this process 3 times again, and take the average and standard deviation of the RMSD-D across all repetitions. The small standard deviation in Figure~\ref{fig:protein_structure_alignment} demonstrate the robustness of our method.

\paragraph{Additional alignment results.}
We showcase in Figure~\ref{app:fig:protein_structure_alignment} five examples of alignment (TS1024, TS1039, TS1050, TS1058 and TS1099) obtained by GUMA, Joint MDS, GUMA-2D and Joint MDS-2D. While Joint MDS outperforms GUMA in almost all datasets, we also find a few protein samples where both methods fail.

\begin{figure}
    \centering
    \includegraphics[width=\textwidth]{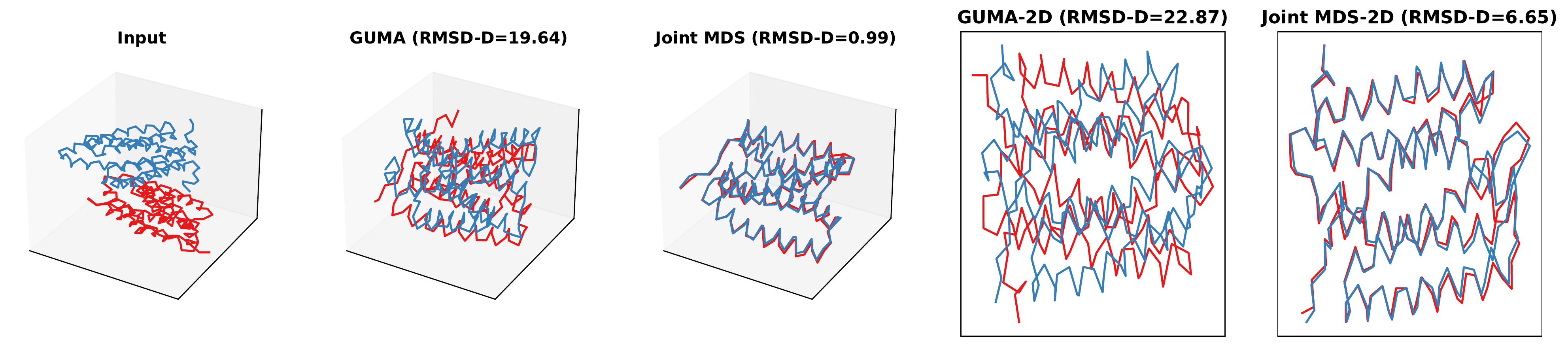}
    \includegraphics[width=\textwidth]{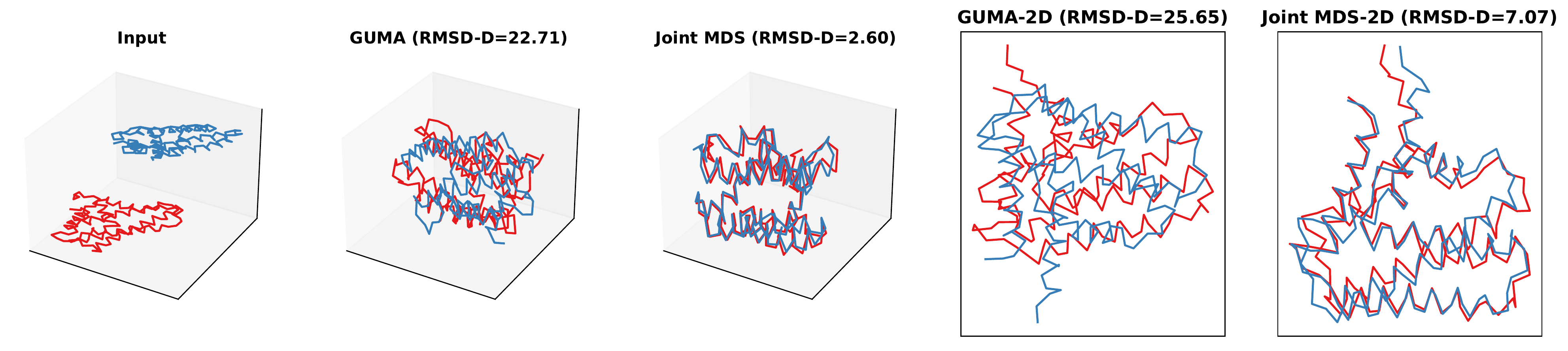}
    \includegraphics[width=\textwidth]{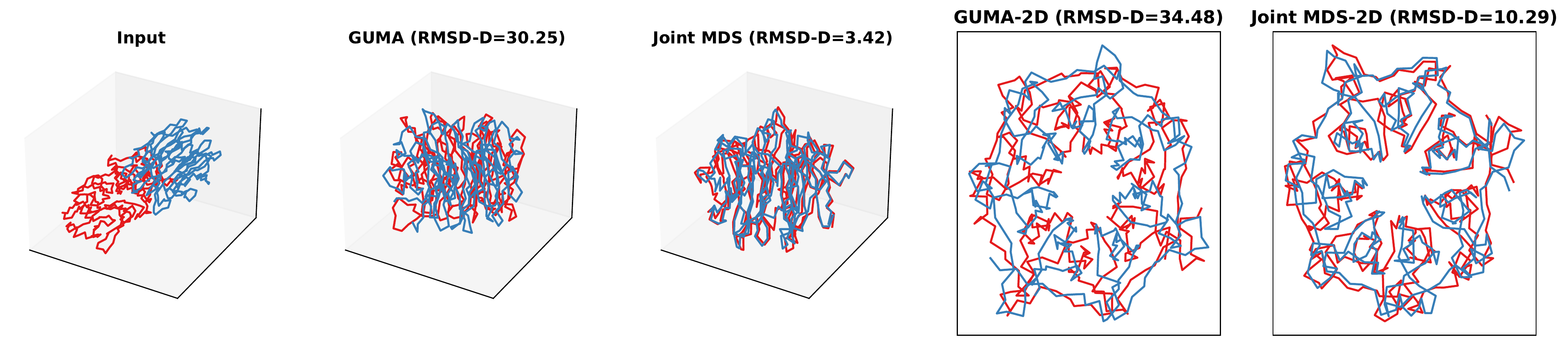}
     \includegraphics[width=\textwidth]{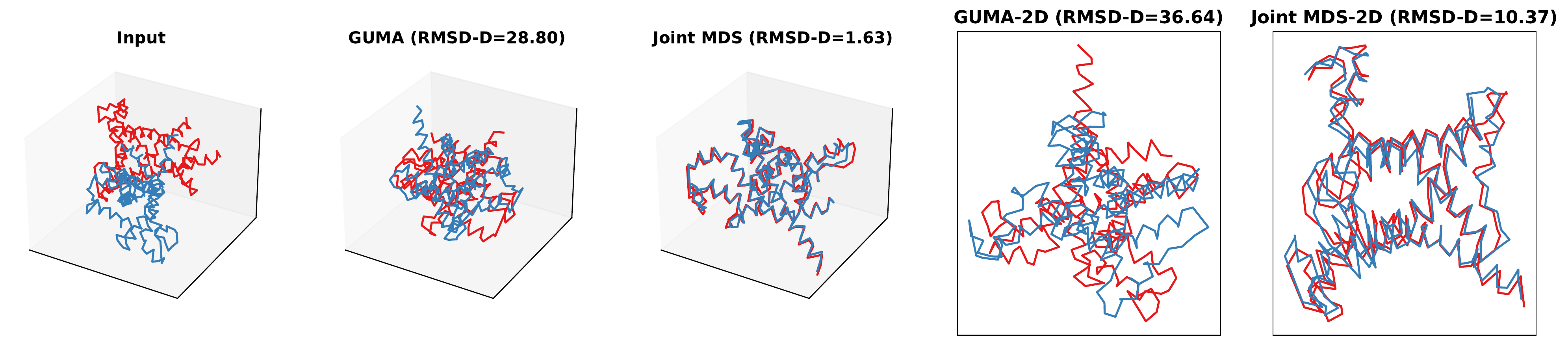}
      \includegraphics[width=\textwidth]{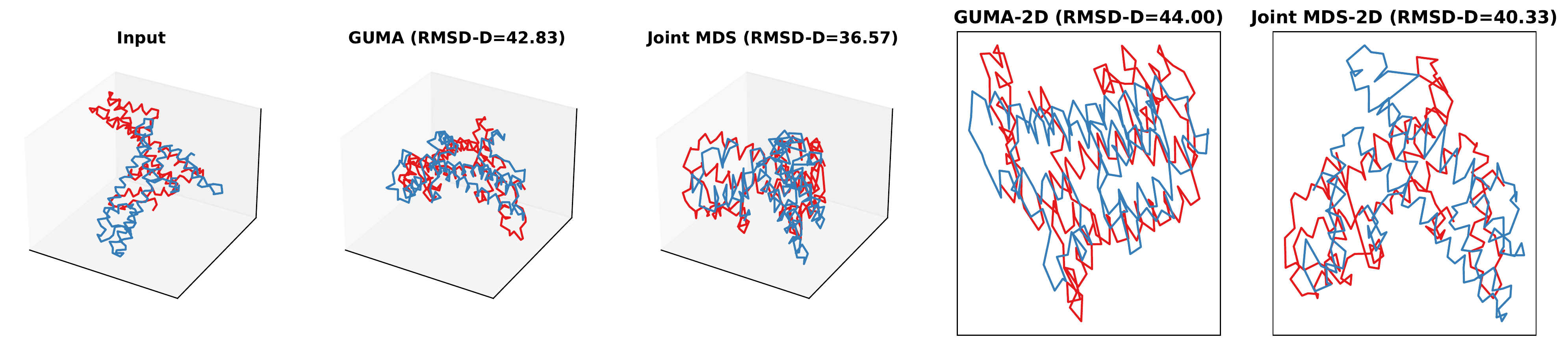}
    \caption{Comparison of GUMA and Joint MDS for protein structure alignment. The protein models for top to bottom are respectively TS1024, TS1039, TS1050, TS1058 and TS1099. Joint MDS generally outperforms GUMA except for the last case where both methods fail.}
    \label{app:fig:protein_structure_alignment}
\end{figure}

\subsection{Additional experiments}

\paragraph{Runtime comparison.}
We use four datasets to compare the runtime of Joint MDS with other baseline methods. The four datasets include: scGEM and SNAREseq from~\citep{demetci2022scot} for unsupervised heterogeneous domain adaptation, MIMIC~\citep{xu2019gromov} and PPI network~(5\% noise)~\citep{xu2019scalable} for graph matching. The baseline methods we compared against are SCOT~\citep{demetci2022scot}, UnionCom~\citep{cao2020unsupervised}, EGW~\citep{yan2018semi}~(for unsupervised heterogeneous domain adaptation) and GWL~\citep{xu2019gromov}~(for graph matching). For all baseline methods, we used the original implementations for each task. For our method, we used the same parameters which achieved the reported performance. We ran all the methods until convergence and repeated this three times to compare the average consumed time. Figure~\ref{app:fig:runtime} suggest that the runtime of Joint MDS is at the same order of magnitude compared to the best of other baseline methods while achieving the best performance in most of the tasks.

\begin{figure}
    \centering
    \includegraphics[width=\textwidth]{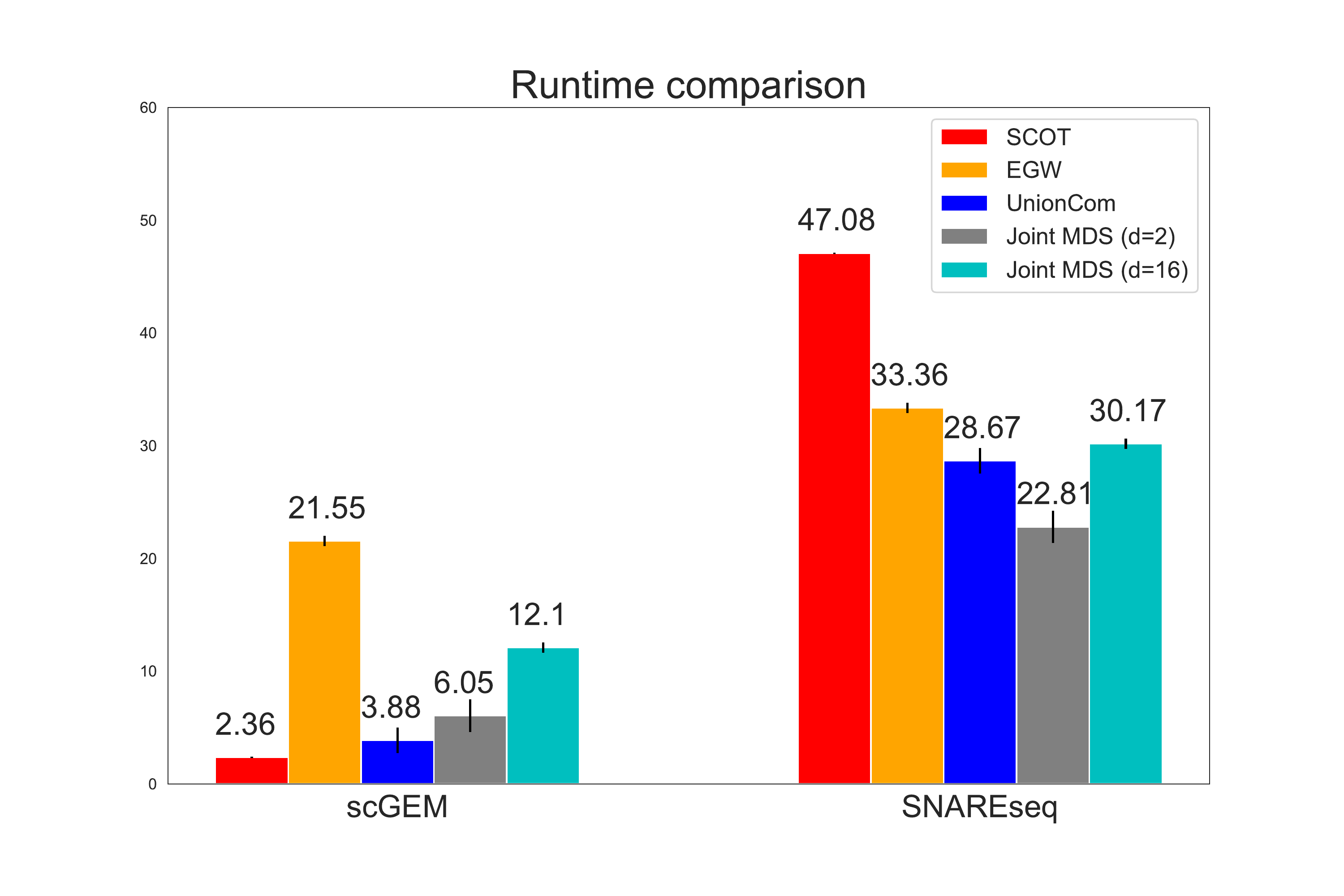}
    \includegraphics[width=\textwidth]{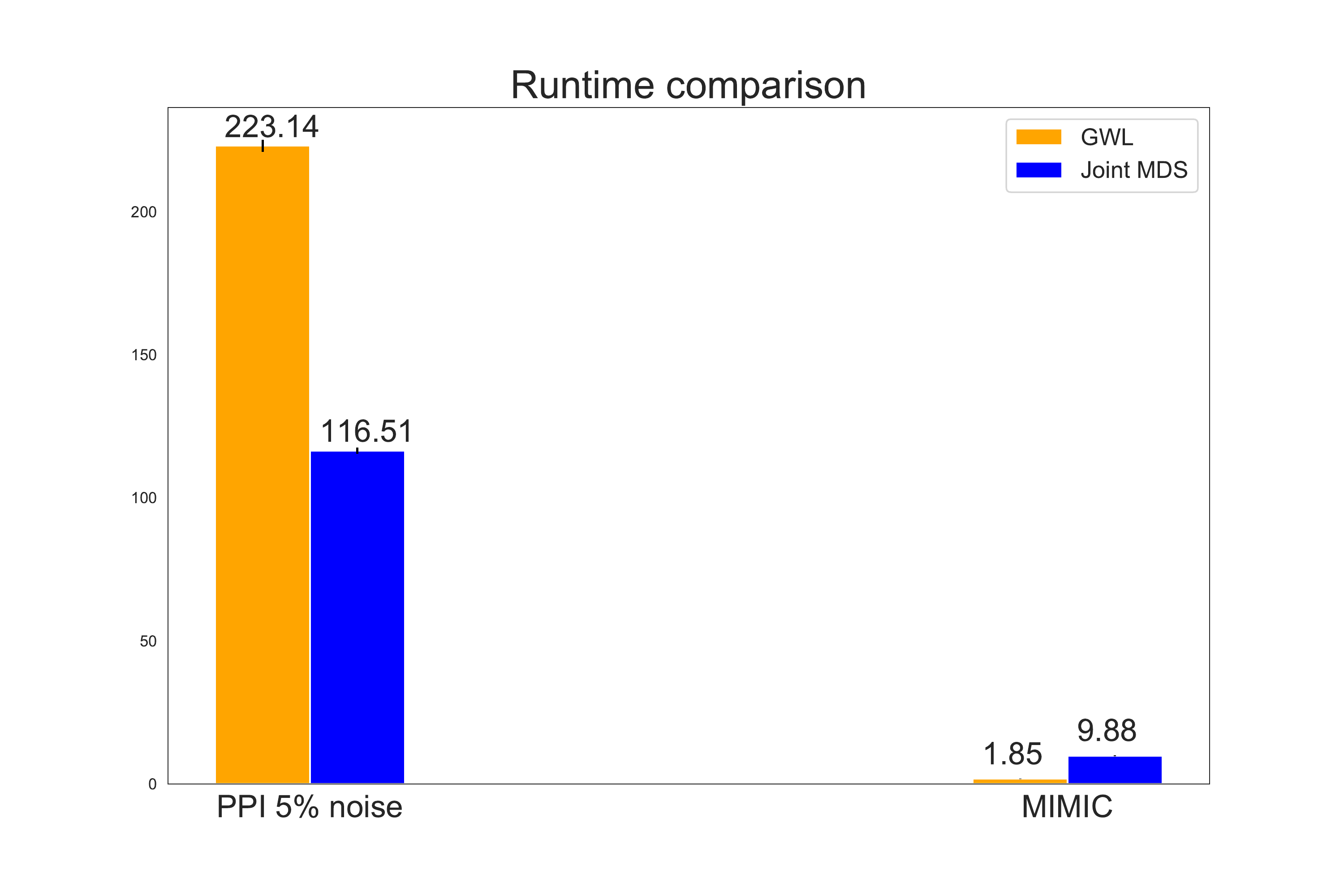}
    \caption{Runtime comparison against baseline methods for heterogeneous domain adaptation~(top) and graph matching~(bottom).}
    \label{app:fig:runtime}
\end{figure}

\paragraph{Convergence curves.}
Here, we provide the convergence curves (objective value versus number of iterations) of our method on the above four tasks. The curves in figure~\ref{app:fig:convergence} show that the proposed optimization algorithm reached final convergence for all the tasks. \revision{Empirically, we also validated the convergence of the method by visualizing the node correctness of each iteration in the PPI network matching task for the three noise levels respectively, in Figure~\ref{app:fig:convergence PPI}. Even though the coupling matrix $\P$ is intialized with GW which already offers a not bad quality, its quality further improves with the number of iterations of Joint MDS, thanks to regularization through explicit learning of embeddings.}

\begin{figure}
    \centering
    \includegraphics[width=0.45\textwidth]{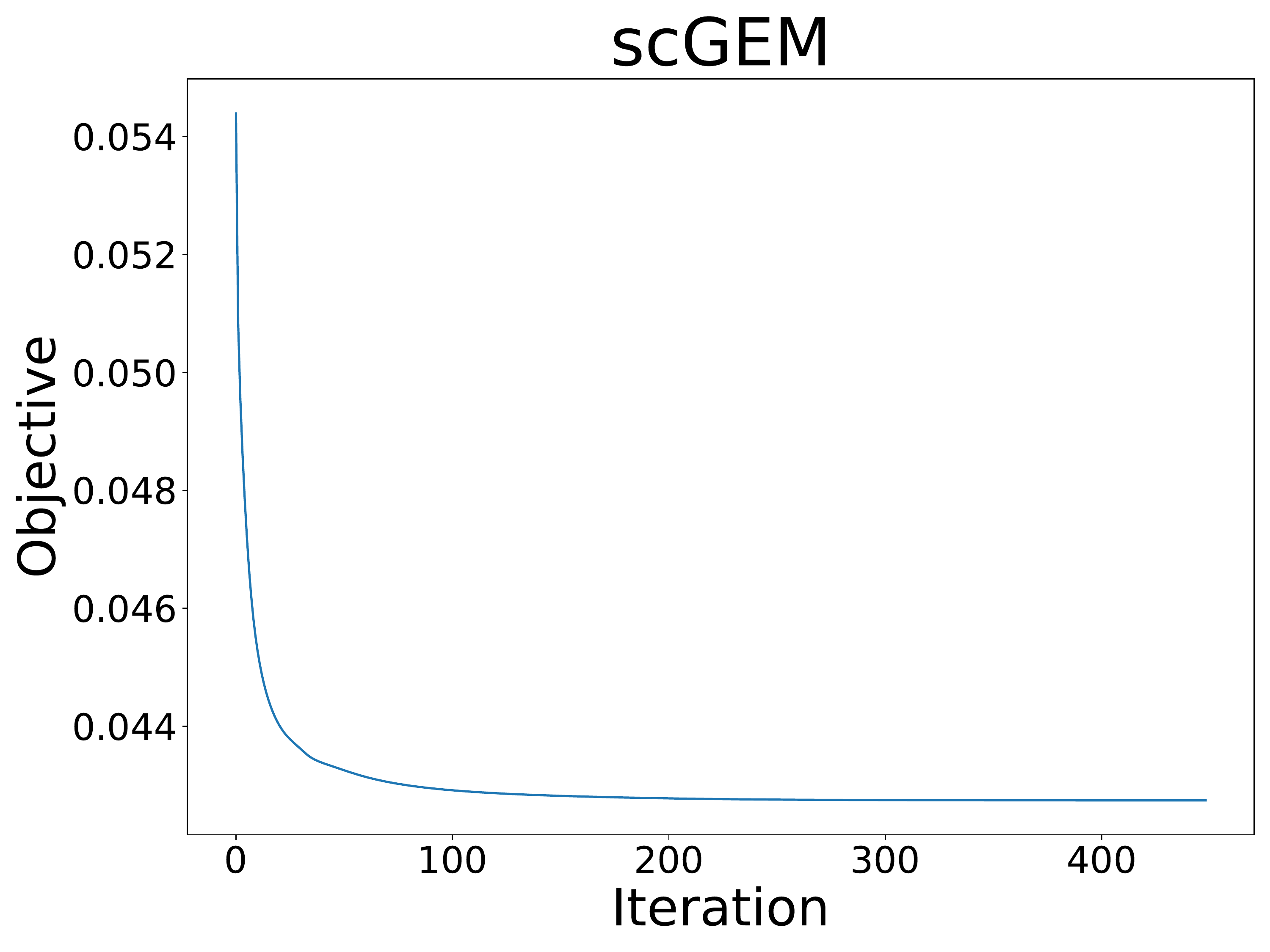}
    \includegraphics[width=0.45\textwidth]{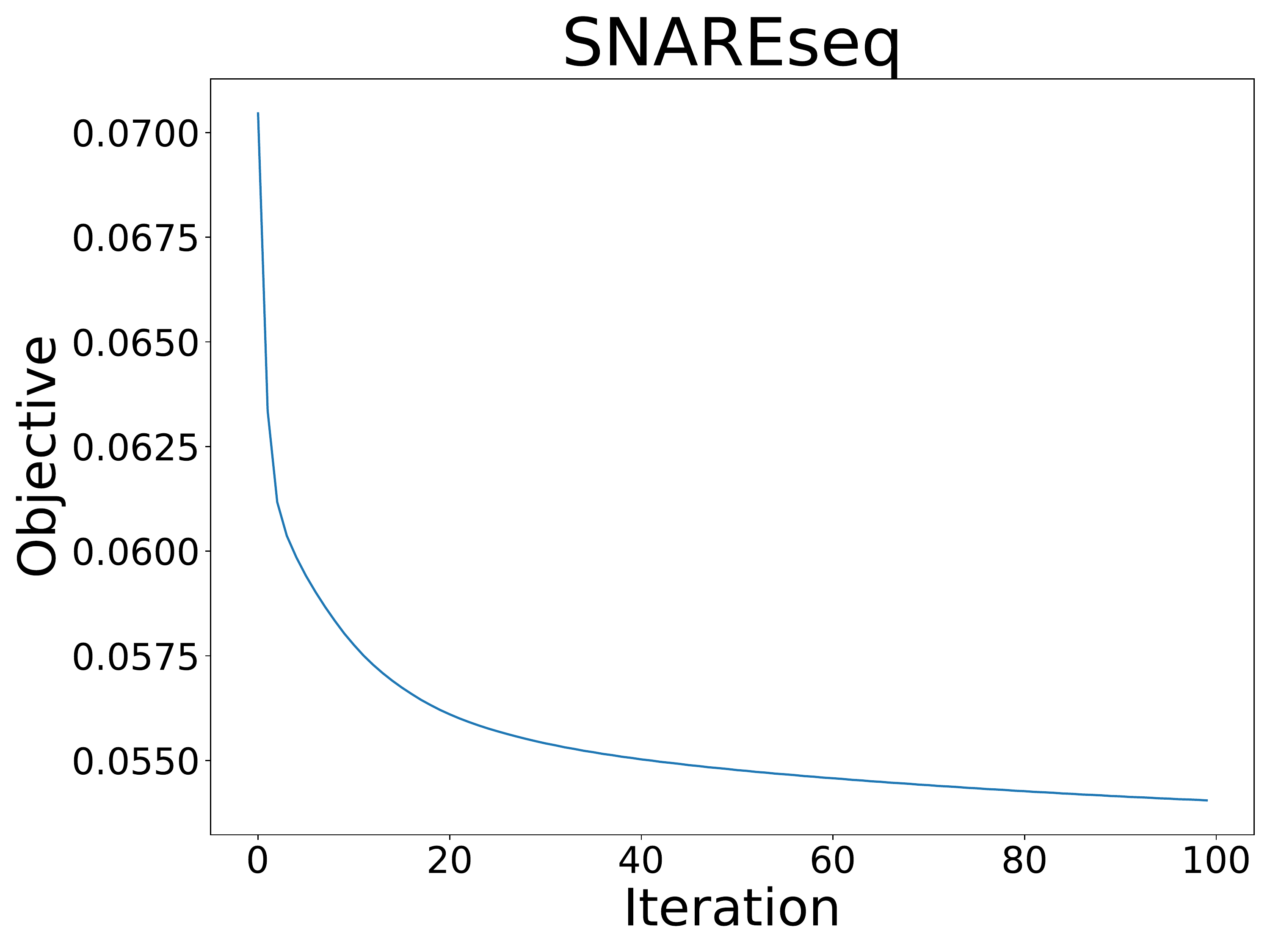}
    \includegraphics[width=0.45\textwidth]{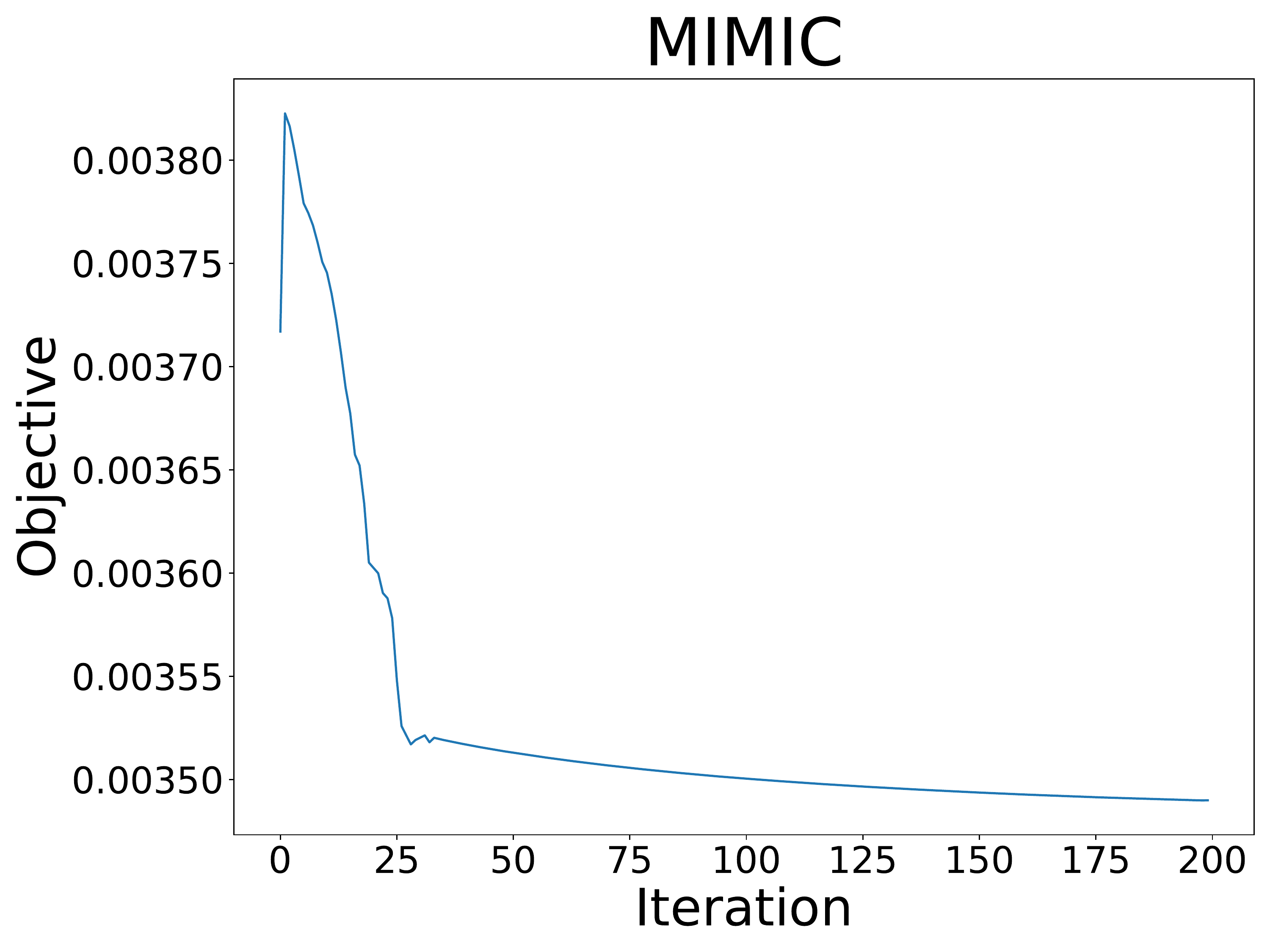}
    \includegraphics[width=0.45\textwidth]{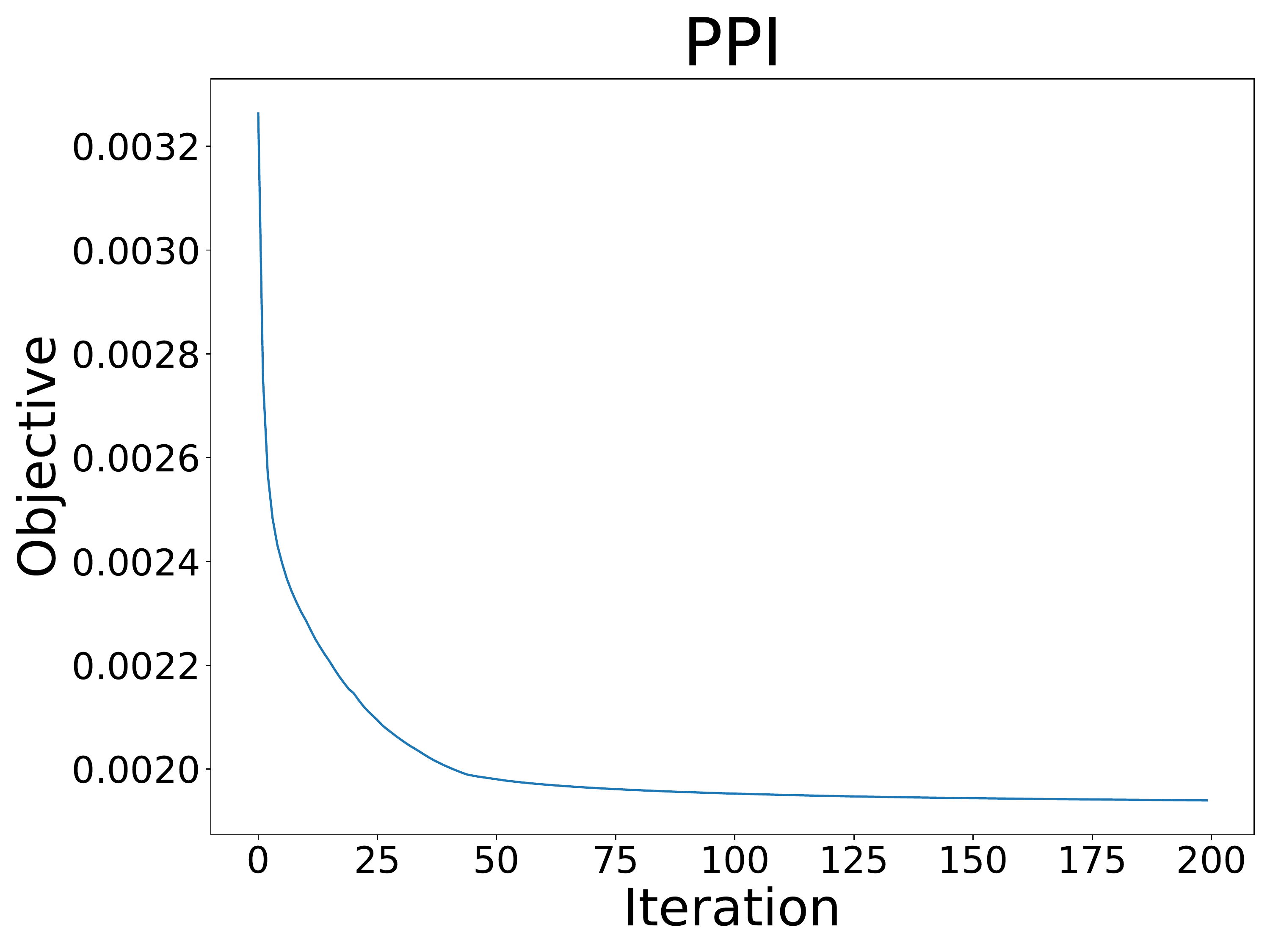}
    \caption{Convergence curves for scGEM~(top left), SNAREseq~(top right), MIMIC~(bottom left) and PPI~(bottom right).}
    \label{app:fig:convergence}
\end{figure}

\begin{figure}
    \centering
    \includegraphics[width=0.45\textwidth]{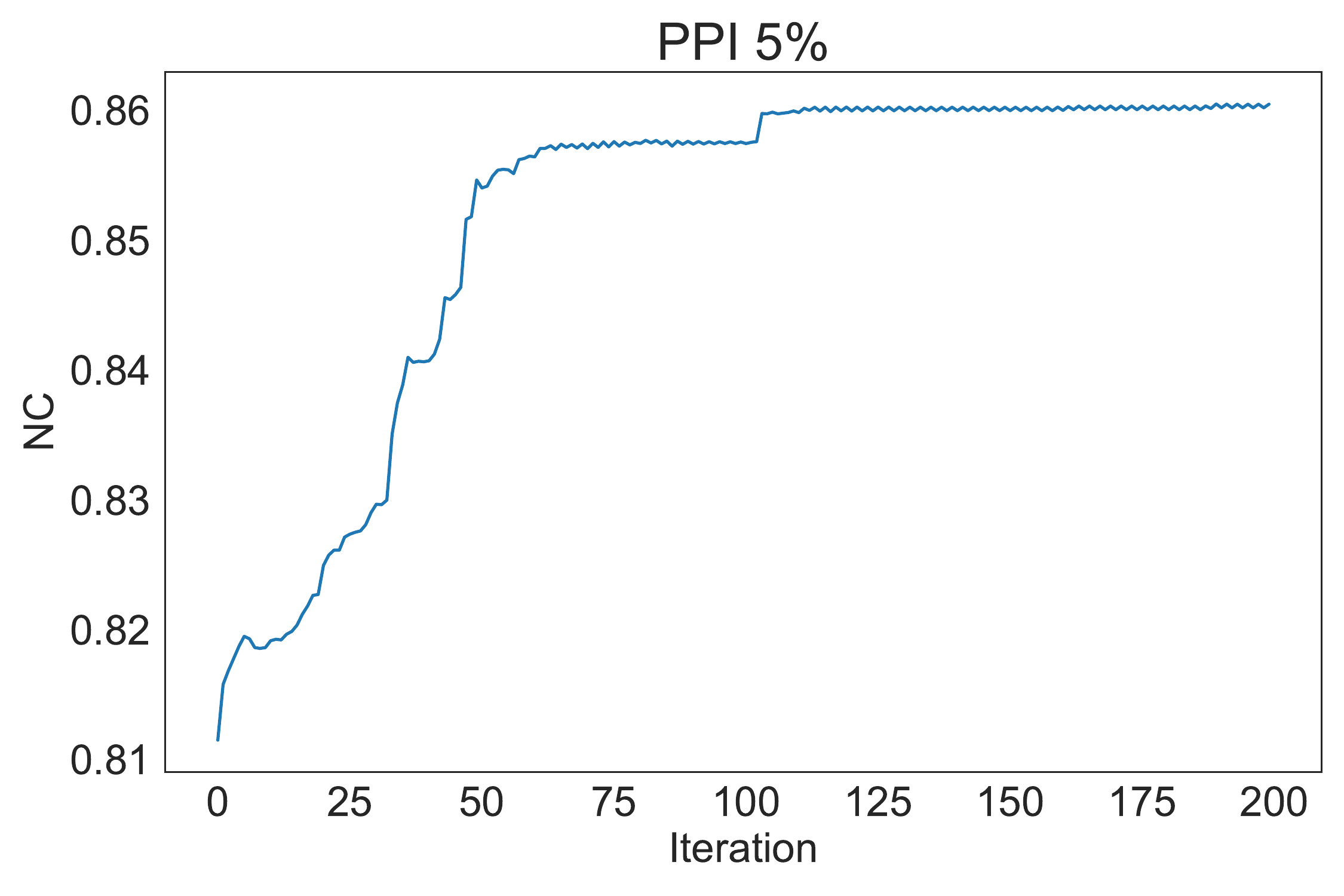}
    \includegraphics[width=0.45\textwidth]{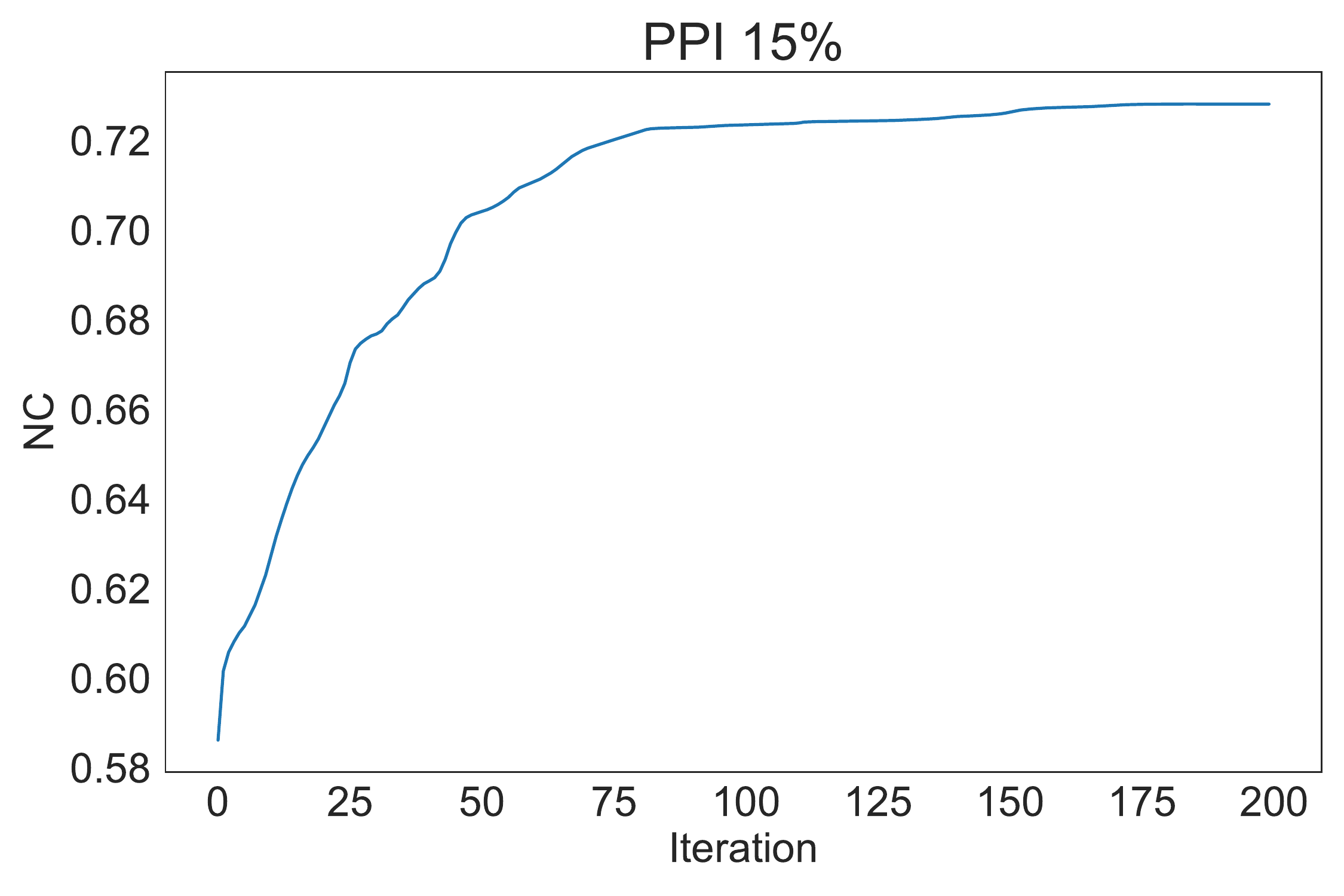}
    \includegraphics[width=0.45\textwidth]{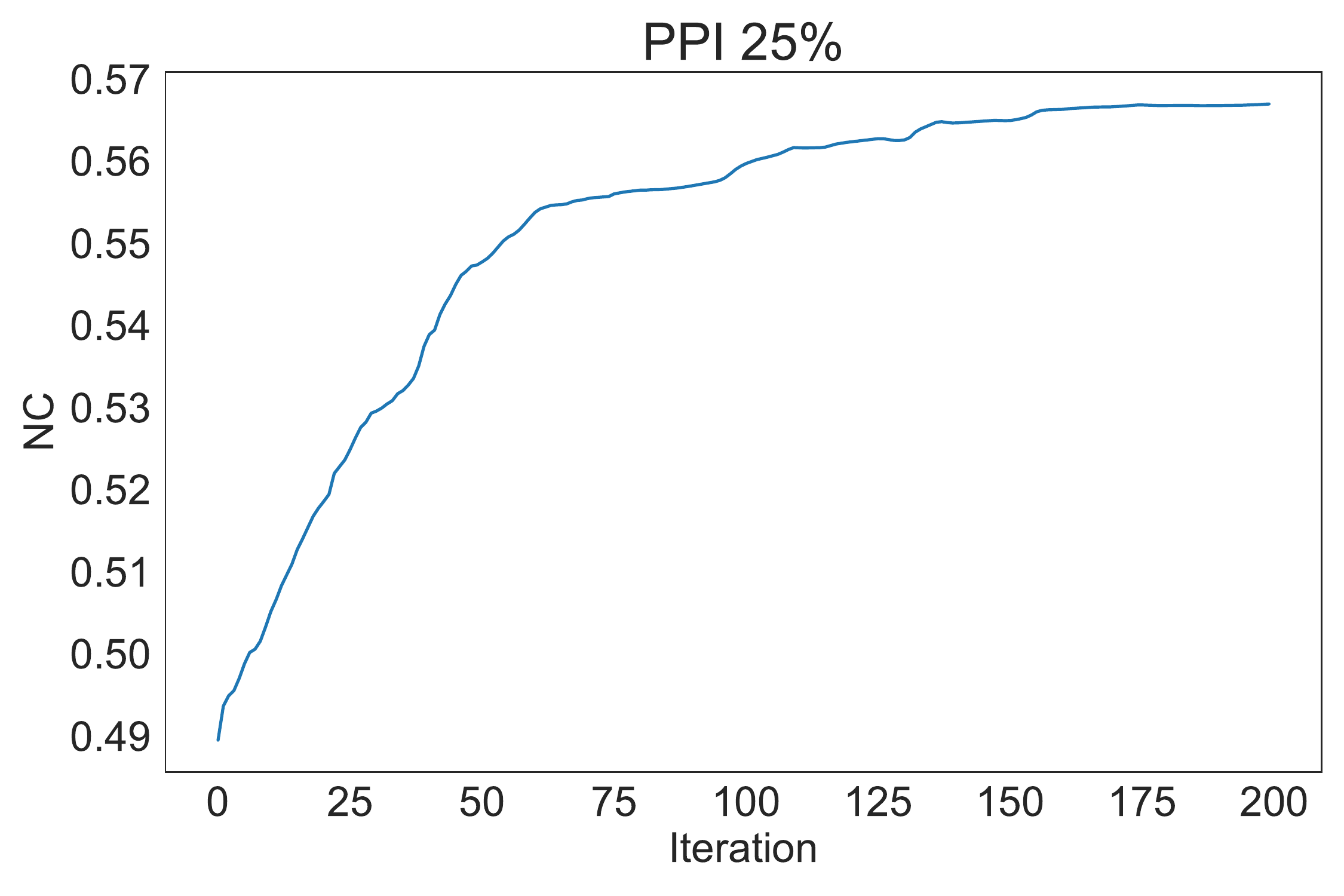}
    \caption{\revision{Node correctness with respect to number of iterations for the PPI matching task with noise level 5\%~(top left), 15\%~(top left) and 25\%~(bottom).}}
    \label{app:fig:convergence PPI}
\end{figure}

\paragraph{Removing the orthogonal Procrustes matrix.}
We further evaluate the necessity of adding the orthogonal Procrustes matrix $\O$ in our final objective function, as showed in Eq~\ref{eq:joint_mds}, by an ablation study comparing the performance of removing $\O$ and the performance of the original objective. Similarly to the runtime analysis, the ablation study was conducted on two unsupervised heterogeneous domain adaptation tasks and two graph matching tasks. 
In Table~\ref{app:table:Ablation} we can see that the original Joint MDS has achieved consistent higher performance in various tasks compared its incomplete version with $\O$ removed, on some tasks like scGEM and MIMIC, the performance gap is very large, which demonstrates the importance of incorporating $\O$ in the objective function.

\begin{table}
    \centering
    \caption{Performance comparison on remove the orthogonal Procrustes matrix}
    \newcolumntype{?}{!{\vrule width 1pt}}
    \newcommand{\NA}{---}
    \newcolumntype{P}[1]{>{\centering\arraybackslash}p{#1}}
    \resizebox{.9\textwidth}{!}{
    \begin{tabular}{
     P{.15\textwidth}P{.1\textwidth}P{.1\textwidth}P{.1\textwidth}P{.15\textwidth}P{.15\textwidth}}
    \hline
    \textbf{Method} & scGEM & SNAREseq & PPI 5\% & MIMIC top 3 & MIMIC top 5\\ 
    \Xhline{2\arrayrulewidth}
    Removing $\O$  &61.8  &94.1  &85.25  &14.48  &35.48 \\
    Original  &72.9  &94.7  &86.44  &30.24  &46.28 \\
    \hline
    \end{tabular}
    }
    \label{app:table:Ablation}
\end{table}

\end{document}